%%%%%%%%%%%%%%%%%%%%%%%%%%%%%%%%%%%%%%%%%%%%%%%%%%%%%%%%%%%%%%%%%%%%%%%%%%
%%
%%	Author Submission Template for INFORMS Journal on Computing (IJOC)
%%	INFORMS, <informs@informs.org>
%%	Ver. 1.00, June 2024
%%
%%%%%%%%%%%%%%%%%%%%%%%%%%%%%%%%%%%%%%%%%%%%%%%%%%%%%%%%%%%%%%%%%%%%%%%%%%
%
% Use dblanonrev for Double Anonymous Review submission
% Use sglanonrev for Single Anonymous Review submission
% For example, submission to Operations Research, OPRE will have
% \documentclass[opre,dblanonrev]{informs4}
%
% \documentclass[ijoc,dblanonrev]{informs4}
\documentclass[ijoc,sglanonrev]{informs4_arXiv}
\usepackage{eqndefns-left} % For checking the display equation width and equation environment definitions %
\RequirePackage{tgtermes}
\RequirePackage{newtxtext}
\RequirePackage{newtxmath}
\RequirePackage{bm}
\RequirePackage{endnotes}

%\OneAndAHalfSpacedXI
\OneAndAHalfSpacedXII % Current default line spacing
%%\DoubleSpacedXI
%%\DoubleSpacedXII

% Optional LaTeX Packages
\usepackage{algorithm}
\usepackage{algpseudocode}
\usepackage{tikz}
% Private macros here (check that there is no clash with the style)

% Natbib setup for author-number style
\usepackage{natbib}
 \bibpunct[, ]{(}{)}{,}{a}{}{,}%
 %
 %
 %
 %
 %

%% Setup of the equation numbering system. Outcomment only one.
%% Preferred default is the first option.
\EquationsNumberedThrough    % Default: (1), (2), ...
% \EquationsNumberedBySection % (1.1), (1.2), ...

%% Setup of theorem styles. Outcomment only one.
%% Preferred default is the first option.
\TheoremsNumberedThrough     % Preferred (Theorem 1, Lemma 1, Theorem 2)
% \TheoremsNumberedByChapter  % (Theorem 1.1, Lema 1.1, Theorem 1.2)
\ECRepeatTheorems  %  

% For new submissions, leave this number blank.
% For revisions, input the manuscript number assigned by the on-line
% system along with a suffix ".Rx" where x is the revision number.
\MANUSCRIPTNO{IJOC-0001-2024.00}

%%%%add
\usepackage{caption}
\usepackage[labelfont=sf]{subcaption}
\captionsetup{subrefformat=parens,font=footnotesize}
\subcaptionsetup[figure]{textfont=sf,position=bottom}

\usepackage{endnotes}

\usepackage{url}
\urlstyle{same} 

\usepackage{hyperref}
% \hypersetup{
%     colorlinks=true, % 将超链接显示为带颜色的字体
%     urlcolor=blue    % 设置 URL 的颜色
% }

%%%%%%%%%%%%%%%%
\begin{document}
%%%%%%%%%%%%%%%%

% Outcomment only when entries are known. Otherwise leave as is and
%   default values will be used.
%\setcounter{page}{1}
%\VOLUME{00}%
%\NO{0}%
%\MONTH{Xxxxx}% (month or a similar seasonal id)
%\YEAR{0000}% e.g., 2005
%\FIRSTPAGE{000}%
%\LASTPAGE{000}%
%\SHORTYEAR{00}% shortened year (two-digit)
%\ISSUE{0000} %
%\LONGFIRSTPAGE{0001} %
%\DOI{10.1287/xxxx.0000.0000}%

% Author's names for the running heads
% Sample depending on the number of authors;
% \RUNAUTHOR{Jones}
% \RUNAUTHOR{Jones and Wilson}
% \RUNAUTHOR{Jones, Miller, and Wilson}
% \RUNAUTHOR{Jones et al.} % for four or more authors
% Enter authors following the given pattern:
%\RUNAUTHOR{}
\RUNAUTHOR{Wang, Y., et al.}%, et al.

% Title or shortened title suitable for running heads. Sample:
% \RUNTITLE{Predictive Maintenance in Manufacturing}
% Enter the (shortened) title:
\RUNTITLE{Generalized Low-Rank Matrix Contextual Bandits with Graph Information}

% Full title. Sample:
% \TITLE{Optimal Resource Allocation in Humanitarian Logistics: A Stochastic Programming Approach}
% Enter the full title:
\TITLE{Generalized Low-Rank Matrix Contextual Bandits with Graph Information}

% Block of authors and their affiliations starts here:
% NOTE: Authors with same affiliation, if the order of authors allows,
%   should be entered in ONE field, separated by a comma.
%   \EMAIL field can be repeated if more than one author
\ARTICLEAUTHORS{%
%\AUTHOR{John Doe,\textsuperscript{a} Jane Smith,\textsuperscript{b}}
%\AFF{\textsuperscript{a}Department of Industrial Engineering, University of XYZ, \EMAIL{john.doe@xyz.edu; \textsuperscript{b}Department of Computer Science, University of ABC, \EMAIL{jane.smith@abc.edu}} 
\AUTHOR{Yao Wang}%Jiannan Li
\AFF{School of Management, Xi’an Jiaotong University, Xi’an, China, yao.s.wang@gmail.com}%School of Management, Xi’an Jiaotong University, Xi’an, China, jiannanli@stu.xjtu.edu.cn
\AUTHOR{Jiannan Li}%Qianxin Yi
\AFF{School of Management, Xi’an Jiaotong University, Xi’an, China, jiannanli@stu.xjtu.edu.cn}%School of Management, Xi’an Jiaotong University, Xi’an, China, 
\AUTHOR{Yue Kang}%Yue Kang
\AFF{Microsoft, Washington, United States, yuekang@microsoft.com}%Department of Logistics,
%University of ABC, \EMAIL{emily.johnson@abc.edu}
\AUTHOR{Shanxing Gao}%Yao Wang
\AFF{School of Management, Xi’an Jiaotong University, Xi’an, China, gaozn@mail.xjtu.edu.cn}
\AUTHOR{Zhenxin Xiao}%Yao Wang
\AFF{School of Management, Xi’an Jiaotong University, Xi’an, China, zxxiao@mail.xjtu.edu.cn}
%School of Management, Xi’an Jiaotong University, Xi’an, China, yao.s.wang@gmail.com}
% Enter all authors
} % end of the block
%\hspace{-20cm}
\ABSTRACT{The matrix contextual bandit (CB), as an extension of the well-known multi-armed bandit, is a powerful framework that has been widely applied in sequential decision-making scenarios involving low-rank structure. 
In many real-world scenarios, such as online advertising and recommender systems, additional graph information often exists beyond the low-rank structure, that is, the similar relationships among users/items can be naturally captured through the connectivity among nodes in the corresponding graphs. However, existing matrix CB methods fail to explore such graph information, and thereby making them difficult to generate effective decision-making policies. To fill in this void, we propose in this paper a novel matrix CB algorithmic framework that builds upon the classical upper confidence bound (UCB) framework. This new framework can effectively integrate both the low-rank structure and graph information in a unified manner. Specifically, it involves first solving a joint nuclear norm and matrix Laplacian regularization problem, followed by the implementation of a graph-based generalized linear version of the UCB algorithm. Rigorous theoretical analysis demonstrates that our procedure outperforms several popular alternatives in terms of cumulative regret bound, owing to the effective utilization of graph information. A series of synthetic and real-world data experiments are conducted to further illustrate the merits of our procedure.

}

\KEYWORDS{sequential decision-making, matrix contextual bandits, graph information, generalized linear model} 
%\HISTORY{Received: Month DD, YYYY; Accepted: Month DD, YYYY; Published Online: Month DD, YYYY}

\maketitle
%%%%%%%%%%%%%%%%%%%%%%%%%%%%%%%%%%%%%%%%%%%%%%%%%%%%%%%%%%%%%%%%%
% Text of your paper here
\section{Introduction}
The multi-armed bandit (MAB) has proven to be an effective framework for sequential decision-making, where the decision-maker progressively selects actions (i.e., arms) based on the historical data and the current state of the environment to maximize cumulative revenue (i.e., reward). However, traditional MAB approaches are limited in their ability to handle complex real-world tasks, such as user privacy protection~\citep{han2020differentially} or personalized decision-making~\citep{zhou2023spoiled}. This paper focuses on the latter by incorporating contextual information (e.g., user and item features), a setting known as the contextual bandit (CB)~\citep{li2010contextual,chu2011contextual}. Due to more informed and adaptive decisions, CB methods have been widely applied across various domains, including recommender systems~\citep{bastani2022learning, aramayo2023multiarmed}, precision medicine~\citep{bastani2020online,zhou2023spoiled}, among others~\citep{chen2020dynamic,agrawal2023tractable}.

\begin{figure}[t]
     \FIGURE
    {\includegraphics[scale=.33]{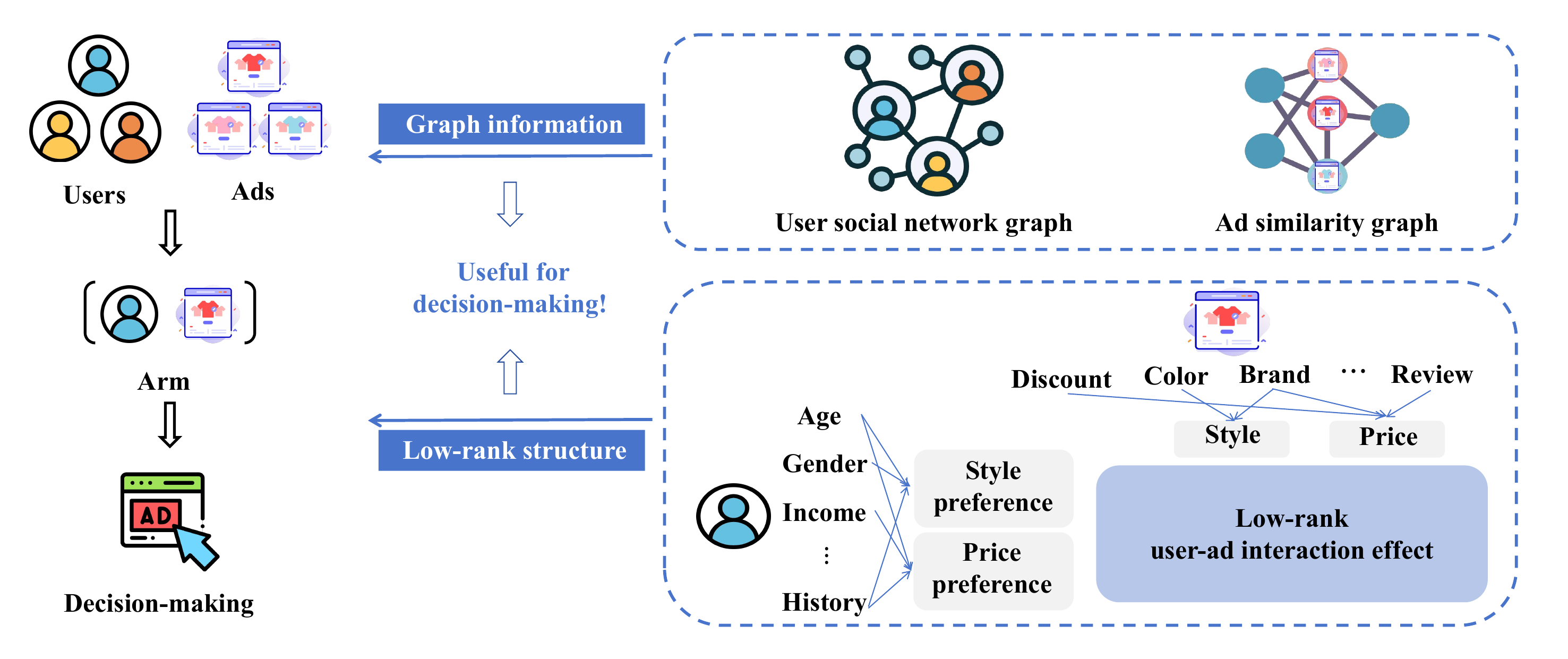}} %diagram1_new_v4.pdf
{Illustration of Online Advertising in Social Networks. \label{fig:figure1}}
{}
\end{figure}

However, in some complex real-world applications, there are often significant interactions between user and item features that such vector representation based CBs are unable to effectively capture, leading to reduced decision accuracy. Taking the problem of online advertising in social networks as an example (as shown in Figure \ref{fig:figure1}), the platform, as the decision-maker, needs to progressively adjust its advertisement (ad) delivery strategies based on historical data and environmental feedback to maximize long-term revenue, making this inherently a sequential decision-making problem. In this problem, users exhibit individual differences in features namely age, gender, income, and historical behavior, resulting in varied responses to the same ad features. For example, high-income users tend to prefer well-known brands, whereas low-income users may be more price-sensitive and less brand-dependent. This indicates that there are intricate interactions between ad features (e.g., discount, color, brand, and review) and the aforementioned user features. As such, traditional CBs simply stack the user and ad features into a joint feature vector cannot sufficiently model the underlying multi-dimensional relationships. Instead, adopting a matrix-based modeling approach provides a more effective alternative. Precisely, the rows and columns of the matrix correspond to users and ads, respectively, allowing for a more systematic representation of their interactions. Furthermore, the influence of these interactions on click behavior can be modeled through a small set of low-dimensional latent factors (e.g., style and price) significantly lower than the dimension of the feature matrix. This indicates that this problem exhibits a low-rank structure, and should be handled by some designed matrix type of CB algorithms. This has also been explored in several other related problems, e.g., ~\cite{cai2023doubly,lee2024low}.

In addition to the contextual information that describes the features of an action, some certain side information is often available, capturing the similarities among different actions. This kind of relationship is typically represented as an undirected graph~\citep{lejeune2020thresholding,thaker2022maximizing}, where the nodes correspond to the actions, and the edges indicate their correlations. Consequently, this side information is commonly referred to as graph information. Continuing with the aforementioned online advertising example (as shown in the upper right of Figure \ref{fig:figure1}), advertisers can take advantage of user interaction data on social media to build a social network~\citep{zhou2020learning} while also establishing a similarity graph between ads based on their click frequencies~\citep{han2023cost}. These graph structures reveal the impact of correlations on revenue: (1) Social connections between users often indicate similar interests, meaning that friends are more likely to be interested in the same type of ads; (2) Similarities between ads often indicate substitutability, meaning that ads for products of a similar style or price tend to attract the same user. Leveraging these insights, one can recommend similar ads to users who have a social connection with the user who clicked the ad, increasing the likelihood of purchase, and thus boosting revenue of decision-maker.

Basically, both low-rank structure and graph information are two important side information for providing effective decision-making procedures in online advertising problem and beyond, e.g., \cite{lee2024low}, \cite{abdallah2025leveraging}. However, current popular bandit algorithms typically focus on exploiting just one aspect, that is, either the low-rank structure~\citep{lu2021low,kang2022efficient,jang2024efficient} or the graph information~\citep{cesa2013gang,kocak2020spectral,yang2020laplacian}. Therefore, how to effectively leverage both types of information in a unified bandit framework and further improve the decision-making performance is the main focus of this study. To this end, we shall devise a new matrix CB algorithm. Specifically, we ingeniously integrate a matrix Laplacian regularization with the low-rank matrix structure into a unified way. Furthermore, it is worth noting that the rewards of CBs are not necessarily continuous variables in some practical applications, such as the binary rewards in recommender systems and the count based rewards in ad searches. This highlights the need for more flexible generalized linear models in this new CB algorithm. Our main contributions can be summarized as follows:

\begin{enumerate}
% Exploring deeper structural information. In complex decision-making scenarios, through a more in-depth exploration of existing data, we discovered that it not only contains low-rank structural information but also includes graph information. In previous matrix contextual bandit problems, typically only the first type of structural information was utilized, neglecting the second type of structural information. This led to information wastage and consequently affected the effectiveness of decision-making. Therefore, we simultaneously characterized both types of structural information.
% \textbf{A general model for joint graph information and low-rankness.}
% through in-depth analysis of the existing data, we found that the data not only contains low-rank structural information but also includes graph information. 

% Methodologically, in complex decision-making scenarios, a thorough analysis of existing data reveals that it encompasses both low-rank structural information and graph information. To effectively leverage these  insights, we propose a new generalized low-rank matrix bandits with graph information model, which simultaneously captures these two types of information by imposing low-rank constraints and graph regularization. Moreover, the model is general and unifies the mechanisms under various reward variables, making it widely applicable to a range of complex decision-making problems.
    \item Methodologically, we introduce a novel CB algorithmic framework that effectively leverages both the low-rank structure and graph information inherent in complex decision-making scenarios. By solving an optimization problem that incorporates both the joint nuclear norm and a designed matrix Laplacian regularizer, the proposed algorithm effectively integrates both types of information while maintaining high flexibility, unifying the reward relationships under different types of non-linearity, and thus being widely applicable to a range of real decision-making problems.
    
    %This approach is based on an in-depth analysis of data in complex decision-making scenarios, revealing that the data not only contain low-rank structural information but also incorporate graph information. To fully leverage this insight, the proposed model effectively integrates both types of information while maintaining high flexibility, unifying relationships under different types of non-linearity, and thus being widely applicable to various complex decision-making problems.

    % Improvement in the regret bound. For this model, we have devised a two-stage algorithm called Under Graph Information Generalized Low Rank Explore Subspace Then Refine (GG-LowESTR). This algorithm combines the classical explore-then-commit algorithm with the optimism-in-the-face-of-uncertainty (OFU) principle, and building upon the utilization of low-rank structure, extends it to cases where graph information is known. Based on this algorithm, we have demonstrated that the upper bound on regret includes an additional factor less than 1 (which is related to the quantity of graph information and decreases as the amount of graph information increases). This implies that effectively leveraging graph information can indeed reduce the upper bound on regret.
    % \textbf{A graph-dependent improvement in the regret bound.} 
    \item Theoretically, we use the typical cumulative regret to measure the gap between the decisions made by the proposed algorithm and the optimal decision. Through representing the sum of the aforementioned two regularization terms as an equivalent atomic norm, we can direct leverage the existing framework to derive a tighter cumulative regret bound, which includes a graph-related factor in the numerator to reflect the richness of graph information. That is, the factor decreases as the amount of graph information increases, leading to a tighter bound and highlighting the importance of leveraging such graph information. Additionally, this bound is independent of the number of actions, thus avoiding performance degradation in large action spaces.

    \item Experimentally, we demonstrate the effectiveness of our proposed algorithm by comparing the numerical results with those of existing related algorithms. We evaluate these algorithms in various synthetic scenarios including the amount of graph information, different ranks, the number of actions and further in several real-world decision-making scenarios across cancer treatment, movie recommendation and ad searches. The extensive results show that the proposed algorithm achieves more accurate decision-making policies, as evidenced by a reduction in cumulative regret and an improvement in the optimal action hit rate.
    
    % Experimentally, we compare the performance of our proposed algorithm with existing related algorithms on real-world decision-making problems across three distinct domains: cancer treatment, movie recommendation, and online advertising. The results show that the proposed algorithm achieves more accurate decision-making, as evidenced by a reduction in cumulative regret and an improvement in the optimal arm hit rate. 

\end{enumerate}

\subsection{Related Literature}
\subsubsection{Low-Rank Matrix Bandits} A natural and intuitive method to develop matrix bandits is to directly generalize the multi-armed bandits from the vector representation to the matrix representation~\citep{kveton2017stochastic, katariya2017stochastic, trinh2020solving}. However, this type of method does not exploit feature information from the environment, making it difficult to achieve good decision-making performance in some complex scenarios. To address this issue, \cite{jun2019bilinear} proposed bilinear bandits, which can be viewed as a contextual low-rank matrix bandit. Building upon this work, \cite{jang2021improved} followed the bilinear setting and further leveraged the geometry of the action space to achieve an improved regret bound in terms of the matrix rank. As an extension of the aforementioned works, \cite{lu2021low} focused on a broader class of low-rank bandits, that is, generalized linear bandit models. However, its algorithm design relies on calculating covering numbers of low-rank matrices, which often suffer from computational intractability. To tackle this limitation, \cite{kang2022efficient} proposed a novel optimization objective for parameter estimation, leading to a better regret bound in terms of feature dimension. To handle heavy-tailed noise in practice, \cite{kang2024low} developed a new estimator to improve model robustness and performance. However, all the aforementioned algorithms assume that an effective exploration distribution is given, which often cannot be satisfied in practice. To this end, \cite{jang2024efficient} introduced an experimental design approach to guide the selection of the exploration distribution and derived a regret bound related to the action set.

\subsubsection{Bandits with Graph Information} The first category of approaches for integrating graph information into the bandit framework comprises clustering based methods. Specifically, a series of works~\citep{gentile2014online, korda2016distributed,yang2018graph, wang2023online}  segment users into distinct clusters and subsequently apply corresponding bandit algorithms to each user group. This process could effectively reduce the number of users that need to be individually managed, thereby enhancing overall performance. Besides, another line of works \citep{li2016collaborative,gentile2017context} considered the possibility that both users and items exhibit clustering structures, and proposed a double-clustering framework accordingly. However, such methods often overlook intra-cluster differences, which may lead to the loss of important features associated with key users or items. As such, the graph Laplacian regularization based methods~\citep{cesa2013gang,kocak2020spectral,yang2020laplacian} were proposed. By integrating some designed graph Laplacian regularizers into the optimization problem for parameter estimation, these methods enhance the accuracy of reward estimation within the bandit framework, thereby facilitating more precise decision-making. In this line of works, they primarily focus on user similarity, assuming that each user is associated with an individual parameter vector. Accordingly, their Laplacian regularization terms are designed to quantify the global impact of user similarity on the parameter matrix, where the $i$-th row represents the parameter vector of the $i$-th user. In contrast, \cite{kocak2020spectral} emphasizes item similarity under the assumption that all items share a common parameter vector, and the corresponding Laplacian regularization is used to quantify the effect of item similarity on rewards. With the advancements of deep learning techniques, some recent works~\citep{qi2022neural,qi2023graph} introduce Graph Neural Networks into the reward estimation process of the bandit framework by propagating and aggregating information over similarity graphs, thereby enhancing overall decision-making performance. These works still focus on single-aspect graph information, i.e., item similarity graph~\citep{qi2022neural} or user similarity graph~\citep{qi2023graph}. However, in real-world applications, these two aspects (i.e., dual-graph information) can be captured simultaneously. Therefore, the aforementioned single-perspective modeling approach leads to insufficient exploitation of graph information.

Although existing studies have provided important inspiration for this work, there remain significant differences. This work mainly focuses on more complex real-world scenarios where both low-rank structure and graph information are available simultaneously, whereas the aforementioned related works typically consider only one aspect---either emphasizing low-rankness or focusing on graph information. Moreover, this work further considers the coexistence of user and item similarity, which necessitates the design of new Laplacian regularization term to effectively model such dual-graph information.

\subsection{Notations and Preliminaries}
We first summarize the notations used throughout this paper. The symbols $a$, $\boldsymbol{a}$, $A$, and $\mathcal{A}$ represent scalars, vectors, matrices, and sets, respectively. For a vector $\boldsymbol{a}$, $\|\boldsymbol{a} \|$ represents the $l_2$-norm and $\|\boldsymbol{a}\|_M=\sqrt{\boldsymbol{a}^\top M \boldsymbol{a}}$ denotes the weighted norm. For a matrix $A$, $\|A\|_F$ represents the Frobenius norm, $\|A\|$ denotes the operator norm, and $\|A\|_*$ is the nuclear norm. For the operator symbols, $\odot$ represents the outer product of the vectors, $\otimes$ denotes the Kronecker product, and $\langle\cdot\rangle$ is the inner product of the vectors or matrices. $\mathcal{O}(\cdot)$ represents ignoring irrelevant constants and $\tilde{\mathcal{O}}(\cdot)$ denotes hiding logarithmic factors.

In the following, we shall introduce three key definitions.

\begin{definition}[Truncated function of rectangular matrix~\citep{minsker2018sub}] For the real-value function $\psi(\cdot)$: $\mathbb{R} \rightarrow \mathbb{R}$ is defined as  $$\psi \left(x\right)=\left\{
             \begin{array}{lc}
             \log(1+x+\frac{x^2}{2}) &  \quad x \geq 0,\\
             -\log(1-x+\frac{x^2}{2}) & \quad x <0 ,
             \end{array}
\right.$$and rectangular matrix $A \in \mathbb{R}^{d_1 \times d_2}$, let $d=d_1+d_2$, define
$$\psi_\nu(A)= \frac{[H  \mathrm{diag}\left(\psi(D_{11}), \dots , \psi(D_{dd})\right)H^\top]_{1:d_1,(d_1+1):d}}{\nu} ,$$
where $\nu > 0$, $D_{ii}$ represents the $i$-th diagonal element of the matrix $D$. Matrices $H,D$ are obtained from the full singular value decomposition (SVD) of $\widetilde{A} = H D H^\top$, where $\widetilde{A} \triangleq \left( \begin{array}{cc}
    0 & \nu A \\
     \nu A^\top & 0
\end{array}  \right)$. 
\label{def3}
\end{definition}

\begin{definition}[Score function~\citep{kang2022efficient}] 
The score function is defined as $
S(x)=-\nabla_x \log (p(x))=-\nabla_x p(x) / p(x)$, where $p(x)$ is the probability density, $\nabla_x p(x)$ is the gradient of $p(x)$ with respect to the variable $x$. For the matrix $X$, the score function is defined as $
S(X)=\left(S(X_{ij}) \right)$.
\label{s-function-def}
\end{definition}

\begin{definition}[Weighted atomic norm~\citep{rao2015collaborative}] 
For matrix $Z \in \mathbb{R}^{d_1 \times d_2}$, the weighted atomic norm with respect to the atomic set $\mathcal{A}\triangleq \{\boldsymbol{\omega}_i \boldsymbol{h}_i^\top: \boldsymbol{\omega}_i=A \boldsymbol{u}_i, \boldsymbol{h}_i=B \boldsymbol{v}_i, \|\boldsymbol{u}_i \|=\|\boldsymbol{v}_i \|=1 \}$ is defined as
$\|Z\|_{\mathcal{A}}=\inf \sum_{i}\left|c_i\right| \; s.t. \; Z=\sum_{i} c_i A_i, \; A_i \in \mathcal{A}$, where $A=P S_p^{-1 / 2}, B=Q S_q^{-1 / 2}$, $P, Q$ are the basis matrices spanning the matrix space, $S_p, S_q$ are diagonal matrices whose diagonal elements correspond to the respective weights.

% The weighted atomic norm of a matrix $Z \in \mathbb{R}^{d_1 \times d_2}$ with respect to the atomic set $\mathcal{A}\triangleq \{\boldsymbol{\omega}_i \boldsymbol{h}_i^\top: \boldsymbol{\omega}_i=A \boldsymbol{u}_i, \boldsymbol{h}_i=B \boldsymbol{v}_i, \|\boldsymbol{u}_i \|=\|\boldsymbol{v}_i \|=1 \}$ is defined as
% $\|Z\|_{\mathcal{A}}=\inf\sum_{i}\left|c_i\right| \; s.t. \; Z=\sum_{i} c_i A_i, \; A_i \in \mathcal{A}$, where $A=P S_p^{-1 / 2}, B=Q S_q^{-1 / 2}$, $P, Q$ are the basis matrices spanning the matrix space, $S_p, S_q$ are diagonal matrices whose diagonal elements correspond to the respective weights.
\label{def5}
\end{definition}

\subsection{Organization of This Paper}

The remainder of the paper is organized as follows. Section \ref{sec2} provides a detailed formulation the studied problem. Section \ref{sec3} presents the proposed algorithm, discussing its design principles and implementation details. Section \ref{sec4} analyzes the theoretical results of the proposed algorithm, outlining key assumptions and regret bound. Section \ref{sec5} evaluates the effectiveness and superiority of our algorithm through conducting a series of experiments. Finally, Section \ref{sec6} summarizes the paper, highlights the research findings, and presents potential directions for future work.

% \subsection{Outline and Preliminaries}
% The remainder of this paper is organized as follows. In Section 2, we introduce a general low-rank matrix bandit model that leverages graph information. Section 3 presents our main algorithm along with theoretical results on its performance. In Section 4, we validate the effectiveness of the proposed learning algorithm through experiments on both synthetic and real datasets. Finally, Section 5 concludes the paper and discusses future research directions. All technical proofs are provided in the appendix.

% 

\section{Problem Formulation}\label{sec2}
 
% During each round $t$, the platform is provided with a arm set $\mathcal{A}$, each indexed by $i$ along with their corresponding attribute matrice $X_{t} \in \mathcal{X}=\{X_{i}\}_{i=1}^K$. Then, the user of the platform selects one of the items, and a third party (the environment) interacting with it provides the reward $y_t$ to the platform. The choice made in this round is influenced by previous selections and impacts future decisions. Assume the mechanism between the chosen item and the immediate reward is as follows:

% To effectively model complex real-world decision-making problems and fully utilize available matrix-structured context as well as existing graph information, we propose a model named generalized low-rank matrix bandit with graph information. To illustrate the working principle of this model intuitively, we take the movie recommendation problem as an example. 

% To tackle the challenge of effectively modeling complex real-world decision-making problems, where both arm feature and graph information are available, we propose a model called generalized low-rank matrix bandits with graph information. To this end, in this section, we begin by using movie recommendations as an illustrative example to intuitively present the decision-making problem explored in this paper. Subsequently, we delve into the meanings of the two structures---low-rankness and graph information---to better understand their roles in problem modeling. Finally, we provide a more precise summary of the problem statement to ensure a clear and accurate description of the research focus.

In this section,  we shall formally formulate the studied problem. Precisely, at each time step $t$, the decision-maker needs to select an item to display to the current user. Accordingly, the chosen action is represented as a specific (user, item) pair. Meanwhile, the decision-maker can observe the features of both the user and the item, denoted $\boldsymbol{p}_t \in \mathbb{R}^{d_1}$ and $\boldsymbol{q}_t \in \mathbb{R}^{d_2}$, respectively. The corresponding action can also be represented as a feature matrix $X_t = \boldsymbol{p}_t \odot \boldsymbol{q}_t \in \mathbb{R}^{d_1 \times d_2}$. 

In addition to the features described above, real-world applications often involve another side information in the form of a graph that captures the similarity among different actions. This graph information can be either explicitly provided, such as user similarity constructed from a known social network, or implicitly inferred, for example, by applying methods such as $k$-nearest neighbor, $\epsilon$-nearest neighbor to construct item similarity. We model this similarity information as an undirected graph $\mathcal{G} = (\mathcal{V}, \mathcal{E})$, where the node set $\mathcal{V} = \{1, 2, \cdots, n\}$ represents all possible actions, and the edge set $\mathcal{E}$ encodes pairwise similar relationships.

Moreover, the decision-maker interacts with the environment, subsequently obtaining the so-called reward, $y_t$. Considering the diverse nature of reward variables--such as binary variables (e.g., whether a movie is watched) in movie recommendation, continuous variables (e.g., treatment effectiveness) in precision medicine, and count variables (e.g., click-through rates) in ad searches--we can naturally assume that the relationship between the selected action and the reward belongs to the canonical exponential family defined as
% \begin{equation}
%     \begin{aligned}
%         p\left(y_t \mid X_t,\Theta^* \right)&=\exp \left(\frac{y_t \langle X_t,\Theta^* \rangle-b\left(\langle X_t,\Theta^* \rangle \right)}{\phi}+c\left(y_t, \phi\right)\right),\\
%         \mathbb{E}\left(y_t \mid X_t, \Theta^* \right)&=b^{\prime}\left(\left\langle X_t, \Theta^*\right\rangle\right) \triangleq \mu\left(\left\langle X_t, \Theta^*\right\rangle\right),
%     \end{aligned}
%     \label{GLM}
% \end{equation}
$$p\left(y_t \mid X_t,\Theta^* \right)=\exp \left(\frac{y_t \langle X_t,\Theta^* \rangle-b\left(\langle X_t,\Theta^* \rangle \right)}{\phi}+c\left(y_t, \phi\right)\right),$$
$$\mathbb{E}\left(y_t \mid X_t, \Theta^* \right)=b^{\prime}\left(\left\langle X_t, \Theta^*\right\rangle\right) \triangleq \mu\left(\left\langle X_t, \Theta^*\right\rangle\right),$$
where $\Theta^* \in \mathbb{R}^{d_1 \times d_2}$ is an unknown parameter, $\phi$ is the dispersion parameter, $b(\cdot)$ is a known and strictly convex log-partition function, the first derivative of $b(\cdot)$, i.e., $\mu(\cdot)$, is called the inverse link function. The above equation can be
rewritten in the form of a generalized linear model as $y_t=\mu \left(\langle X_t,\Theta^* \rangle \right)+\epsilon_t$,
where $\epsilon_t$ is the independent sub-Gaussian noise with parameter $\omega$.

We begin by analyzing the impact of feature information on the reward, which is reflected in the low-rank structure of the parameter matrix $\Theta^*$. As mentioned previously, the reward can be modeled using a small number of low-dimensional latent factors, which are determined by the interactions between users and items. These key latent factors represent the dominant directions of interaction and are captured by the eigenspace of $\Theta^*$. In other words, the informative part of $\Theta^*$ is concentrated in a low-dimensional subspace that is much smaller than the original space (i.e., $\min\{d_1, d_2\}$), that is, $\Theta^*$ exhibits a typical low-rank structure with rank $r \ll \min\{d_1, d_2\}$.

This low-rank structure has led to a line of research on low-rank matrix contextual bandits~\citep{kang2022efficient,jang2024efficient}, where decisions are made solely based on current feature information and historical observations. However, in real-world scenarios such as social networks, each user can be represented not only by a feature vector (e.g., demographic attributes) but also by a graph that encodes user connections (e.g., friendships or interactions). Similar relational structures can also be found among items. To this end, our setting additionally leverages such relational information by introducing an undirected graph $\mathcal{G}$ over actions, which induces smoothness in the expected reward function. Specifically, for any two actions $i,j \in \mathcal{V}$, the difference in their expected rewards can be bounded by a graph-induced distance measure $\mathrm{d}(\cdot,\cdot)$ about these actions, that is,  $$\left|\mu \left( \langle X_i,\Theta^* \rangle \right)-\mu \left( \langle X_j,\Theta^* \rangle \right) \right| \leq  \mathrm{d}\left(  i , j \right) , $$ where $\mathrm{d}(\cdot,\cdot)$ measures the similarity between actions. In essence, actions that are close in the graph tend to yield similar expected rewards. This enables more accurate decisions by leveraging information propagation across action connections and uncovering relationships beyond what feature matrices alone can capture. Importantly, our use of the graph differs fundamentally from prior works based on graph feedback~\citep{gou2023stochastic,gong2025efficient}, which assume that selecting an action reveals the rewards of its neighbors on the graph. In contrast, our approach does not require such side observations, which are often costly or impractical to obtain in real-world applications.

Based on the above analysis, then the main objective is to maximize the cumulative reward \(\sum_{t=1}^T \mu \left(\langle X_t, \Theta^* \rangle\right)\). If \(\Theta^*\) is known, one can easy to choose the optimal action as \(X^* \triangleq  \arg\max_{X \in \mathcal{X}} \mu \left(\langle X, \Theta^* \rangle\right)\), where $\mathcal{X}$ is the action set. However, \(\Theta^*\) is usually unknown in practice. Therefore, we reformulate the optimization goal to minimize the following cumulative regret 
\begin{equation}R_T = \sum_{t=1}^T \left[\mu \left(\langle X^*, \Theta^* \rangle\right) - \mu \left(\langle X_t, \Theta^* \rangle\right)\right]\label{c-regret},
\end{equation}
which measures the gap
between the actions $\{X_t\}_{t=1}^{T}$ selected by the proposed algorithm and the optimal action $X^*$.

\section{Method}\label{sec3}
In this section, we shall develop an innovative algorithm named as ``Graph-Generalized Explore Subspace Then Transform'' (GG-ESTT) to minimize the cumulative regret defined in (\ref{c-regret}). Then the core challenge lies in the fact that the true parameter matrix $\Theta^*$ is unknown, necessitating the estimation involved in the decision-making process. Given that the unknown $\Theta^*$ exhibits a low-rank structure, we focus on estimating the column and row subspaces to fully exploit this structural characteristic. In addition, we shall further leverage the available graph information to improve the accuracy of such subspaces estimation. We then utilize the revealed space redundancy from this stage to transform the original space into an almost low-dimensional parameter space. Subsequently, in this transformed space, we continue to utilize graph information to design an efficient action selection strategy. The detailed procedure is summarized in Algorithm \ref{algo1}.

\begin{algorithm}[t]
\caption{Graph-Generalized Explore Subspace Then Transform (GG-ESTT)} %算法的名字
\hspace*{0.02in} {\bf Input:} %算法的输入， \hspace*{0.02in}用来控制位置，同时利用 \\ 进行换行
$\mathcal{X}, \lambda, \alpha, T_1, T, r, \tau, \lambda_2, \lambda_{\perp}, L$. 
%\hspace*{0.02in} {\bf Output:} %算法的结果输出
%output result
\begin{algorithmic}[1]
%\State some description % \State 后写一般语句
\For{$t=1$ to $T_1$} \label{algo1_line1}% For 语句，需要和EndFor对应
% \State Observe $n$ contexts, $X_{ 1}, X_{ 2}, \cdots, X_{ n}$, 
\State Choose action $X_t \in \mathcal{X}$ according to distribution $\mathbb{D}$, and receive reward $y_t$. \label{algo1_line2}
%\If{condition} % If 语句，需要和EndIf对应
%\State ...
%\Else
%\State ...
%\EndIf
\EndFor \label{algo1_line3}
%\While{condition} % While语句，需要和EndWhile对应
%\State ...
%\EndWhile
\State Compute the estimator $\hat{\Theta}_{T_1}$ by Equation \eqref{eq2}.\label{algo1_line4}
\State Rotate the action set and parameter space as followed:
\begin{equation*}
    \mathcal{X}^\prime = \left\{\left(\hat{U}, \hat{U}_{\perp}\right)^{\top} X \left(\hat{V}, \hat{V}_{\perp}\right): X\in \mathcal{X}\right\},
    \Theta^\prime=\left(\hat{U}, \hat{U}_{\perp}\right)^{\top} \Theta^* \left(\hat{V}, \hat{V}_{\perp}\right),
\end{equation*}
where $\hat{U}\in \mathbb{R}^{d_1 \times r}, \hat{U}_{\perp}\in \mathbb{R}^{d_1 \times d_1-r}, \hat{V}\in \mathbb{R}^{d_2 \times r}, \hat{V}_{\perp}\in \mathbb{R}^{d_2 \times d_2-r}$ are obtained from the full SVD of $\hat{\Theta}_{T_1} =(\hat{U}, \hat{U}_{\perp}) S (\hat{V}, \hat{V}_{\perp})^\top$. \label{algo1_line5}
\State Obtain the vectorized parameter $\theta^{*}$ and its corresponding action set $\mathcal{X}_{\text {vec }}^{\prime}$ according to \eqref{eq4} and \eqref{eq6},  respectively, such that the last $(d_1-r) \cdot (d_2-r)$ components lie in the redundant space. \label{algo1_line6}
\For{$t=1$ to $T_2=T-T_1$} \label{algo1_line7}% For 语句，需要和EndFor对应
\State Run Graph-LowGLM-UCB (Algorithm
\ref{algo2}) with action set $\mathcal{X}_{\text {vec }}^{\prime}$ and the low dimension $k=(d_1+d_2-r)r$. \label{algo1_line8}
\EndFor \label{algo1_line9}
%\State \Return result

\end{algorithmic}
\label{algo1}
\end{algorithm}

\subsection{Estimation of Subspace with Graph Information}\label{sec3.1}

% The first stage of the algorithm is to improve the estimation of the low-rank subspace using graph information. According to Wedin's Theorem, the perturbation of the low-rank subspace can be measured by the Frobenius norm of the matrix. Therefore, we first obtain an initial estimate of the low-rank parameters based on the observations up to time $T_1$.

% Considering the non-linear nature of the reward function, to achieve a better bound, we adopt the quadratic loss function including a
% matrix truncation function, which is commonly used in nonlinear signal estimation. To address the low-rank parameter constraint, we use the convex relaxation of the nuclear norm regularization, denoted \( R_r(\Theta) \). Similay, for the characterization of graph information, we adopt the graph Laplacian regularization derived from \eqref{eq1}, denoted as \( R_L(\Theta) \). We will provide its specific form as followed:

The first stage (lines 1-4) of the proposed Algorithm \ref{algo1} is to leverage graph information for improving the estimation of the low-rank subspace. To this end, we use the observations up to time $T_1$ to obtain an initial estimate of the unknown parameter $\Theta^*$ by solving an optimization problem with two regularization terms capturing the low-rank structure and graph information, respectively. Specifically, due to the non-linearity of the reward function, we adopt a commonly-used quadratic loss function~\citep{plan2016generalized, yang2017high}, defined as $$ L_{T_1}(\Theta)\triangleq \langle\Theta, \Theta\rangle-\frac{2}{T_1} \sum_{i=1}^{T_1}\left\langle \psi_\nu\left(y_i \cdot S\left(X_i\right)\right), \Theta\right\rangle,$$ 
where $\psi_\nu(\cdot)$ and $S(\cdot)$ are given in Definitions \ref{def3} and \ref{s-function-def}, respectively. We then use the commonly used nuclear norm to characterize the low-rank structure, defined as $$ R_n(\Theta) \triangleq \|\Theta\|_*=\inf_{U,V}\left\{\|U\|_F+\|V\|_F: \Theta=UV^\top \right\}.$$ In contrast, existing graph Laplacian regularizations only reflect the single-aspect graph information based on either user similarity or item similarity, which cannot be used to  characterize the dual-graph information considered in this work. 

% In contrast, the regularization term that reflects the graph information requires further derivation to determine its specific form.

% To this end, we obtain an initial estimation of the unknown parameter based on the observed data up to time \( T_1 \). Due to the non-linearity of the reward function, we adopt a non-linear signal estimation method~\citep{plan2016generalized,yang2017high} that introduces a quadratic loss function incorporating a matrix truncation function (as shown in Definition \ref{def3}) denoted as $$ L_{T_1}(\Theta)\triangleq \langle\Theta, \Theta\rangle-\frac{2}{T_1} \sum_{i=1}^{T_1}\left\langle \psi_\nu\left(y_i \cdot S\left(X_i\right)\right), \Theta\right\rangle.$$ 

% Simultaneously, to ensure the low-rank structure of the parameter, we use the nuclear norm as a convex constraint, represented as $$ R_n(\Theta) \triangleq \|\Theta\|_*=\inf_{U,V}\left\{\|U\|_F+\|V\|_F: \Theta=UV^\top \right\}.$$ To further exploit the graph information, we  next derive a matrix-based form of the graph Laplacian regularization term.

We shall further show how to characterize such dual-graph information in our algorithm. For the action set $\mathcal{G}$, the similarity between actions can be captured by a symmetric matrix $W$ (i.e., the adjacency matrix), where $W_{ij} = 1$ indicates that action $i$ and action $j$ are similar (i.e., connected) and $W_{ij} = 0$ otherwise. Then, the graph Laplacian matrix of $\mathcal{G}$ is defined as $L \triangleq D-W$, where $D$ is the diagonal matrix with entries $d_{i i} \triangleq \sum_j W_{i j}$ (node degrees). With these notions, we propose a graph Laplacian regularization, which measures the smooth variation of the expected rewards over the entire action set, that is, 
\begin{equation}\label{eq1}
   R_L(\Theta^*) \triangleq \frac{1}{2} \sum_{ij} W_{ij} \left[\mu \left( \langle X_i,\Theta^* \rangle \right)-\mu \left( \langle X_j,\Theta^* \rangle \right) \right]^2
      =a_\mu \mathrm{tr}\left\{ \Theta^{*\top} \widetilde{X}^\top L \widetilde{X}  \Theta^* \right\} ,
\end{equation}
where $\widetilde{X} \triangleq \left(\mathrm{vec}(X_1),\mathrm{vec}(X_2),\dots,\mathrm{vec}(X_n)\right)^\top$. The constant $a_\mu$ is constrained by the bound of the first derivative of $\mu$. Noting that each action feature $X_i$ is a user-item interaction matrix, where rows and columns correspond to user and item features, respectively. Thus, $X_i$ simultaneously encodes information from both the user and the item. By stacking all these action features, the resulting matrix $\widetilde{X}$ serves as a joint representation of the entire action set, comprising all (user, item) pairs. Combining with the graph Laplacian matrix $L$ to encode the similarity structure among actions, this regularization term can effectively model the inherent smoothness over the dual-graph structure.

% Let us consider a graph $\mathcal{G}$ with $n$ nodes (e.g., the entire arm set). Then Let $W$ be a symmetric $n \times n$ adjacency matrix, where $W_{i j}=1$ indicates the presence of similarity and an edge connecting nodes; otherwise, $W_{i j}=0$. $D$ be the $n \times n$ diagonal matrix with entries $d_{i i} \triangleq \sum_j W_{i j}$ (node degrees). The graph Laplacian of $\mathcal{G}$ is defined as $L \triangleq D-W$. Let $\widetilde{X} \triangleq \left(\mathrm{vec}(X_1),\mathrm{vec}(X_2),\dots,\mathrm{vec}(X_n)\right)^\top$. Therefore, for the entire graph, the influence of similarity between different arms on the corresponding expected rewards can be represented as:
% \begin{equation}\label{eq1}
%    R_L(\Theta^*) \triangleq \frac{1}{2} \sum_{ij} W_{ij} \left[\mu \left( \langle X_i,\Theta^* \rangle \right)-\mu \left( \langle X_j,\Theta^* \rangle \right) \right]^2
%       =a_\mu \mathrm{vec}(\Theta^*)^\top \widetilde{X}^\top L \widetilde{X}  \mathrm{vec}(\Theta^*),
% \end{equation}
% where the constant $a_\mu$ is constrained by the upper bound of the first derivative of $\mu$.

% introduce a graph Laplacian regularization term derived from Equation \eqref{eq1}, denoted as \( R_L(\Theta)\triangleq a_\mu \mathrm{vec}(\Theta)^\top \widetilde{X}^\top L \widetilde{X}  \mathrm{vec}(\Theta) \). 

% Based on the above formulations, the ??specific optimization problem??? can be expressed as

Based on the above formulation, we can obtain the estimated parameter $\hat{\Theta}_{T_1}$ by 
\begin{equation}\label{eq2}
    \widehat{\Theta}_{T_1}=\arg \min _{\Theta \in \mathbb{R}^{d_1 \times d_2}} L_{T_1}(\Theta)+\lambda R_n(\Theta)+\alpha R_L(\Theta),
\end{equation}
where \(\lambda\) and \(\alpha\) are the tuning parameters to control the influence of their corresponding regularization terms. We then perform a full SVD on the estimated $\hat{\Theta}_{T_1}$, that is, $ 
\hat{\Theta}_{T_1} = (\hat{U}, \hat{U}_{\perp}) \hat{S} (\hat{V}, \hat{V}_{\perp})^\top,
$ where $\hat{U} \in \mathbb{R}^{d_1 \times r}$, $\hat{U}_{\perp} \in \mathbb{R}^{d_1 \times (d_1 - r)}$, $\hat{V} \in \mathbb{R}^{d_2 \times r}$, and $\hat{V}_{\perp} \in \mathbb{R}^{d_2 \times (d_2 - r)}$. It is easy to see that $\hat{U}$ and $\hat{V}$ are the column and row subspaces of $\hat{\Theta}_{T_1}$, which consist its top $r$ left and right singular vectors, respectively. These subspaces provide a natural estimate of the corresponding subspaces of the true parameter matrix $\Theta^*$.

% In the following, we perform two operations, an orthogonally rotate and a vectorization rearrangement, to transform the original parameter space into an almost low-dimensional space in order to achieve better cumulative regret. ??Specifically, we project the parameter $\Theta^*$ onto an orthogonal space formed by all the singular vectors of $\hat{\Theta}_{T_1}$.This results in its rotated version, given by 

Considering the space redundancy revealed by the above subspace estimation, we further transform the original parameter space into an almost low-dimensional space (i.e., the lines 5-6 of Algorithm \ref{algo1}) to reduce the cumulative regret. More precisely, this can be done in two steps, i.e., orthogonal rotation and vectorized rearrangement. Specifically, orthogonal rotation refers to projecting the true parameter $\Theta^*$ onto the orthogonal space spanned by all singular vectors of the estimated $\hat{\Theta}_{T_1}$. This gives a rotated version of $\Theta^*$, defined as $ \Theta^{\prime} = \left( \hat{U}, \hat{U}_{\perp} \right)^\top \Theta^* \left( \hat{V}, \hat{V}_{\perp} \right)$. To ensure the equivalence of the problem, it is necessary to reformulate the expected reward as
\begin{equation}\label{rotated-eq}
    \mu \left(\langle X, \Theta^* \rangle \right)=\mu \left( \mathrm{tr} \left\{ \left[ \left( \hat{U}, \hat{U}_\perp \right)^\top X \left( \hat{V}, \hat{V}_\perp \right) \right]^\top \Theta' \right\} \right)=\mu \left(  \left\langle X^{\prime} ,\Theta^{\prime} \right\rangle \right),
\end{equation} from which we get the rotated action set \( \mathcal{X}^{\prime}=\left\{\left( \hat{U}, \hat{U}_{\perp}\right)^{\top} X \left(\hat{V}, \hat{V}_{\perp}\right) : X \in \mathcal{X}\right\} \). We then rearrange $\mathrm{vec}(\Theta^{\prime})$ such that the first $k\triangleq(d_1+d_2-r)r$ elements are in the subspace, while the remaining elements are in the orthogonal complement, which is explicitly given by
\begin{equation}\label{eq4}
% \begin{aligned}
\boldsymbol{\theta}^*=\left(\mathrm{vec}\left( \Theta_{1:r,1:r}^\prime\right);\mathrm{vec}\left( \Theta_{r+1:d_1,1:r}^\prime\right);\mathrm{vec}\left( \Theta_{1:r,r+1:d_2}^\prime\right);\mathrm{vec}\left( \Theta_{r+1:d_1,r+1:d_2}^\prime\right)\right),
    % &\boldsymbol{\theta}^*_{1:k}=\left(\mathrm{vec}\left( \Theta_{1:r,1:r}^\prime\right);\mathrm{vec}\left( \Theta_{r+1:d_1,1:r}^\prime\right);\mathrm{vec}\left( \Theta_{1:r,r+1:d_2}^\prime\right)\right)\\
  % &\boldsymbol{\theta}^*_{k+1:d_1 d_2}=\mathrm{vec}\left( \Theta_{r+1:d_1,r+1:d_2}^\prime\right).
% \end{aligned}     
\end{equation} 
and denote the corresponding action set as \begin{equation}\label{eq6}
    \mathcal{X}_{\text {vec }}^{\prime}=\left\{ \left(\mathrm{vec}\left( X_{1:r,1:r}^{\prime}\right);\mathrm{vec}\left( X_{r+1:d_1,1:r}^{\prime}\right);\mathrm{vec}\left( X_{1:r,r+1:d_2}^{\prime}\right);\mathrm{vec}\left( X_{r+1:d_1,r+1:d_2}^{\prime}\right)\right): X^{\prime} \in \mathcal{X}^{\prime} \right\}.
\end{equation}

Then it is not hard to see that the orthogonal complements $\hat{U}_{\perp}, \hat{V}_{\perp}$ of the estimated subspaces can be used to assess the redundancy in the transformed parameter $\boldsymbol{\theta}^*$. Specifically, we have
\begin{equation*}
    \begin{aligned}
       \left\|\boldsymbol{\theta}_{k+1:d_1 d_2}^*\right\|^2  =&\sum_{i>r , j>r} \Theta_{i j}^{\prime 2}=\left\|\hat{U}_{\perp}^{\top}\left(U^* S^* V^{* \top}\right) \hat{V}_{\perp}\right\|_F^2  \leq\left\|\hat{U}_{\perp}^{\top} U^*\right\|_F^2\left\|S^*\right\|^2\left\|\hat{V}_{\perp}^{\top} V^*\right\|_F^2\\ \leq &\left\|\hat{U}_{\perp}^{\top} U^*\right\|_F^2
\left\|\hat{V}_{\perp}^{\top} V^*\right\|_F^2 , 
    \end{aligned}
\end{equation*}
where $U^* \in \mathbb{R}^{d_1 \times r}$, $S^* \in \mathbb{R}^{r \times r}$, and $V^* \in \mathbb{R}^{d_2 \times r}$ are obtained from the truncated SVD of the true parameter matrix $\Theta^* = U^* S^* V^{*\top}$. The above inequality shows that the more accurate the estimated subspaces are, the tail of the transformed parameter vector $\boldsymbol{\theta}^*$ tends to 0, indicating that the transformed parameter $\boldsymbol{\theta}^*$ lies in an almost low-dimensional space. This dimensionality reduction effectively reduces the complexity of the parameter space and lays the foundation for designing an efficient action selection strategy within this almost low-dimensional space.

\subsection{Graph-Based Action Selection Strategy}\label{sec3.2}

\begin{algorithm}[t]
\caption{Graph-LowGLM-UCB} 
\hspace*{0.02in} {\bf Input:} 
$\mathcal{X}_{\text {vec }}^{\prime}, T_2, k, L, \alpha, \delta,\tau, \lambda_2, \lambda_{\perp}, c_\mu, a_\mu, k_\mu, \omega$.
%\hspace*{0.02in} {\bf Output:} %算法的结果输出
%output result
\begin{algorithmic}[1]
\State Let $\Lambda=\operatorname{diag}\left(\lambda_2, \ldots, \lambda_2, \lambda_{\perp}, \ldots, \lambda_{\perp}\right)$, where $\lambda_2$ occupies the first $k$ diagonal entries; $V(c_\mu)=\frac{\Lambda+a_\mu \alpha \widetilde{X}^\top L \widetilde{X}}{c_\mu}$;  $V_1(c_\mu)=V(c_\mu)+\sum_{i=1}^{T_1} \boldsymbol{x}_{h, i} \boldsymbol{x}_{h, i}^{\top}$, where the data $\left\{\boldsymbol{x}_{h, i}\right\}_{i=1}^{T_1}$ collected
in the subspace estimation. \label{algo2_line1}% \State 后写一般语句
\For{$t=1$ to $T_2$} \label{algo2_line2}% For 语句，需要和EndFor对应
\State Compute $\hat{\boldsymbol{\theta}}_t$ according to Equation \eqref{eq7}. \label{algo2_line3}
\State Choose action $\boldsymbol{x}_t=\operatorname{argmax}_{\boldsymbol{x} \in \mathcal{X}_{\text {vec }}^{\prime}} \left\{\mu \left(\langle \hat{\boldsymbol{\theta}},\boldsymbol{x} \rangle \right)+\frac{k_\mu}{c_\mu} e_t \|\boldsymbol{x}\|_{V_t^{-1}(c_\mu)} \right\}$ and receive reward $y_{t}$, where $e_t=\omega \sqrt{\log \frac{\left|V_t(c_\mu)\right|}{|V(c_\mu)| \delta^2}}+\sqrt{c_\mu}(\sqrt{\lambda_2} +\sqrt{\lambda_{\perp}} \tau+1)$. \label{algo2_line4}
\State Update  
$V_{t+1}(c_\mu)=V_t(c_\mu)+\boldsymbol{x}_t \boldsymbol{x}_t^{\top}$. \label{algo2_line5}
\EndFor \label{algo2_line6}
%\State \Return result
\end{algorithmic}
\label{algo2}
\end{algorithm}

In the second stage (lines 7-9) of proposed Algorithm \ref{algo1}, we shall design an efficient action selection strategy named as Graph-LowGLM-UCB for the transformed parameter space, which integrates graph information under the framework of classical upper confidence bound (UCB) algorithm. The main objective of such UCB algorithm is to construct a confidence interval for the expected reward of each candidate action and select the action with the highest upper confidence bound. One can refer to Algorithm \ref{algo2} for specific details.

In the transformed low-dimensional space, the observation model can be equivalently reformulated as $y_t=\mu (\langle \boldsymbol{x}_t,\boldsymbol{\theta}^*\rangle)+\epsilon_t$, where the action $\boldsymbol{x}_t \in \mathcal{X}_{\text {vec }}^{\prime}$ and the noise $\epsilon_t$ is an independent sub-Gaussian variable with parameter $\omega$. The forms of the true transformed parameter $\boldsymbol{\theta}^*$ and the corresponding action set $\mathcal{X}_{\text {vec}}^{\prime}$ have been given in \eqref{eq4} and \eqref{eq6}, respectively. Considering that $\boldsymbol{\theta}^*$ is unknown in practice, it is necessary to provide a good estimate of it.

% Using this prior, we can explicitly separate the informative low-rank part $\boldsymbol{\theta}_{1:k}^*$ from the redundant residual part $\boldsymbol{\theta}_{k+1:d_1d_2}^*$ in the ???parameter estimation process.?? This distinction allows us to better exploit the underlying low-rank structural information and improve the performance of decision-making.

Importantly, the first stage of the proposed Algorithm \ref{algo1} provides prior information that $\|\boldsymbol{\theta}_{k+1:d_1d_2}^*\| \leq \tau$, where $\tau$ is the error bound of the subspace estimation. Using this prior, we can explicitly separate the informative low-rank part $\boldsymbol{\theta}_{1:k}^*$ from the redundant residual part $\boldsymbol{\theta}_{k+1:d_1d_2}^*$ to obtain a better estimation of unknown $\boldsymbol{\theta}^*$. Specifically, we introduce a diagonal matrix $\Lambda = \mathrm{diag}(\lambda_2, \dots, \lambda_2, \lambda_\perp, \dots, \lambda_\perp)$, which imposes different levels of penalization on the two aforementioned parts of the parameter $\boldsymbol{\theta}^*$. The first $k$ diagonal entries are set to $\lambda_2$, while the remaining entries are set to $\lambda_\perp$. Building on this, we further incorporate the previously introduced Laplacian regularization term defined in Equation \eqref{eq1}, leading to the final form of the minimization problem (maximum likelihood with regularizations), which is formulated as \begin{equation}\label{eq7}
\begin{aligned}   \hat{\boldsymbol{\theta}}_t=\arg \min_{\boldsymbol{\theta} \in \mathbb{R}^{d_1 d_2}} \sum_{i=1}^{T_1} &\left[b (\langle \boldsymbol{x}_{h, i},\boldsymbol{\theta}\rangle)-y_{h, i} \langle \boldsymbol{x}_{h, i},\boldsymbol{\theta}\rangle \right]+ \sum_{i=1}^{t-1} \left[b (\langle \boldsymbol{x}_i,\boldsymbol{\theta}\rangle)-y_i \langle \boldsymbol{x}_i,\boldsymbol{\theta}\rangle \right] \\ & +\frac{1}{2}\|\boldsymbol{\theta}\|_{\Lambda}^2+\frac{a_\mu \alpha}{2} \boldsymbol{\theta}^\top \widetilde{X}^\top L \widetilde{X} \boldsymbol{\theta}. 
\end{aligned}
\end{equation}
Here, we reuse the historical data from stage 1, $\left\{\boldsymbol{x}_{h, i}, y_{h, i}\right\}_{i=1}^{T_1}$, to increase the sample size in stage 2, thereby improving the accuracy of parameter estimation $\hat{\boldsymbol{\theta}}_t$.

Based on the above parameter estimation $\hat{\boldsymbol{\theta}}_t$, we select actions according to the UCB principle, as described in line 4 of Algorithm~\ref{algo2}. Specifically, for each candidate action $\boldsymbol{x}$, we construct an optimistic estimate of its reward defined as $\mu \left(\langle \hat{\boldsymbol{\theta}}_t, \boldsymbol{x} \rangle \right) + w_t$, where the first term $\mu \left(\langle \hat{\boldsymbol{\theta}}_t, \boldsymbol{x} \rangle \right)$ denotes the estimated expected reward, capturing the exploitation behavior based on historical observations, while the second term $w_t$ represents the confidence width, quantifying the uncertainty of the estimate. We then select the action with the highest optimistic estimate from the available candidates, that is,
\begin{equation}\label{ucb}
    \boldsymbol{x}_t = \operatorname{argmax}_{\boldsymbol{x} \in \mathcal{X}_{\text{vec}}^{\prime}} \left\{ \mu \left(\langle \hat{\boldsymbol{\theta}}_t, \boldsymbol{x} \rangle \right) + w_t \right\},
\end{equation}
where the explicit form of $w_t$ will be derived in the subsequent theoretical analysis.

\section{Theory}\label{sec4}

% In this section, we first state and discuss the mild assumptions. Next, we derive the error bound for the estimation of the low-rank parameter under coupled graph information and use this result to establish the bound for the estimated parameter subspace, which leads to the regret bound for the first stage. Subsequently, we derive the confidence interval for parameter estimation in the reduced space and further obtain the regret bound for Graph-LowGLM-UCB. Finally, we integrate the above results to establish the regret bound for the GG-ESTT algorithm.

% In this section, we shall present a theoretical analysis in terms of the cumulative regret bound for the proposed algorithm, highlighting the significant role of graph information for improving such a bound. ???More precisely, under mild and general assumptions, this can be done through a stage-wise analysis, while explicitly specifying the form of the confidence interval width $w_t$ defined in Equation \eqref{upper confidence bound}.??

In this section, we shall present a theoretical analysis in terms of the cumulative regret bound for the proposed algorithm, highlighting the significant role of graph information for improving such a bound. To this end, we begin by presenting the key and mild assumptions required for the analysis. We then establish the subspace estimation error bound with graph information. Next, we derive an explicit expression of the confidence width $w_t$ in \eqref{ucb}, which leads to the regret bound for the graph-based action selection strategy, i.e., Algorithm \ref{algo2}. Finally, we combine these results to derive the overall cumulative regret bound of the proposed Algorithm \ref{algo1}. 

\subsection{Assumptions} 

In this subsection, we would like to introduce several necessary assumptions to facilitate subsequent theoretical analysis.

% We have formulated a set of assumptions regarding the distribution of action features, bounded norms, and inverse link function in addressing the research questions of this paper. These assumptions form the fundamental basis for our subsequent theoretical analysis. Additionally, we have provided explanations for the rationale behind these assumptions.

% \begin{assumption}[Restricted Eigenvalue Condition]\label{as1}
%  Let $\mathbf{X}$ denote the matrix where each row is a context vector from an arm. The population Gram matrix $\Sigma=\frac{1}{K} \mathbb{E}\left[\mathbf{X}^T \mathbf{X}\right]$ satisfies that there exists some constant $\alpha_0>0$ such that $\beta^T \Sigma \beta \geq \alpha_0\|\beta\|^2$, for all $\beta \in \mathbb{C}$.
% \end{assumption}

\begin{assumption}[Feature Distribution]\label{as1}
There exists a sampling distribution $\mathbb{D}$ over $\mathcal{X}$ with its associated density $\mathbf{P}=\left(p_{i j}\right): \mathbb{R}^{d_1 \times d_2}$, such that $\forall i,j$, the score function $S(\cdot)$ satisfies $\mathbb{E}\left[S^2(X_{i j}) \right] \leq \gamma$, where the random matrix $X$ is drawn from $\mathbb{D}$, that is, the probability density of $X_{i j}$ is $p_{i j}$. The explicit form of $S(\cdot)$ is given in Definition \ref{s-function-def}.

% The score function is defined as $S^p(x)=-\nabla_x \log (p(x))=-\nabla_x p(x) / p(x)$, $x \in \mathbb{R} $, where $\nabla_x f(x)$ denotes the gradient of the function $f(x)$ with respect to the variable $x$.
\end{assumption}

\begin{assumption}[Bounded Norm]\label{as2}
The true parameter and feature matrices have a bounded norm, that is, $\|\Theta^*\|_F \leq 1, \|X\|_F\leq1$, where $ \forall X \in \mathcal{X}$.
\end{assumption}

\begin{assumption}[The Inverse Link Function]\label{as3}
The inverse link function $\mu(\cdot)$ is continuously differentiable and there exist two constants $c_\mu, k_\mu$ such that $0<c_\mu \leq \mu^{\prime}(x) \leq k_\mu$ for all $|x| \leq 1$.
\end{assumption}

% Assumption \ref{as1} includes various types of distribution, such as sub-Gaussian distributions. 
% Assumption \ref{as2} is a boundedness assumption commonly used in bandits literature, e.g., ~\cite{bastani2020online,lu2021low}. ??Under this assumption, the constant only reflects a multiple of the bound rather than its order, and the constant 1 can be easily achieved by normalization.??

Now we shall state that the above assumptions can be easily satisfied in practice. Specifically, Assumption \ref{as1} only requires the score function to have a finite second moment, which can be met by many common distributions such as Gaussian, exponential, and uniform distributions. Assumption \ref{as2}, a boundedness condition commonly used in the bandits literature (e.g., \cite{bastani2020online,lu2021low}), can be easily satisfied by normalization. Assumption \ref{as3} is a standard assumption in generalized liner bandits~\citep{kang2022efficient,yi2024effective}, which only requires the first derivative of the inverse link function to be bounded. It is easy to check that the common generalized linear models, such as logistic and Poisson regression models, satisfy this condition. 

% Moreover, this assumption implies that the noise $\epsilon_t$ is a sub-Gaussian random variable with parameter \( \omega \triangleq \sqrt{\phi k_\mu} \), where ??$\phi$ is the dispersion parameter in the generalized linear models.??

% Moreover, this assumption implies that the noise $\epsilon_t$ of Equation \eqref{eq5} is sub-gaussian random variable with \( \sqrt{\phi k_\mu} \).

\subsection{Deriving Error Bound for Subspace Estimation with Graph Information}
When applying existing CB analytical framework to derive error bound for parameter estimation, we observe that this framework is typically designed for a single regularization term. However, this work involves two regularization terms capturing
the low-rank structure and graph information, respectively. Inspired by \cite{rao2015collaborative}, we equivalently represent the sum of these two regularization terms as a single atomic norm. To this end, we introduce the following lemma as a theoretical foundation for our analysis.

% Considering the two types of regularizers adopted in this work, and inspired by \cite{rao2015collaborative}, we equivalently express the sum of the two regularization terms as a single atomic norm, that is, the following lemma.

% When conducting a theoretical analysis of the first stage, it is worth noting that the existing theoretical framework related to parameter estimation bound is applicable only to a single regularization term. Therefore, inspired by \cite{rao2015collaborative}, we represent the sum of two regularization terms equivalently to their respective atomic norm. This represents a key innovation in this subsection. Specifically, for actions composed of pairs (user, item), we denote the Laplacian matrix representing user similarity as $L_h$ and the Laplacian matrix representing item similarity as $L_w$. Based on this notation, we introduce the following lemma. 

\begin{lemma}\label{lem1}
Let $L_h$ and $L_w$ denote the Laplacian matrices corresponding to the user and item similarity graphs, respectively, and let $L$ be the Laplacian matrix corresponding to the (user, item) pair similarity graph. Suppose that $L_w = U_w S_w U_w^\top$ and $L_h = U_h S_h U_h^\top$ are the full SVD of $L_w$ and $L_h$, respectively. Define the atomic set as $$\mathcal{A} \triangleq \left\{ \boldsymbol{\omega}_i \boldsymbol{h}_i^\top : \boldsymbol{\omega}_i = A \boldsymbol{u}_i, \boldsymbol{h}_i = B \boldsymbol{v}_i, \|\boldsymbol{u}_i\| = \|\boldsymbol{v}_i\| = 1, A = U_w S_w^{-1/2}, B = U_h S_h^{-1/2} \right\}.$$ Assume that conditions $L_h = \lambda I, 2a_\mu \alpha \left(H^\top \otimes I \right) \widetilde{X}^\top L \widetilde{X} \left(H \otimes I \right) + \lambda I = I \otimes L_w$, are satisfied, where $W = U_\Theta S_\Theta^{\frac{1}{2}}, H=V_\Theta S_\Theta^{\frac{1}{2}}$, $\Theta = U_\Theta S_\Theta U_\Theta^\top$ is the full SVD of $\Theta$, then the weighted atomic norm given in Definition \ref{def5} can be rewritten as $\|\Theta\|_{\mathcal{A}} = \inf_{\Theta} \left\{ \lambda \|\Theta\|_* + a_\mu \alpha \mathrm{tr}\left\{ \Theta^{\top} \widetilde{X}^\top L \widetilde{X}  \Theta \right\} \right\}$.
 
\end{lemma}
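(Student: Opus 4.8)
The plan is to start from the closed-form characterisation of the weighted atomic norm due to \cite{rao2015collaborative} and then specialise it using the two stated conditions. When the basis matrices $P,Q$ of Definition \ref{def5} are orthogonal, this norm admits the variational form $\|\Theta\|_{\mathcal A}=\inf_{\Theta=WH^\top}\tfrac12(\|A^{-1}W\|_F^2+\|B^{-1}H\|_F^2)$, or equivalently $\|\Theta\|_{\mathcal A}=\|A^{-1}\Theta B^{-\top}\|_*$. With the particular choices $A=U_wS_w^{-1/2}$ and $B=U_hS_h^{-1/2}$ one has $(AA^\top)^{-1}=U_wS_wU_w^\top=L_w$ and $(BB^\top)^{-1}=U_hS_hU_h^\top=L_h$, so that $\|A^{-1}W\|_F^2=\mathrm{tr}(W^\top L_wW)$ and $\|B^{-1}H\|_F^2=\mathrm{tr}(H^\top L_hH)$; a minor point here is that, since a graph Laplacian is only positive semidefinite, $S_w$ and $S_h$ should be read as the spectra of slightly regularised Laplacians so that $A,B$ are invertible, as is already the convention in \cite{rao2015collaborative}. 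Hence $\|\Theta\|_{\mathcal A}=\inf_{\Theta=WH^\top}\tfrac12\bigl(\mathrm{tr}(W^\top L_wW)+\mathrm{tr}(H^\top L_hH)\bigr)$.

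The second step is to substitute the two hypotheses. The condition $L_h=\lambda I$ collapses the second trace to $\tfrac{\lambda}{2}\|H\|_F^2$. For the first trace I would vectorise, $\mathrm{tr}(W^\top L_wW)=\mathrm{vec}(W)^\top(I\otimes L_w)\mathrm{vec}(W)$, plug in $I\otimes L_w=2a_\mu\alpha(H^\top\otimes I)\widetilde{X}^\top L\widetilde{X}(H\otimes I)+\lambda I$, and invoke the Kronecker rule $(H\otimes I)\mathrm{vec}(W)=\mathrm{vec}(WH^\top)=\mathrm{vec}(\Theta)$ for the balanced factorisation $W=U_\Theta S_\Theta^{1/2}$, $H=V_\Theta S_\Theta^{1/2}$. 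This turns the first trace into $2a_\mu\alpha\,\mathrm{vec}(\Theta)^\top\widetilde{X}^\top L\widetilde{X}\,\mathrm{vec}(\Theta)+\lambda\|W\|_F^2=2a_\mu\alpha\,\mathrm{tr}(\Theta^\top\widetilde{X}^\top L\widetilde{X}\Theta)+\lambda\|W\|_F^2$. Averaging the two traces and using $\|W\|_F^2=\|H\|_F^2=\mathrm{tr}(S_\Theta)$ for this factorisation gives the value $a_\mu\alpha\,\mathrm{tr}(\Theta^\top\widetilde{X}^\top L\widetilde{X}\Theta)+\lambda\,\mathrm{tr}(S_\Theta)$. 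Since the cross-term no longer depends on the factorisation, the remaining infimum is exactly the variational form $R_n(\Theta)=\|\Theta\|_*=\inf_{\Theta=WH^\top}\tfrac12(\|W\|_F^2+\|H\|_F^2)$ recalled in Section \ref{sec3.1}, and we obtain $\|\Theta\|_{\mathcal A}=\lambda\|\Theta\|_*+a_\mu\alpha\,\mathrm{tr}(\Theta^\top\widetilde{X}^\top L\widetilde{X}\Theta)$, which is the asserted identity (the outer $\inf_\Theta$ in the statement being precisely this infimum over factorisations of $\Theta$).

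The step I expect to be the main obstacle is making the use of the balanced factorisation rigorous, i.e.\ showing that the infimum over \emph{all} factorisations $\Theta=WH^\top$ is attained at $W=U_\Theta S_\Theta^{1/2}$, $H=V_\Theta S_\Theta^{1/2}$, so that the computation above yields an equality rather than merely ``$\leq$''. The delicate feature is that the spectral identity for $I\otimes L_w$ is written in terms of the factor $H$ of that specific factorisation, so it is itself $\Theta$-dependent, and one must check it is consistent with the intrinsic formula $\|\Theta\|_{\mathcal A}=\|A^{-1}\Theta B^{-\top}\|_*=\lambda^{1/2}\|L_w^{1/2}\Theta\|_*$ that follows from $L_h=\lambda I$, as well as that the induced weights $S_w,S_h$ can be matched consistently to $L$, $\widetilde{X}$, $\lambda$, $a_\mu$ and $\alpha$. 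I would carry this out by writing the Lagrangian stationarity conditions for $\min\tfrac12(\mathrm{tr}(W^\top L_wW)+\mathrm{tr}(H^\top L_hH))$ subject to $WH^\top=\Theta$, verifying that the balanced pair solves them under the two hypotheses, and using convexity of the problem after the change of variables $\Theta\mapsto A^{-1}\Theta B^{-\top}$ to conclude global optimality; the remaining bookkeeping---the Laplacian regularisation and the weight-matching---is then routine.
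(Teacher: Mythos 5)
Your proposal is correct and takes essentially the same route as the paper: both arguments rest on the variational characterization $\|\Theta\|_{\mathcal A}=\inf_{\Theta=WH^\top}\tfrac12\bigl(\mathrm{tr}(W^\top L_w W)+\mathrm{tr}(H^\top L_h H)\bigr)$ from Corollary 1 of \cite{rao2015collaborative}, the vectorization identity $\mathrm{tr}(W^\top L_w W)=\mathrm{vec}(W)^\top(I\otimes L_w)\mathrm{vec}(W)$ together with $(H\otimes I)\mathrm{vec}(W)=\mathrm{vec}(WH^\top)$, and the balanced factorization $W=U_\Theta S_\Theta^{1/2}$, $H=V_\Theta S_\Theta^{1/2}$ under the two stated conditions, with your computation simply run from the atomic-norm side rather than from the regularizer side. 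Your closing worry about rigorously justifying that the infimum over factorizations is attained at the balanced pair is a reasonable extra-care point, but the paper's own proof proceeds exactly as in your main computation without that additional verification.
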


% For Laplacian matrices that satisfy $L_h=\lambda I,
% 2a_\mu \alpha \left(H^\top \otimes I \right)\widetilde{X}^\top L \widetilde{X} \left(H \otimes I \right)+\lambda I=I \otimes L_w$, where $\Theta=W H^\top$. Let $L_w=U_w S_w U_w^\top, L_h=U_h S_h U_h^\top$ be the full SVD for $L_w, L_h$, atomic set $\mathcal{A}\triangleq \{\boldsymbol{\omega}_i \boldsymbol{h}_i^\top: \boldsymbol{\omega}_i=A \boldsymbol{u}_i, \boldsymbol{h}_i=B \boldsymbol{v}_i, \|\boldsymbol{u}_i \|=\|\boldsymbol{v}_i \|=1,  A=U_w S_w^{-1 / 2}, B=U_h S_h^{-1 / 2}\}$, then weighted atomic norm
% $\|\Theta\|_{\mathcal{A}}=\inf_{\Theta} \left\{ \lambda\|\Theta\|_*+a_\mu \alpha \mathrm{vec}^\top(\Theta) \widetilde{X}^\top L \widetilde{X}  \mathrm{vec}(\Theta) \right\}$.

% To derive the theoretical bound, we need to equivalently transform the two regularization terms, which involves the following definitions.

% From the above lemma, it can be inferred that the optimization problem equivalent to \eqref{eq2} is 

With the above lemma, the estimator $\widehat{\Theta}_{T_1}$ in Equation \eqref{eq2} can be equivalently obtained by
\begin{equation}\label{eq8}
  \widehat{\Theta}_{T_1}=\arg \min _{\Theta \in \mathbb{R}^{d_1 \times d_2}} \langle\Theta, \Theta\rangle-\frac{2}{T_1} \sum_{i=1}^{T_1}\left\langle \psi_\nu\left(y_i \cdot S\left(X_i\right)\right), \Theta\right\rangle
  +\beta\|\Theta\|_{\mathcal{A}},
\end{equation}
where $\beta$ is the tuning parameter to control the influence of its corresponding regularization term. Then, we can further apply the existing CB analytical framework by separately
handling the loss function and the single regularization term. This leads to the following theoretical bound.

% \frac{1}{T_1} \sum_{i=1}^{T_1}\left[b (\langle X_i,\Theta\rangle)-y_i \langle X_i,\Theta \rangle \right]

% Addressing the optimization problem with a single regularization term mentioned above, we can further apply the corresponding theoretical framework by separately handling the loss function and the regularization term. 

\begin{theorem}[Error Bound for Parameter Estimation with Graph Information]

% For the generalized low-rank matrix model with samples $X_1 \ldots, X_{T_1}$ drawn from $\mathcal{X}$ according to $D$ in Assumption \ref{as1}, and assume Assumption \ref{as2}, \ref{as3} hold, 

% For the generalized low-rank matrix bandit with graph information, assume that the samples $X_1, X_2, \ldots, X_{T_1}$ are drawn from $\mathcal{X}$ and generated according to the distribution $D$ in Assumption \ref{as1}, and assume that Assumptions \ref{as2} and \ref{as3} hold. Then for the solution to the optimization problem \eqref{eq2} with  $\lambda \leq 1$, $\alpha \leq \frac{1}{2 c_\mu \lambda_{\max}^{2}(H)\lambda_{\max}^{2}(X)\lambda_{\max}(L)}$, where $\Theta^*=W H^\top$, and
% $$\nu=\sqrt{2 \log \left(2\left(d_1+d_2\right) / \delta\right) /\left(\left(4 R^2+S_f^2\right) M T_1 \max \{d_1, d_2\}\right)}$$
% $$
% \beta= 4\|A\| \|B\| \sqrt{\frac{2\left(4 R^2+S_f^2\right) M d_1 d_2 \log \left(2\left(d_1+d_2\right) / \delta\right)}{T_1}},
% $$
% with probability at least $1-\delta$ it holds that:
% $$
% \left\|\hat{\Theta}-\mu^* \Theta^*\right\|_F^2 \leq \frac{C_1 \|A\|^2 \|B\|^2 d_1 d_2 M r \log \left(\frac{2\left(d_1+d_2\right)}{\delta}\right)}{T_1},
% $$
% for $C_1=72\left(4 R^2+S_f^2\right) M$ and $\mu^*=\mathbb{E}[\mu^\prime (\langle X, \Theta^* \rangle)]$, where $\|A\|, \|B\| \leq 1$ are constants related to graph information.

Under Assumptions~\ref{as1}-\ref{as3}, and the conditions of Lemma \ref{lem1}. Consider the estimator $\widehat{\Theta}_{T_1}$ in Equation \eqref{eq2} with $\lambda \leq 1$ and $\alpha \leq \frac{1 -\lambda}{4 a_\mu n (n-1)}$, or equivalently, the estimator $\widehat{\Theta}_{T_1}$ in Equation \eqref{eq8} with 
$$\nu = \sqrt{\frac{2 \log \left(2\left(d_1 + d_2\right) / \delta \right)}{\left( 4 \omega^2 + r_{\max}^2 \right) \gamma T_1 \max \{ d_1, d_2 \} }}, 
\beta = 4 \zeta \sqrt{\frac{2 \left(4 \omega^2 + r_{\max}^2 \right) \gamma d_1 d_2 \log \left(2\left(d_1 + d_2\right) / \delta \right)}{T_1}}.
$$ 
Then, with probability at least $1 - \delta$, the estimation error satisfies
$$
\left\|\widehat{\Theta}_{T_1} - \mu^* \Theta^*\right\|_F^2 \leq \frac{c_1 \zeta^2 d_1 d_2 \gamma r \log \left(\frac{2\left(d_1 + d_2\right)}{\delta}\right)}{T_1},
$$
where $c_1 = 36 \left( 4 \omega^2 + r_{\max}^2 \right) $, $r_{\max}=|\mu(0)|+k_\mu$, $\mu^* = \mathbb{E}[\mu^\prime (\langle X, \Theta^* \rangle)]$, and the constant $\zeta \in (0,1]$ is related to the graph information.
\label{the1}
\end{theorem}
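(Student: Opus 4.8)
The plan is to run the standard analysis of a penalized $M$-estimator with a decomposable regularizer, specialized to the robust quadratic loss of \cite{plan2016generalized,yang2017high,kang2022efficient}. First I would invoke Lemma~\ref{lem1}: the conditions $\lambda\le 1$ and $\alpha\le\frac{1-\lambda}{4a_\mu n(n-1)}$ are exactly what is needed for the matrices appearing in its hypotheses to be well defined, so that the two penalties $\lambda R_n(\Theta)+\alpha R_L(\Theta)$ collapse into a single weighted atomic norm $\beta\|\Theta\|_{\mathcal A}$ and the estimator \eqref{eq2} coincides with \eqref{eq8}. This reduces everything to a single-regularizer problem to which the existing contextual-bandit framework applies.

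Next comes the basic inequality. Writing $L_{T_1}(\Theta)=\|\Theta\|_F^2-2\langle M_{T_1},\Theta\rangle$ with $M_{T_1}\triangleq\frac{1}{T_1}\sum_{i=1}^{T_1}\psi_\nu(y_i S(X_i))$, and setting $\Delta\triangleq\widehat{\Theta}_{T_1}-\mu^*\Theta^*$ and $E\triangleq M_{T_1}-\mu^*\Theta^*$, optimality of $\widehat{\Theta}_{T_1}$ against the feasible point $\mu^*\Theta^*$ gives
\[\|\Delta\|_F^2\le 2\langle E,\Delta\rangle+\beta\big(\|\mu^*\Theta^*\|_{\mathcal A}-\|\widehat{\Theta}_{T_1}\|_{\mathcal A}\big)\le 2\|E\|_{\mathcal A^*}\|\Delta\|_{\mathcal A}+\beta\big(\|\mu^*\Theta^*\|_{\mathcal A}-\|\widehat{\Theta}_{T_1}\|_{\mathcal A}\big),\]
where $\|\cdot\|_{\mathcal A^*}$ denotes the dual of $\|\cdot\|_{\mathcal A}$. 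The crux is to control $\|E\|_{\mathcal A^*}$. The score-function Stein identity (Definition~\ref{s-function-def}) yields $\mathbb E[y_i S(X_i)]=\mathbb E[\mu'(\langle X_i,\Theta^*\rangle)]\,\Theta^*=\mu^*\Theta^*$, so $E$ is an average of zero-mean terms plus a small truncation bias; the functional $\psi_\nu$ of Definition~\ref{def3} (Minsker's matrix truncation, \cite{minsker2018sub}) introduces only an $\mathcal{O}(\nu)$ bias while rendering each summand spectrally bounded. Assumption~\ref{as1} bounds the per-entry second moment by $\gamma$, and Assumptions~\ref{as2}--\ref{as3} give $|\mu(\langle X,\Theta^*\rangle)|\le|\mu(0)|+k_\mu=r_{\max}$ together with a sub-Gaussian noise parameter $\omega$, so the effective variance proxy of the summands is of order $(4\omega^2+r_{\max}^2)\gamma$. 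A Bernstein-type tail bound for sums of truncated random matrices then controls the spectral norm of $E$ at order $\sqrt{(4\omega^2+r_{\max}^2)\gamma\max\{d_1,d_2\}\log(2(d_1+d_2)/\delta)/T_1}$; converting to the dual atomic norm (which costs a $\sqrt{d_1 d_2}$ factor) and balancing the truncation bias against the fluctuation pins down the stated $\nu$ and $\beta$ and yields $\|E\|_{\mathcal A^*}\le\beta/2$ with probability at least $1-\delta$.

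On this event, decomposability of $\|\cdot\|_{\mathcal A}$ over the rank-$r$ model subspace $\mathcal M$ of $\mu^*\Theta^*$ forces $\Delta$ into the restricted cone $\|\Delta_{\mathcal M^\perp}\|_{\mathcal A}\le 3\|\Delta_{\mathcal M}\|_{\mathcal A}$, so $\|\Delta\|_{\mathcal A}\le 4\,\Psi(\mathcal M)\,\|\Delta\|_F$ with $\Psi(\mathcal M)\triangleq\sup_{0\ne Z\in\mathcal M}\|Z\|_{\mathcal A}/\|Z\|_F$. Substituting into the basic inequality and cancelling one factor of $\|\Delta\|_F$ gives $\|\Delta\|_F^2=\mathcal{O}(\beta^2\Psi(\mathcal M)^2)$. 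It then remains to bound $\Psi(\mathcal M)$ for the weighted atomic set $\mathcal A$ with $A=U_w S_w^{-1/2}$, $B=U_h S_h^{-1/2}$: on rank-$r$ matrices this equals $\sqrt r$ times a factor governed by the Laplacian spectra $S_w,S_h$, i.e., by how much graph structure has been absorbed into the norm; this factor is at most $1$ and decreases as the graphs become more informative, and it is exactly the constant $\zeta\in(0,1]$ in the statement. Combining $\|\Delta\|_F^2=\mathcal{O}(\beta^2 r\zeta^2)$ with the chosen $\beta$ produces the claimed bound with $c_1=36(4\omega^2+r_{\max}^2)$.

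I expect the main obstacle to be the robust matrix concentration of the third step together with the identification of $\zeta$ in the last step. The former requires carefully pairing the score-function Stein identity with Minsker's sub-exponential matrix tail while propagating the GLM nonlinearity (through $r_{\max}$) and the sub-Gaussian noise (through $\omega$) first through the truncation $\psi_\nu$ and then through the passage from spectral norm to dual atomic norm. The latter is the genuinely new ingredient: one must verify that folding the dual-graph Laplacians into the atomic norm can only shrink the subspace compatibility constant relative to the plain nuclear-norm estimator, which is precisely what makes the resulting bound strictly tighter whenever graph information is present.
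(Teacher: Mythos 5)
Your overall skeleton does match the paper's proof: Lemma~\ref{lem1} to collapse the two penalties into the weighted atomic norm, the basic inequality from optimality of $\widehat{\Theta}_{T_1}$ against $\mu^*\Theta^*$ (the loss is exactly quadratic, so no Taylor remainder issues), a dual-norm bound on the gradient/noise term obtained from the Stein identity plus Minsker's truncation (Lemma~\ref{lem5}), and a decomposability-plus-rank argument giving a $\sqrt{r}$-type compatibility factor (the paper does this via the exact identity $\|Z\|_{\mathcal{A}}=\|A^{-1}ZB^{-\top}\|_*$ and the explicit decomposition of Lemma~\ref{lem22}, rather than the generic Negahban--Wainwright cone, which is why it gets the clean constant $36$).

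However, there is a genuine gap in your accounting of the graph factor $\zeta$ and of the dual-norm step, which is precisely the new ingredient this theorem needs. In the paper, the dual of the weighted atomic norm is $\|Z\|_{\mathcal{A}}^*=\|A^\top Z B\|$ (Lemma~\ref{lem4}); since the spectral-norm concentration of $\nabla L(\mu^*\Theta^*)$ already carries the full $\sqrt{d_1 d_2\gamma(4\omega^2+r_{\max}^2)\log(\cdot)/T_1}$ scaling, passing to the dual atomic norm \emph{contracts} by $\|A\|\|B\|=\zeta$; this contraction is the sole source of $\zeta$ in the bound and is exactly what certifies that $\beta=4\zeta\sqrt{\cdots}$ dominates $2\|\nabla L(\mu^*\Theta^*)\|_{\mathcal{A}}^*$ (Lemma~\ref{lem3}). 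Your version has this step backwards: you claim the spectral bound scales as $\sqrt{\max\{d_1,d_2\}}$ and that the conversion to the dual atomic norm ``costs'' an extra $\sqrt{d_1d_2}$, which neither matches the stated $\beta$ nor the actual dual-norm formula. On the compatibility side, the quantity that must be controlled is $\|A^{-1}\|\|B^{-1}\|$, not $\zeta=\|A\|\|B\|$: the conditions $\lambda\le1$ and $\alpha\le\frac{1-\lambda}{4a_\mu n(n-1)}$ are used exactly there (Lemma~\ref{lem6}) to ensure $\|A^{-1}\|\|B^{-1}\|\le1$, so that $\|\Lambda\|_F\le\|\widehat{\Theta}_{T_1}-\mu^*\Theta^*\|_F$ and the compatibility factor is just $\sqrt{2r}$ — these conditions are not merely ``well-definedness'' hypotheses for Lemma~\ref{lem1}, as you suggest. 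If, as you propose, $\zeta$ were instead absorbed into the compatibility constant $\Psi(\mathcal{M})$ while keeping $\beta\propto\zeta$, the final bound would scale as $\zeta^4$ rather than the claimed $\zeta^2$ (or, if $\zeta$ is dropped from the gradient bound, the stated $\beta$ could no longer be verified to dominate the dual norm of the noise). Fixing the proposal requires relocating $\zeta$ to the dual-norm contraction and using the $\lambda,\alpha$ conditions to bound $\|A^{-1}\|\|B^{-1}\|$ by $1$ in the cone/compatibility step.
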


Building on the above parameter estimation result, we further perform the full SVD on the estimated parameter as $\widehat{\Theta}_{T_1} = (\hat{U}, \hat{U}_{\perp}) \hat{S} (\hat{V}, \hat{V}_{\perp})^\top$ to quantify the estimation error of the low-rank parameter subspace. 

% The detailed analysis is given below.

\begin{corollary}[Error Bound for Subspaces Estimation with Graph Information] 

Suppose $\widehat{\Theta}_{T_1}$ is obtained from Equation~\eqref{eq2} or Equation~\eqref{eq8} as an estimate of the true parameter $\Theta^*$. The matrices $U^*$ and $V^*$ are obtained from the truncated SVD of $\Theta^* = \hat{U^*}S^* V^{*\top}$. Under the conditions of Theorem~\ref{the1}, with probability at least $1 - \delta$, we have $$  \left\|\hat{U}_{\perp}^\top U^*\right\|_F\left\|\hat{V}_{\perp}^\top V^*\right\|_F \leq \frac{\left\|\mu^* \Theta^*-\hat{\Theta}\right\|_F^2}{c_r^2} \leq \frac{c_1 \zeta^2 d_1 d_2 \gamma r \log \left(\frac{2\left(d_1+d_2\right)}{\delta}\right)}{T_1 c_r^2},$$ 
where $c_r>0$ denotes the lower bound of the $r$-th singular value of $\Theta^*$ and $c_1$ represents some constant.

\label{cor1}
\end{corollary}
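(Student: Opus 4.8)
The plan is to split Corollary~\ref{cor1} into its two inequalities. The rightmost one is nothing but Theorem~\ref{the1}: on the event of probability at least $1-\delta$ guaranteed there, $\|\widehat{\Theta}_{T_1}-\mu^*\Theta^*\|_F^2 \le c_1\zeta^2 d_1 d_2\gamma r\log(2(d_1+d_2)/\delta)/T_1$. So everything reduces to the leftmost inequality, which is deterministic given that event: a singular-subspace perturbation (Wedin-type) bound. Writing $E\triangleq\widehat{\Theta}_{T_1}-\mu^*\Theta^*$, the goal is to show that $\|\hat{U}_{\perp}^\top U^*\|_F\,\|\hat{V}_{\perp}^\top V^*\|_F$ is controlled by $\|E\|_F^2/c_r^2$ up to an absolute constant (and a $\mu^*$ factor, which Assumption~\ref{as3} pins between $c_\mu$ and $k_\mu$ since $|\langle X,\Theta^*\rangle|\le\|X\|_F\|\Theta^*\|_F\le 1$ by Assumption~\ref{as2}).

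First I would bound each factor separately. From the full SVD $\widehat{\Theta}_{T_1}=(\hat{U},\hat{U}_{\perp})\hat{S}(\hat{V},\hat{V}_{\perp})^\top$ one reads off $\hat{U}_{\perp}^\top\widehat{\Theta}_{T_1}=\hat{\Sigma}_{\perp}\hat{V}_{\perp}^\top$, where $\hat{\Sigma}_{\perp}$ collects the trailing singular values; since $\mu^*\Theta^*$ has rank at most $r$, the Eckart--Young theorem gives $\|\hat{\Sigma}_{\perp}\|_F\le\|\widehat{\Theta}_{T_1}-\mu^*\Theta^*\|_F=\|E\|_F$ (the best rank-$r$ truncation of $\widehat{\Theta}_{T_1}$ is at least as good a rank-$r$ approximation as $\mu^*\Theta^*$). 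Then $\mu^*\hat{U}_{\perp}^\top\Theta^*=\hat{U}_{\perp}^\top\widehat{\Theta}_{T_1}-\hat{U}_{\perp}^\top E=\hat{\Sigma}_{\perp}\hat{V}_{\perp}^\top-\hat{U}_{\perp}^\top E$, so $\|\hat{U}_{\perp}^\top\Theta^*\|_F\le(\mu^*)^{-1}(\|\hat{\Sigma}_{\perp}\|_F+\|E\|_F)\le 2(\mu^*)^{-1}\|E\|_F$, using that left/right multiplication by $\hat{U}_{\perp},\hat{V}_{\perp}$ (orthonormal columns) does not increase the Frobenius norm. On the other hand $\hat{U}_{\perp}^\top\Theta^*=(\hat{U}_{\perp}^\top U^*)S^*V^{*\top}$, and because $V^*$ has orthonormal columns and $S^*$ is diagonal with every entry at least $c_r$, $\|\hat{U}_{\perp}^\top\Theta^*\|_F=\|(\hat{U}_{\perp}^\top U^*)S^*\|_F\ge c_r\|\hat{U}_{\perp}^\top U^*\|_F$. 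Combining gives $\|\hat{U}_{\perp}^\top U^*\|_F\le 2\|E\|_F/(c_r\mu^*)$; running the identical argument on $\Theta^{*\top}=V^*S^*U^{*\top}$, now with $\hat{V}_{\perp}^\top\widehat{\Theta}_{T_1}^\top=\hat{\Sigma}_{\perp}^\top\hat{U}_{\perp}^\top$, gives $\|\hat{V}_{\perp}^\top V^*\|_F\le 2\|E\|_F/(c_r\mu^*)$.

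Multiplying the two, $\|\hat{U}_{\perp}^\top U^*\|_F\,\|\hat{V}_{\perp}^\top V^*\|_F\le 4\|E\|_F^2/(c_r\mu^*)^2$, which has the asserted form $\|E\|_F^2/c_r^2$ once the harmless factor $4/(\mu^*)^2$ is absorbed; substituting the bound on $\|E\|_F^2$ from Theorem~\ref{the1} then yields the explicit $T_1^{-1}$ rate, and in particular certifies the tail radius $\|\boldsymbol{\theta}^*_{k+1:d_1 d_2}\|\le\|\hat{U}_{\perp}^\top U^*\|_F\|\hat{V}_{\perp}^\top V^*\|_F\le\tau$ used as prior information in Algorithm~\ref{algo2}. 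The main obstacle is precisely this perturbation step, and the three points needing care are: (i) the regularized estimator is biased, targeting $\mu^*\Theta^*$ rather than $\Theta^*$, so the scalar $\mu^*$ must be tracked and then controlled through Assumption~\ref{as3}; (ii) a Frobenius-norm control of the trailing block $\hat{\Sigma}_{\perp}$ is needed (Eckart--Young, not a bare Weyl inequality); and (iii) it is the \emph{product} of the two subspace errors, not their sum, that must be extracted, since this quadratic-in-$\|E\|_F$ scaling is what makes $\tau$ decay at rate $T_1^{-1}$ and is relied upon in the downstream regret analysis of the transformed low-dimensional problem.
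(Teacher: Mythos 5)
Your argument is sound and delivers the corollary's content, but it is not the route the paper takes: the paper gives no standalone proof of Corollary~\ref{cor1}, treating it as Theorem~\ref{the1} combined with the standard Wedin/Stewart--Sun $\sin\Theta$ subspace-perturbation bound inherited from the low-rank bandit analyses it builds on, where each factor $\|\hat{U}_{\perp}^\top U^*\|_F$ and $\|\hat{V}_{\perp}^\top V^*\|_F$ is bounded by $\|E\|_F/\sigma_r$ and the two are then multiplied. You instead derive the subspace bound from scratch: reading $\hat{U}_{\perp}^\top\widehat{\Theta}_{T_1}=\hat{S}_{\perp}\hat{V}_{\perp}^\top$ off the full SVD, controlling the trailing block by Eckart--Young against the rank-$r$ target $\mu^*\Theta^*$, and sandwiching $\|\hat{U}_{\perp}^\top\Theta^*\|_F$ between $c_r\|\hat{U}_{\perp}^\top U^*\|_F$ and $2\|E\|_F/\mu^*$ (and symmetrically for $V$). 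This is a legitimate, self-contained alternative that avoids any singular-value-gap condition or black-box perturbation theorem, and you are in fact more careful than the stated corollary about the $\mu^*$-bias. The one quantitative caveat is that your chain yields $\|\hat{U}_{\perp}^\top U^*\|_F\|\hat{V}_{\perp}^\top V^*\|_F\le 4\|E\|_F^2/(\mu^* c_r)^2$, i.e. the displayed middle inequality only up to the factor $4/(\mu^*)^2\le 4/c_\mu^2$; since this is a constant under Assumption~\ref{as3}, the final $T_1^{-1}$ rate (with $c_1$ rescaled) and the downstream choice of $\tau$ in Theorem~\ref{the5} are unaffected, but the literal constant-$1$ inequality of the corollary is not reproduced --- recovering it (to the extent it is exact at all, given that $(\mu^*)^2$ should arguably appear in the denominator) requires the sharper $\sin\Theta$ theorem rather than your triangle-inequality step.
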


Based on the above result, we achieve a more accurate estimation of the low-rank subspace. Compared to the error bound in \cite{kang2022efficient} that ignores the graph information, our bound incorporates a factor $\zeta$ smaller than 1 in the numerator, which leads to a significant reduction in estimation error. This improvement highlights the importance of integrating graph information into the low-rank subspace estimation process.

% Suppose we compute $\widehat{\Theta}_{T_1}$ by Equation \eqref{eq2} or \eqref{eq8} as an estimation of the matrix $\Theta^*$. After stage 1 satisfying the conditions of Theorem \ref{the1}, we have, with probability at least $1-\delta$,
% $$
% \left\|\hat{U}_{\perp}^\top U^*\right\|_F\left\|\hat{V}_{\perp}^\top V^*\right\|_F \leq \frac{\left\|\mu^* \Theta^*-\hat{\Theta}\right\|_F^2}{c_r^2} \leq \frac{c_1 \zeta^2 d_1 d_2 \gamma r \log \left(\frac{2\left(d_1+d_2\right)}{\delta}\right)}{T_1 c_r^2},
% $$
% where $c_r>0$ denotes the lower bound of the $r$-th singular value of $\Theta^*$ and $c_1$ represents some constant.

% The proof of Theorem \ref{the1} involves unifying the two regularization terms into the atomic norm and processing the loss function with a Taylor expansion. More importantly, it further clarifies the additional factor \( \zeta \in (0,1]  \). For specific details, please refer to the Appendix \ref{appendixb}. Even without introducing additional assumptions, this theorem demonstrates a better bound compared to the one derived by \cite{kang2022efficient}, mainly due to the extensive utilization of graph information.

\subsection{Analyzing Regret Bound for Graph-Based Action Selection Strategy} 

% In the theoretical analysis of action selection using the UCB strategy with graph information, two critical factors stand out: the length of the confidence interval for the expected reward and the upper bound of the determinant ratio. However, incorporating graph information has transformed the involved diagonal matrix into positive definite matrix, which in turn has added significant complexity and challenge to the proof. In the following section, we elaborate on the corresponding theorems.

% In the theoretical analysis of arm selection based on the UCB strategies with graph information, there are two key points: one is the length of the expected reward confidence interval, and the other is the upper bound of the determinant ratio. However, the incorporation of graph information has resulted in the involved diagonal matrices becoming positive definite matrices, which has also added complexity and challenge to the proof. In the following, we will elucidate the corresponding theorems. %and highlight the difficulties in the proof.

In analyzing the cumulative regret of an action selection strategy, it is essential to first explicitly characterize the confidence width of the estimated reward.

\begin{theorem}[The Confidence Width]

Let $ V(s) = \frac{\Lambda + a_\mu \alpha \widetilde{X}^\top L \widetilde{X}}{s}, V_t(s) = V(s) + \sum_{i=1}^{T_1} \boldsymbol{x}_{h,i} \boldsymbol{x}_{h,i}^\top + \sum_{k=1}^{t-1} \boldsymbol{x}_k \boldsymbol{x}_k^\top$. Assume that $\|\boldsymbol{\theta}^*\| \leq 1$ and $\|\boldsymbol{\theta}_{k+1:d_1d_2}^*\| \leq \tau$. Then, under Assumptions \ref{as1}-\ref{as3}, with probability at least $1 - \delta$, for all $t \geq 0$ and any action $\boldsymbol{x} \in \mathcal{X}_{\text{vec}}^{\prime}$, we have $ \left| \mu\left( \langle \boldsymbol{x}, \boldsymbol{\theta}^* \rangle \right) - \mu\left( \langle \boldsymbol{x}, \hat{\boldsymbol{\theta}}_t \rangle \right) \right| \leq w_t$, where $ w_t =\frac{k_\mu}{c_\mu} \|\boldsymbol{x}\|_{V_t^{-1}(c_\mu)} \left[\omega \sqrt{ \log \left( \frac{ |V_t(c_\mu)| }{ |V(c_\mu)| \delta^2 } \right) } +  \sqrt{c_\mu} \left( \sqrt{\lambda_2} + \sqrt{\lambda_\perp} \tau + 1 \right)\right]   
$. 

\label{the2}
\end{theorem}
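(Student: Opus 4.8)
The plan is to follow the standard self-normalized analysis for generalized linear bandits (as in the LowGLM-UCB line of work), but carefully bookkeeping the extra Laplacian curvature term $a_\mu\alpha\widetilde X^\top L\widetilde X$ and the block-structured ridge $\Lambda$. First I would pass from the reward scale to the linear scale: by Assumption~\ref{as3} and the mean value theorem, $|\mu(\langle\boldsymbol x,\boldsymbol\theta^*\rangle)-\mu(\langle\boldsymbol x,\hat{\boldsymbol\theta}_t\rangle)|\le k_\mu\,|\langle\boldsymbol x,\boldsymbol\theta^*-\hat{\boldsymbol\theta}_t\rangle|\le k_\mu\,\|\boldsymbol x\|_{V_t^{-1}(c_\mu)}\,\|\boldsymbol\theta^*-\hat{\boldsymbol\theta}_t\|_{V_t(c_\mu)}$ by Cauchy--Schwarz in the $V_t(c_\mu)$ geometry. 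So everything reduces to bounding the weighted estimation error $\|\boldsymbol\theta^*-\hat{\boldsymbol\theta}_t\|_{V_t(c_\mu)}$ by $\frac{1}{\sqrt{c_\mu}}\big[\omega\sqrt{\log(|V_t(c_\mu)|/(|V(c_\mu)|\delta^2))}+\sqrt{c_\mu}(\sqrt{\lambda_2}+\sqrt{\lambda_\perp}\tau+1)\big]$, which then matches $w_t$ after dividing by $c_\mu$ inside the norm scaling.

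Next I would write the first-order optimality condition for the regularized MLE \eqref{eq7}. Let $g_t(\boldsymbol\theta)=\sum_{i}\mu(\langle\boldsymbol x_i,\boldsymbol\theta\rangle)\boldsymbol x_i$ (summing over both the reused stage-1 data and the stage-2 data) be the gradient of the log-partition part, and let $G=\Lambda+a_\mu\alpha\widetilde X^\top L\widetilde X$ be the combined regularization matrix, so $V_t(c_\mu)=\tfrac{1}{c_\mu}G+\sum_i\boldsymbol x_i\boldsymbol x_i^\top$. Stationarity gives $g_t(\hat{\boldsymbol\theta}_t)+G\hat{\boldsymbol\theta}_t=\sum_i y_i\boldsymbol x_i$. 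Subtracting the same expression evaluated at $\boldsymbol\theta^*$ and using the noise decomposition $y_i=\mu(\langle\boldsymbol x_i,\boldsymbol\theta^*\rangle)+\epsilon_i$, I get $g_t(\hat{\boldsymbol\theta}_t)-g_t(\boldsymbol\theta^*)+G(\hat{\boldsymbol\theta}_t-\boldsymbol\theta^*)=\sum_i\epsilon_i\boldsymbol x_i-G\boldsymbol\theta^*$. Now invoke the lower-bound-on-curvature trick: $g_t(\hat{\boldsymbol\theta}_t)-g_t(\boldsymbol\theta^*)=\big(\int_0^1\nabla g_t(\boldsymbol\theta^*+s(\hat{\boldsymbol\theta}_t-\boldsymbol\theta^*))\,ds\big)(\hat{\boldsymbol\theta}_t-\boldsymbol\theta^*)=:M_t(\hat{\boldsymbol\theta}_t-\boldsymbol\theta^*)$ with $M_t=\sum_i\tilde\mu_i\boldsymbol x_i\boldsymbol x_i^\top\succeq c_\mu\sum_i\boldsymbol x_i\boldsymbol x_i^\top$ on the segment (all iterates stay in the unit ball by Assumption~\ref{as2}, so Assumption~\ref{as3} applies). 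Hence $(M_t+G)(\hat{\boldsymbol\theta}_t-\boldsymbol\theta^*)=\sum_i\epsilon_i\boldsymbol x_i-G\boldsymbol\theta^*$, and since $M_t+G\succeq c_\mu V_t(c_\mu)$, I can bound $\|\hat{\boldsymbol\theta}_t-\boldsymbol\theta^*\|_{V_t(c_\mu)}\le\frac{1}{c_\mu}\|\sum_i\epsilon_i\boldsymbol x_i-G\boldsymbol\theta^*\|_{V_t^{-1}(c_\mu)}\le\frac{1}{c_\mu}\big(\|\sum_i\epsilon_i\boldsymbol x_i\|_{V_t^{-1}(c_\mu)}+\|G\boldsymbol\theta^*\|_{V_t^{-1}(c_\mu)}\big)$.

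The two remaining pieces are then controlled separately. For the martingale term $\|\sum_i\epsilon_i\boldsymbol x_i\|_{V_t^{-1}(c_\mu)}$, I apply the self-normalized concentration inequality for vector-valued martingales (Abbasi-Yadkori-style), with the initial matrix being $V(c_\mu)=\tfrac{1}{c_\mu}G$: with probability $1-\delta$, uniformly in $t$, this is at most $\omega\sqrt{\log(|V_t(c_\mu)|/(|V(c_\mu)|\delta^2))}$, which produces exactly the determinant-ratio term in $w_t$. For the bias term $\|G\boldsymbol\theta^*\|_{V_t^{-1}(c_\mu)}$, I use $V_t(c_\mu)\succeq V(c_\mu)=\tfrac{1}{c_\mu}G$, so $\|G\boldsymbol\theta^*\|_{V_t^{-1}(c_\mu)}\le\|G\boldsymbol\theta^*\|_{V(c_\mu)^{-1}}=\sqrt{c_\mu}\,\|G\boldsymbol\theta^*\|_{G^{-1}}=\sqrt{c_\mu}\,\|\boldsymbol\theta^*\|_G$; then I split $G=\Lambda+a_\mu\alpha\widetilde X^\top L\widetilde X$ and use $\|\boldsymbol\theta^*\|_\Lambda\le\sqrt{\lambda_2}\,\|\boldsymbol\theta^*_{1:k}\|+\sqrt{\lambda_\perp}\,\|\boldsymbol\theta^*_{k+1:d_1d_2}\|\le\sqrt{\lambda_2}+\sqrt{\lambda_\perp}\tau$ (by $\|\boldsymbol\theta^*\|\le1$ and the subspace-redundancy bound $\|\boldsymbol\theta^*_{k+1:d_1d_2}\|\le\tau$ from Corollary~\ref{cor1}), while the Laplacian piece contributes the trailing $+1$ after being absorbed via Assumption~\ref{as2}-type boundedness of $\widetilde X$ and $\Theta^*$ (this is where the Laplacian smoothness $R_L(\Theta^*)$ being $O(1)$ is used). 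Collecting the martingale term and the $\sqrt{c_\mu}(\sqrt{\lambda_2}+\sqrt{\lambda_\perp}\tau+1)$ bias term and reinserting the $k_\mu$ and $\|\boldsymbol x\|_{V_t^{-1}(c_\mu)}$ factors yields the claimed $w_t$. The main obstacle I anticipate is the bias term: handling $\|G\boldsymbol\theta^*\|_{V_t^{-1}(c_\mu)}$ cleanly requires the Laplacian block of $G$ to interact benignly with $\boldsymbol\theta^*$ — one must verify that $a_\mu\alpha\,\boldsymbol\theta^{*\top}\widetilde X^\top L\widetilde X\,\boldsymbol\theta^*=\alpha R_L(\Theta^*)$ is bounded by a constant under the stated parameter range $\alpha\le\frac{1-\lambda}{4a_\mu n(n-1)}$, so that it can be folded into the "$+1$" without inflating the confidence width; getting the constants to line up here, together with ensuring the self-normalized bound is applied with the correct (non-identity, block-plus-Laplacian) regularizer $V(c_\mu)$, is the delicate part.
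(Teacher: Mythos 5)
Your proposal is correct and follows essentially the same route as the paper's proof: a $k_\mu$-Lipschitz reduction, an integrated-Hessian/mean-value argument with the curvature bound $M_t+G\succeq c_\mu V_t(c_\mu)$, the self-normalized martingale inequality with initial matrix $V(c_\mu)$ for the noise term, and the monotonicity $V_t^{-1}(c_\mu)\preceq V^{-1}(c_\mu)$ to reduce the bias term to $\sqrt{c_\mu}\,\|\boldsymbol{\theta}^*\|_{\Lambda+a_\mu\alpha\widetilde{X}^\top L\widetilde{X}}$ (the paper phrases steps (i)–(ii) through $G_t^{-1}\preceq V_t^{-1}(c_\mu)/c_\mu$ rather than your direct $V_t$-weighted Cauchy--Schwarz, but this is only cosmetic). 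The one step you flag as delicate is resolved exactly as you anticipate: the paper uses $\alpha\le\frac{1-\lambda}{2a_\mu\sigma_{\max}^2(\widetilde{X})\sigma_{\max}(L)}$ (relaxed to $\frac{1-\lambda}{4a_\mu n(n-1)}$) together with $\sigma_{\max}(\widetilde{X}^\top L\widetilde{X})\le\sigma_{\max}^2(\widetilde{X})\sigma_{\max}(L)$ to bound the Laplacian contribution by $1$.
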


Building on the above theorem, we further address the term $\log \left( \frac{ |V_t(c_\mu)| }{ |V(c_\mu)| } \right) $. Due to the incorporation of graph information, the matrix $V(c_\mu)$ is no longer diagonal as in Lemma 2 of \cite{jun2019bilinear}, making their result inapplicable. To address this issue, we extend the result to the general positive definite matrix and derive the regret by carefully choosing $\lambda_\perp$. 

\begin{theorem}[The Regret of Graph-Based Action Selection]
Under Assumptions \ref{as1}-\ref{as3}, the regret of Graph-LowGLM-UCB with $\lambda_{\perp}=\frac{c_\mu T}{k \log (1+\frac{c_\mu T}{\lambda_2})}$ is $\tilde{\mathcal{O}}\left( \omega k \sqrt{T} +  \tau T\right)$, with probability at least $1-\delta$.
\label{the4}
\end{theorem}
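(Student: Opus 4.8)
\textbf{Proof proposal for Theorem \ref{the4}.}

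The plan is to decompose the cumulative regret of Graph-LowGLM-UCB into two parts mirroring the two phases of Algorithm~\ref{algo1}: the $T_1$ rounds of pure exploration in phase~1, which trivially contribute at most $\mathcal{O}(T_1)$ since each per-round regret is bounded by a constant under Assumptions~\ref{as2}--\ref{as3}; and the $T_2 = T - T_1$ rounds of UCB-based selection in phase~2. For phase~2, I would start from the standard optimism argument: on the high-probability event of Theorem~\ref{the2}, the instantaneous regret at round $t$ is bounded by $2w_t = \frac{2k_\mu}{c_\mu}\|\boldsymbol{x}_t\|_{V_t^{-1}(c_\mu)}e_t$ with $e_t = \omega\sqrt{\log(|V_t(c_\mu)|/(|V(c_\mu)|\delta^2))} + \sqrt{c_\mu}(\sqrt{\lambda_2}+\sqrt{\lambda_\perp}\tau+1)$. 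Summing over $t$ and applying Cauchy--Schwarz gives $\sum_t 2w_t \le \frac{2k_\mu}{c_\mu}e_{T_2}\sqrt{T_2}\sqrt{\sum_{t=1}^{T_2}\|\boldsymbol{x}_t\|^2_{V_t^{-1}(c_\mu)}}$, so the crux is to control the elliptical-potential sum $\sum_t \|\boldsymbol{x}_t\|^2_{V_t^{-1}(c_\mu)}$ and the log-determinant ratio appearing in $e_{T_2}$.

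The key technical step, as flagged in the paragraph preceding the theorem, is that $V(c_\mu) = (\Lambda + a_\mu\alpha\widetilde{X}^\top L\widetilde{X})/c_\mu$ is no longer diagonal, so Lemma~2 of \cite{jun2019bilinear} does not apply directly. I would extend the elliptical potential lemma to a general positive definite $V(c_\mu)$: by the standard determinant-trace inequality, $\sum_{t=1}^{T_2}\|\boldsymbol{x}_t\|^2_{V_t^{-1}(c_\mu)} \le 2\log\frac{|V_{T_2+1}(c_\mu)|}{|V(c_\mu)|}$ (using $\|\boldsymbol{x}_t\|\le 1$ so that each term is at most $1$, via $u \le 2\log(1+u)$ for $u\in[0,1]$), and then bound $\log\frac{|V_{T_2+1}(c_\mu)|}{|V(c_\mu)|}$ by splitting the coordinates into the informative block (the first $k$) and the redundant block. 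On the informative block, the eigenvalues of $V_{T_2+1}(c_\mu)$ grow like $\lambda_2/c_\mu + T$, contributing a factor of order $k\log(1 + c_\mu T/\lambda_2)$; on the redundant block of dimension $d_1d_2 - k$, each eigenvalue is at least $\lambda_\perp/c_\mu$ and at most $\lambda_\perp/c_\mu + T$, contributing $(d_1d_2-k)\log(1 + c_\mu T/\lambda_\perp)$. The graph term $a_\mu\alpha\widetilde{X}^\top L\widetilde{X}$ only increases $|V(c_\mu)|$, so it can be absorbed without loss when lower-bounding the denominator; more carefully, since $V_t(c_\mu) \succeq V(c_\mu)$ one reduces everything to eigenvalue ratios that are independent of the off-diagonal structure.

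The role of the specific choice $\lambda_\perp = \frac{c_\mu T}{k\log(1 + c_\mu T/\lambda_2)}$ is to balance these two contributions: with this choice, $\log(1 + c_\mu T/\lambda_\perp) = \log(1 + k\log(1+c_\mu T/\lambda_2)) = \tilde{\mathcal{O}}(1)$, so the redundant block contributes only $\tilde{\mathcal{O}}(d_1 d_2) = \tilde{\mathcal{O}}(1)$ in the relevant sense, and the log-determinant ratio is $\tilde{\mathcal{O}}(k)$. Substituting back, $e_{T_2} = \tilde{\mathcal{O}}(\omega\sqrt{k})$ and $\sqrt{\sum_t\|\boldsymbol{x}_t\|^2_{V_t^{-1}(c_\mu)}} = \tilde{\mathcal{O}}(\sqrt{k})$, yielding a phase-2 regret of $\tilde{\mathcal{O}}(\omega k\sqrt{T})$. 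The extra $\tau T$ term arises because the prior bound in Theorem~\ref{the2} only guarantees $\|\boldsymbol{\theta}^*_{k+1:d_1d_2}\|\le\tau$ rather than exact low-dimensionality: the $\sqrt{\lambda_\perp}\tau$ piece inside $e_t$, once multiplied through by $\|\boldsymbol{x}_t\|_{V_t^{-1}(c_\mu)}$ and summed, contributes a term that with the above $\lambda_\perp$ scales like $\tau\sqrt{T}\cdot\sqrt{T} = \tau T$ (up to logs). I expect the main obstacle to be the careful bookkeeping in the generalized elliptical-potential / log-determinant bound for non-diagonal $V(c_\mu)$ — in particular verifying that the off-diagonal graph term does not inflate the bound and that the two-block eigenvalue split is tight enough to make the $\lambda_\perp$ choice yield $\tilde{\mathcal{O}}(k)$ rather than $\tilde{\mathcal{O}}(d_1 d_2)$ — together with cleanly propagating the $\tau$-dependent residual terms through the summation.
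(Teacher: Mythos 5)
Your overall architecture matches the paper's: optimism via Theorem \ref{the2}, Cauchy--Schwarz, a log-determinant potential argument, a two-block eigenvalue split of $\Lambda$, the specific choice of $\lambda_\perp$, and tracing the $\sqrt{\lambda_\perp}\,\tau$ piece of $e_t$ to the $\tau T$ term. However, the step you yourself flag as the crux is exactly where your bookkeeping fails to deliver the stated bound. You bound the redundant block of the log-determinant by $(d_1 d_2 - k)\log\left(1 + c_\mu T/\lambda_\perp\right)$, i.e.\ you bound each redundant direction separately by its worst case, and then assert that with the chosen $\lambda_\perp$ this is ``$\tilde{\mathcal{O}}(d_1 d_2) = \tilde{\mathcal{O}}(1)$ in the relevant sense.'' That identification is not valid: $\tilde{\mathcal{O}}$ hides logarithmic factors in $T$, not the ambient dimension $d_1 d_2$, which can far exceed $k=(d_1+d_2-r)r$; your version of the bound leaves an additive $\omega\, d_1 d_2$-type contribution in the regret that is not absorbed by $\tilde{\mathcal{O}}(\omega k\sqrt{T})$. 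The correct step (and the point of the paper's new Lemma \ref{lem.E1}) is to keep the total-budget constraint: the determinant ratio is bounded by $\max_{t_i:\sum_i t_i = T-1}\sum_i \log\bigl(1 + t_i/\sigma_i(V(c_\mu))\bigr)$, where the per-direction ``pull counts'' $t_i$ sum to $T-1$; then one uses $\sigma_i(V(c_\mu)) \ge \sigma_i(\Lambda)/c_\mu$ (the PSD graph term only helps) and linearizes $\log(1 + c_\mu t_i/\lambda_\perp) \le c_\mu t_i/\lambda_\perp$ on the redundant block, so that block contributes $c_\mu T/\lambda_\perp = k\log(1+c_\mu T/\lambda_2)$ in total, with no $d_1 d_2$ factor. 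Obtaining that budgeted form for a \emph{non-diagonal} $V(c_\mu)$ is itself nontrivial: the paper proves that the determinant is maximized when all actions align with eigenvectors of $V$ (extending Lemmas 21--22 of \cite{kocak2020spectral}, since Lemma 2 of \cite{jun2019bilinear} assumes a diagonal matrix); your remark that one ``reduces everything to eigenvalue ratios'' gestures at this but supplies no argument.

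Two smaller points. First, your claim that each elliptical-potential term satisfies $\|\boldsymbol{x}_t\|^2_{V_t^{-1}(c_\mu)} \le 1$ (so $u \le 2\log(1+u)$ applies) only holds if $\lambda_2 \ge c_\mu$, which is not assumed; the paper instead clips the instantaneous regret at $2k_\mu$ and uses $\min\{a,x\} \le \max\{2,a\}\log(1+x)$ (Lemma 3 of \cite{jun2017scalable}) to reach the same log-determinant sum without that restriction. Second, the $T_1$ pure-exploration rounds you include do not belong to this theorem: Theorem \ref{the4} bounds only the regret of Graph-LowGLM-UCB (stage two); the $\mathcal{O}(k_\mu T_1)$ exploration cost enters only in the overall bound of Theorem \ref{the5}.
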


The regret in the above theorem depends on the effective dimension $k = (d_1 + d_2 - r)r$ instead of the original dimension $d_1 d_2$, resulting in a better bound. This improvement comes from explicitly separating the informative low-rank part from the redundant residual part, highlighting the benefit of transforming the problem into an almost low-dimensional space.

\subsection{Deriving Overall Regret Bound}
% Through an analysis of the theoretical aspects in both stages, we have obtained the regret upper bound of our algorithm, GG-LowESTR, denoted as Theorem \ref{the5}. We then compare it with the regret upper bound of the current state-of-the-art low-rank matrix contextual bandits algorithm.

% By integrating the theoretical results of both stages, we derive the regret bound for our algorithm, GG-ESTT, as stated in Theorem \ref{the5}. 

By combining the theoretical results of both preceding subsections, we derive the overall regret of the proposed Algorithm \ref{algo1}.

\begin{theorem}[Overall Regret]   
Under Assumptions \ref{as1}-\ref{as3}, suppose we run Algorithm \ref{algo1} with
$T_1 \geq  \frac{ \zeta \sqrt{d_1 d_2 \gamma r T} }{c_r}$, $\lambda_{\perp}=\frac{c_\mu T}{k \log (1+\frac{c_\mu T}{\lambda_2})}$, $\tau=\frac{c_1 \zeta^2 d_1 d_2 \gamma r \log \left(\frac{2\left(d_1+d_2\right)}{\delta}\right)}{T_1 c_r^2}$, the overall regret is $\tilde{\mathcal{O}} \left(\frac{  \zeta \sqrt{d_1 d_2 \gamma r T} }{c_r} \right)$,
with probability at least $1-\delta$.
\label{the5}
\end{theorem}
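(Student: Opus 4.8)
\emph{Proof plan.} The plan is to split the cumulative regret across the two stages of Algorithm~\ref{algo1} and then balance the exploration budget $T_1$ against the subspace-estimation error it produces. Write $R_T = R^{(1)} + R^{(2)}$, where $R^{(1)} = \sum_{t=1}^{T_1}\big[\mu(\langle X^*,\Theta^*\rangle) - \mu(\langle X_t,\Theta^*\rangle)\big]$ is the regret of the pure-exploration phase (lines~1--4, where actions are drawn from $\mathbb{D}$), and $R^{(2)} = \sum_{t=T_1+1}^{T}\big[\mu(\langle X^*,\Theta^*\rangle) - \mu(\langle X_t,\Theta^*\rangle)\big]$ is the regret of the $T_2 = T - T_1$ rounds in which Graph-LowGLM-UCB is run (lines~7--9).

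For the first stage I would bound the per-round regret by a constant: by Assumption~\ref{as3} together with the mean value theorem, $\big|\mu(\langle X^*,\Theta^*\rangle) - \mu(\langle X_t,\Theta^*\rangle)\big| \le k_\mu \big|\langle X^* - X_t,\Theta^*\rangle\big| \le 2 k_\mu \|\Theta^*\|_F \le 2 k_\mu$, using Cauchy--Schwarz and Assumption~\ref{as2}. Hence $R^{(1)} \le 2 k_\mu T_1 = \mathcal{O}(T_1)$.

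For the second stage, the crucial point is that Stage~1 supplies exactly the prior $\|\boldsymbol{\theta}_{k+1:d_1 d_2}^*\| \le \tau$ required by Theorems~\ref{the2} and~\ref{the4}: by Corollary~\ref{cor1}, on an event $\mathcal{E}_1$ of probability at least $1-\delta$ we have $\|\boldsymbol{\theta}_{k+1:d_1 d_2}^*\|^2 \le \|\hat{U}_{\perp}^{\top} U^*\|_F^2 \|\hat{V}_{\perp}^{\top} V^*\|_F^2 \le \frac{c_1 \zeta^2 d_1 d_2 \gamma r \log(2(d_1+d_2)/\delta)}{T_1 c_r^2} =: \tau$, which is precisely the value of $\tau$ prescribed in the statement. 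Conditioning on $\mathcal{E}_1$ and applying Theorem~\ref{the4} with the stated $\lambda_\perp$ gives, on a further event $\mathcal{E}_2$ of probability at least $1-\delta$, $R^{(2)} = \tilde{\mathcal{O}}(\omega k \sqrt{T_2} + \tau T_2) = \tilde{\mathcal{O}}(\omega k \sqrt{T} + \tau T)$ (using $T_2 \le T$; the reuse of the Stage-1 data inside~\eqref{eq7} is already accounted for through the matrix $V_1(c_\mu)$ in the proof of Theorem~\ref{the4}). A union bound over $\mathcal{E}_1 \cap \mathcal{E}_2$, followed by replacing $\delta$ with $\delta/2$, keeps the total failure probability at $\delta$.

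Finally I would substitute the prescribed exploration length $T_1 = \frac{\zeta \sqrt{d_1 d_2 \gamma r T}}{c_r}$ (taking the stated lower bound with equality, which is admissible as long as $T \ge \zeta^2 d_1 d_2 \gamma r / c_r^2$, so that $T_1 \le T$) together with the corresponding $\tau$. Then $\tau T = \frac{c_1 \zeta^2 d_1 d_2 \gamma r \log(2(d_1+d_2)/\delta)\, T}{T_1 c_r^2} = \tilde{\mathcal{O}}\!\left(\frac{\zeta \sqrt{d_1 d_2 \gamma r T}}{c_r}\right)$, where I used $\frac{d_1 d_2 \gamma r T}{\sqrt{d_1 d_2 \gamma r T}} = \sqrt{d_1 d_2 \gamma r T}$, while $R^{(1)} = \mathcal{O}(T_1) = \mathcal{O}\!\left(\frac{\zeta \sqrt{d_1 d_2 \gamma r T}}{c_r}\right)$ is of the same order; summing the three contributions and keeping the dominant one yields the claimed $\tilde{\mathcal{O}}\!\left(\frac{\zeta \sqrt{d_1 d_2 \gamma r T}}{c_r}\right)$. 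The main obstacle is exactly this balancing step: one has to check that enlarging $T_1$ trades off the $\mathcal{O}(T_1)$ exploration cost against the $\tau T \propto 1/T_1$ estimation cost so that the chosen $T_1$ is order-optimal --- this is what generates the $\sqrt{\cdot}$ rate --- and to confirm that the $\omega k \sqrt{T}$ term inherited from Theorem~\ref{the4} does not dominate and is absorbed into the $\tilde{\mathcal{O}}(\cdot)$.
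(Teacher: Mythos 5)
Your proposal is correct and follows essentially the same route as the paper's proof: bound the Stage-1 regret by $2k_\mu$ per round via the mean value theorem with Assumptions~\ref{as2}--\ref{as3}, invoke Theorem~\ref{the4} (with $\tau$ supplied by Corollary~\ref{cor1}) for the Stage-2 regret, and then substitute the prescribed $T_1$ to balance the $\mathcal{O}(T_1)$ and $\tau T$ terms. Your added remarks on the union bound over the two high-probability events and the order-optimality of the choice of $T_1$ are consistent refinements of the same argument rather than a different approach.
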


Compared with existing methods, our regret offers two main advantages: (1) Compared to method that ignores graph information~\citep{kang2022efficient}, our bound introduces a factor $\zeta$ in the numerator, which decreases with richer graph information, thereby reducing cumulative regret; (2) Compared to method that does not utilize low-rank structure~\citep{yang2020laplacian}, our bound is tighter for dimension $d_2$ and is independent of the number of users.

\section{Experiments}\label{sec5}

% In this section, we present the superior performance of our algorithm, GG-LowESTR, through a series of experiments. Firstly, we highlight its outstanding performance on synthetic datasets. Subsequently, we demonstrate its excellent performance in real-world applications. 

In this section, we
demonstrate the superior performance of our proposed Algorithm \ref{algo1}, GG-ESTT, through numerical experiments on both synthetic data and real-world data. All experimental evaluations were conducted on a computer equipped with an Intel(R) Xeon(R) Gold 5120 CPU @ 2.20GHz, using Python 3.9.

\subsection{Synthetic Data Experiments}

We generate a real parameter matrix $\Theta^*$ with dimensions $d_1 \times d_2$ as $\Theta^* = A M B^\top$, where $M = U V^\top$, $U \in \mathbb{R}^{d_1 \times r}$, $V \in \mathbb{R}^{d_2 \times r}$, and the elements are independently sampled from $N(0,1)$. Specifically, the definitions of $A$ and $B$ can be found in Lemma \ref{lem1}. Regarding the action set $\mathcal{X}$, there are two generation methods:
\begin{itemize}
     \item The impact of graph information richness: directly drawing $n$ matrices with dimensions $d_1 \times d_2$, where the elements are independently sampled from $N(0, 1)$. 
    \item Comparison with related algorithms: drawing $n_1$ and $n_2$ vectors independently from $N(\boldsymbol{0}, I_{d_1})$ and $N(\boldsymbol{0}, I_{d_2})$, respectively, to serve as feature vectors for item 1 and item 2. Subsequently, by computing the outer product of these vectors, we generate $n=n_1 n_2$ feature matrices, each corresponding to a (item 1, item 2) pair.
   
\end{itemize}
For each action $X_t \in \mathcal{X}$, the rewards are generated using three common generalized linear models:
\begin{itemize}
    \item Linear case: $y_t \sim N(\langle X_t, \Theta^* \rangle, \omega^2)$, where $\omega=0.01$.
    \item Binary logistic case: $y_t \sim Logistic(p_t)$, where $p_t=\frac{1}{1+e^{-\langle X_t, \Theta^* \rangle}}$.
    \item Poisson case: $y_t \sim Poisson(\eta_t)$, where $\eta_t=e^{\langle X_t, \Theta^* \rangle}$.
\end{itemize}
For each simulation setting, we repeated the process 5 times to calculate the average regret at each time step and its standard deviation confidence interval.

\subsubsection{The Impact of Graph Information Richness}

\begin{figure}[t]
\FIGURE
{% Subcaptions and Graphics
\subcaptionbox{Linear case.\label{fig:figure8_test1}
}
{\includegraphics[width=0.35\textwidth]{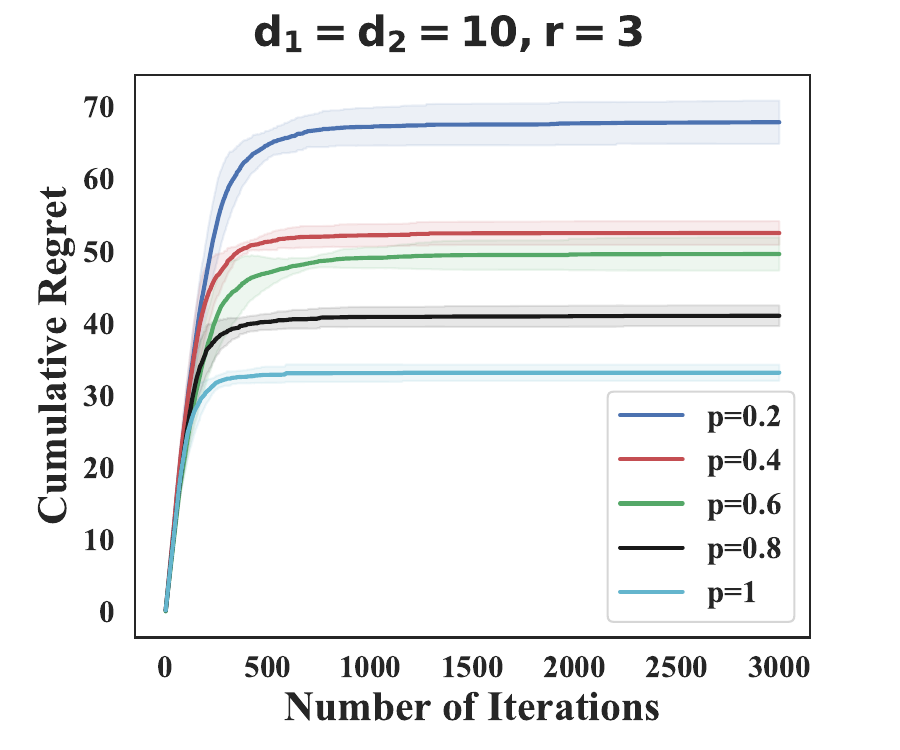}}
%ours_about_p_rep5_9_new_t1_50.png
\hspace{-0.5cm}\subcaptionbox{Binary logistic case.\label{fig:figure8_test2}
}
{\includegraphics[width=0.35\textwidth]{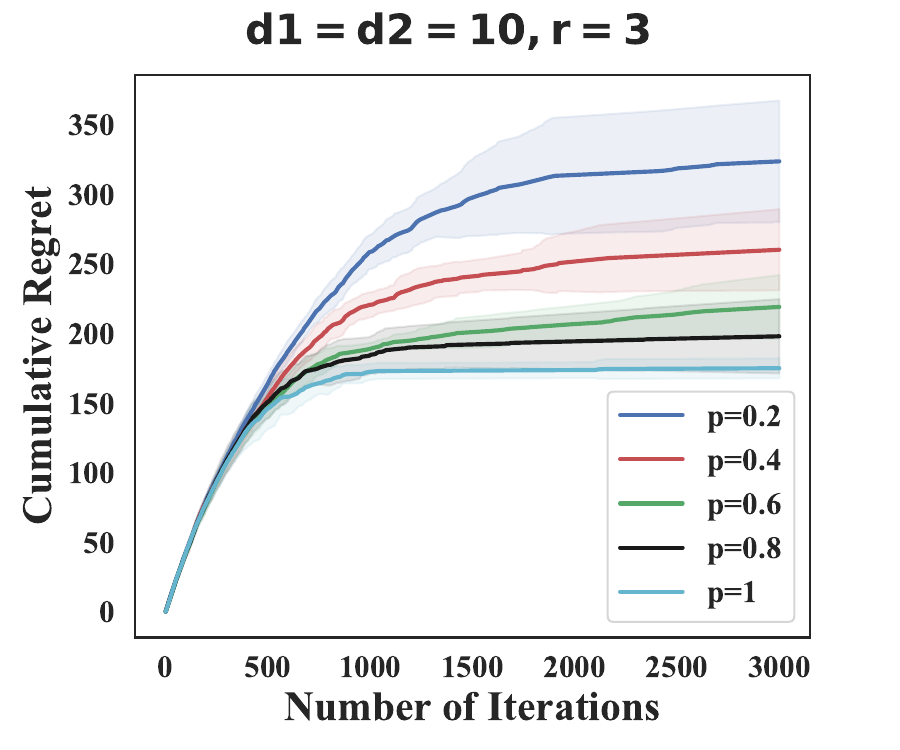}}
% ours_about_p_v13.png
\hspace{-0.5cm}\subcaptionbox{Poisson case.\label{fig:figure8_test3}
}
{\includegraphics[width=0.35\textwidth]{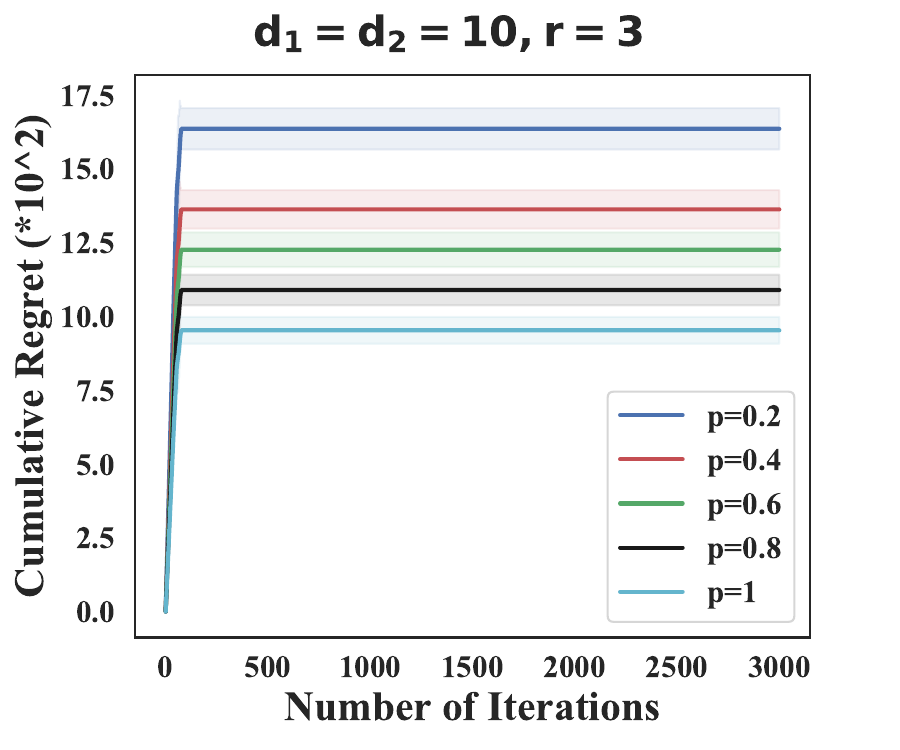}}
%ours_about_p_v13_poisson.png
}
{% Main caption
ER Random Graph: the Impact of Graph Information Richness.
\label{fig:figure8}}
{% notes
}
\end{figure}
\setlength{\floatsep}{2mm}
\begin{figure}[t]
\FIGURE
{% Subcaptions and Graphics
\subcaptionbox{Linear case.\label{fig:figure9_test1}
}
{\includegraphics[width=0.35\textwidth]{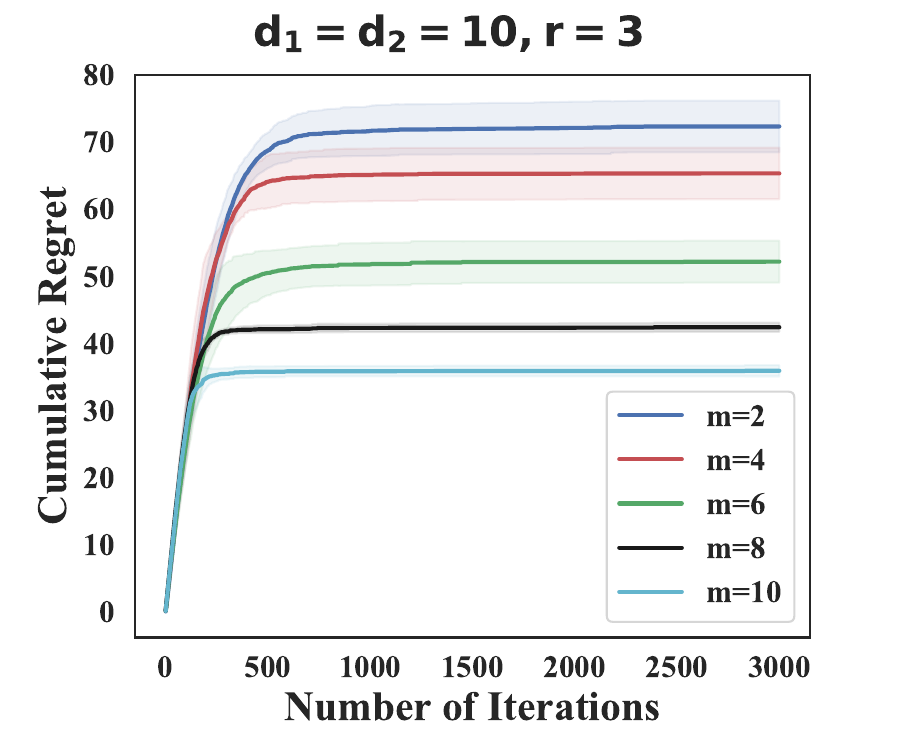}}
% ours_about_m_rep5_2_new_t1_50.png
\hspace{-0.5cm}\subcaptionbox{Binary logistic case.\label{fig:figure9_test2}
}
{\includegraphics[width=0.35\textwidth]{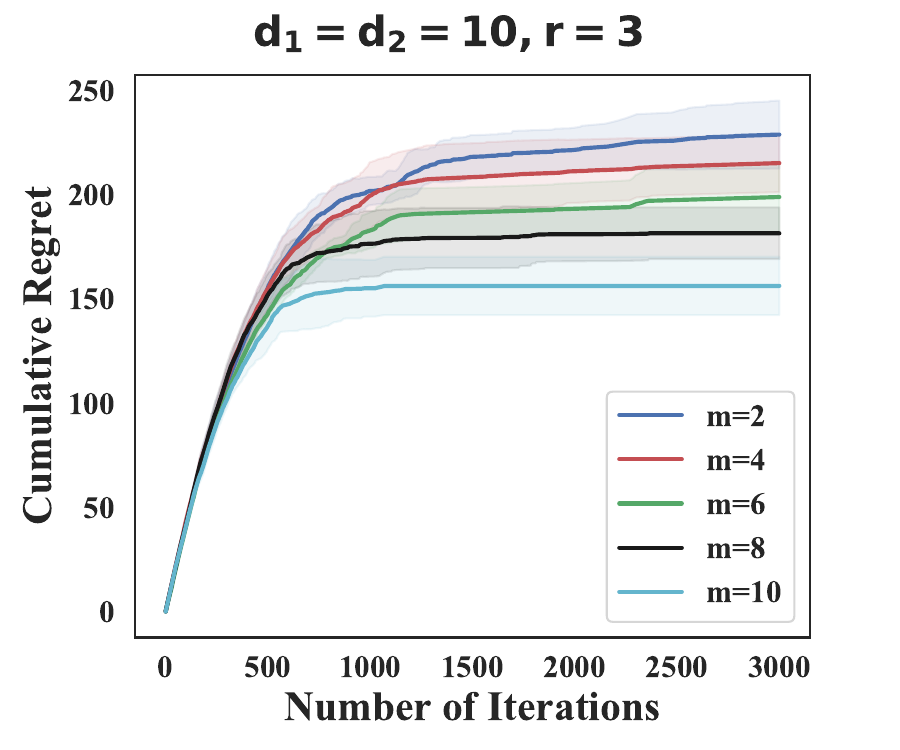}}
% ours_about_m_14.png
\hspace{-0.5cm}\subcaptionbox{Poisson case.\label{fig:figure9_test3}
}
{\includegraphics[width=0.35\textwidth]{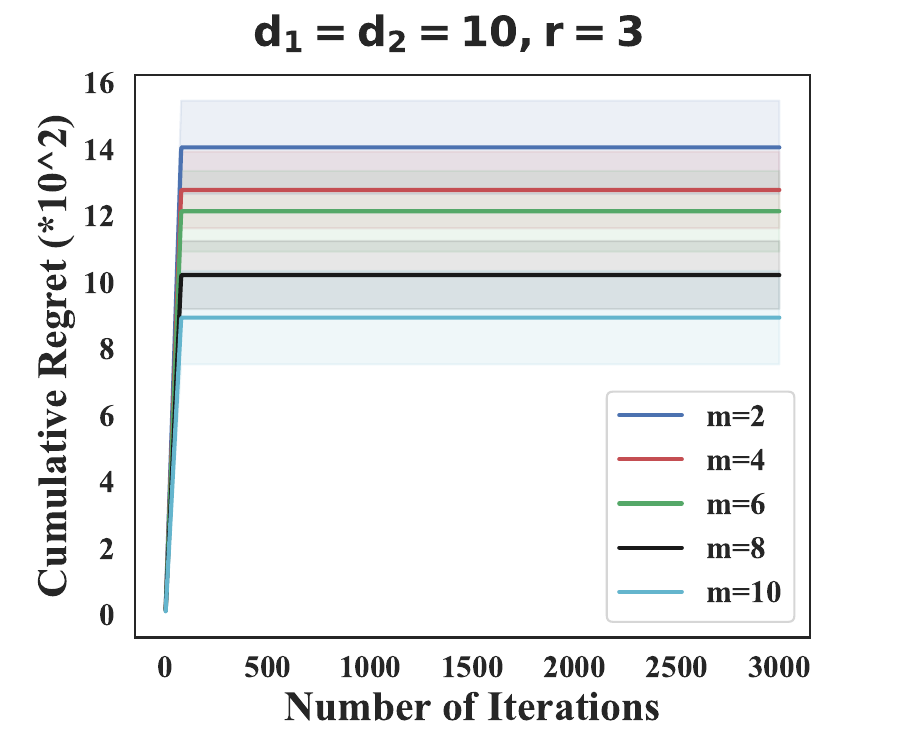}}
%ours_about_m_14_poisson.png
}
{% Main caption
BA Random Graph: the Impact of Graph Information Richness.
\label{fig:figure9}}
{% notes
}
\end{figure}

To better demonstrate the algorithm's ability to utilize graph information, we examine how the increase of similarity information in the graph affects the cumulative regret, under fixed the number of actions and parameter matrix settings. To this end, we construct the graph $\mathcal{G}$ using two representative random graph models: (1) Erdős–Rényi (ER) random graph: Each pair of nodes is independently connected with probability $p$. (2) Barabási–Albert (BA) random graph: Starting from an initial connected graph with $m$ nodes, new nodes are added to the graph one at a time, each forming $m$ edges to existing nodes. The connections follow the principle of preferential attachment, whereby nodes with more edges have a greater probability of being selected for connection by the new node. As shown in Figure~\ref{fig:figure8} (ER random graph) and Figure~\ref{fig:figure9} (BA random graph), when other settings are fixed, the cumulative regret decreases as the parameter of the random graph model (i.e., $p$ or $m$) increases. This trend indicates that larger graph model parameter lead to richer similarity information in the graph, which facilitates more accurate subspace estimation and faster identification of the best arms, thereby improving the overall decision-making performance. This empirical result is consistent with our theoretical analysis, where graph information plays a key role in reducing regret, further validating the effectiveness of the proposed algorithm in leveraging graph information.

% Specifically, comparing subfigure, it can be seen that with a fixed number of arms, the regret bounds decrease as the parameters of the random graph model increase. This is because a larger graph model parameter implies more graph information.

\subsubsection{Comparison with State-of-the-Art Algorithms}

In this subsection, we compare the proposed algorithm with several representative and publicly available bandit
algorithms, including those that focus on low-rank structure, those that
utilize graph information, and classical vector-based algorithms without any additional structural information. The comprehensive list is as follows.

% . To further highlight the superiority of our proposed algorithm, we also provide comparisons with classical vector algorithms that do not incorporate either type of information in their problem settings.

% To further demonstrate the superiority of our proposed algorithm, we also include comparisons with classical algorithms that do not incorporate either of these two types of information in their problem settings. The comprehensive list is as follows.

% In this subsection, we compare the proposed algorithm with related bandit algorithms, considering methods that either focus solely on low-rank properties or only utilize graph information. To further highlight the superiority of the proposed algorithm, we also compare it with classical algorithms, which do not take these two types of information into account in their problem settings. The comprehensive list is as follows.

% A comprehensive list of the algorithms used for comparison is provided below for reference.
 
\begin{itemize}
    \item  GG-ESTT (this paper): Algorithm \ref{algo1}.
    \item LowESTR~\citep{lu2021low}: the matrix UCB algorithm based on low-rank subspace exploration.
    \item G-ESTT~\citep{kang2022efficient}: the generalized linear matrix UCB algorithm based on low-rank subspace exploration.
    \item GG-OFUL (Algorithm \ref{algo3} in Appendix \ref{appendixg}):  the vectorized version of the proposed GG-ESTT. \item GraphUCB~\citep{yang2020laplacian}: the graph based UCB algorithm via regularization.
    % the graph-based UCB algorithm. 
    \item CLUB~\citep{gentile2014online}: the graph based UCB algorithm via clustering.
    \item UCB-GLM~\citep{li2017provably}: the generalized linear version of the UCB algorithm.
    \item NeuralUCB~\citep{zhou2020neural}: the neural network based UCB algorithm.
\end{itemize}

\begin{figure}
\centering
\FIGURE
{% Subcaptions and Graphics
\subcaptionbox{ER random graph.\label{fig:figure5_test1}
}
{\includegraphics[width=0.25\textwidth]{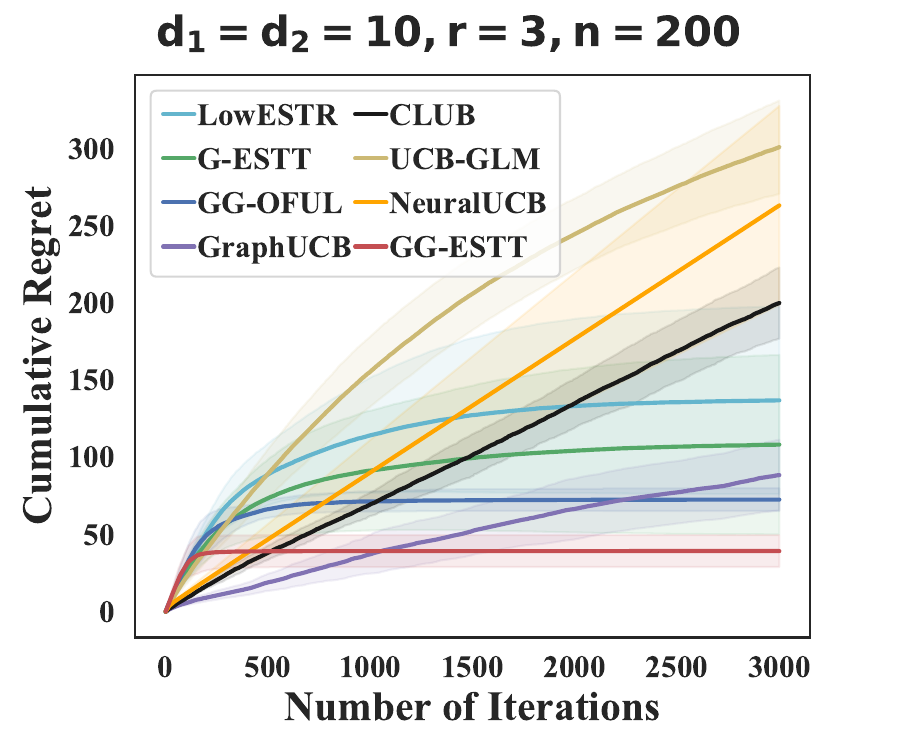}}
% compara_case1_v3_all_rep5.png
\hspace{-0.3cm}
\subcaptionbox{BA random graph.\label{fig:figure5_test2}
}
{\includegraphics[width=0.25\textwidth]{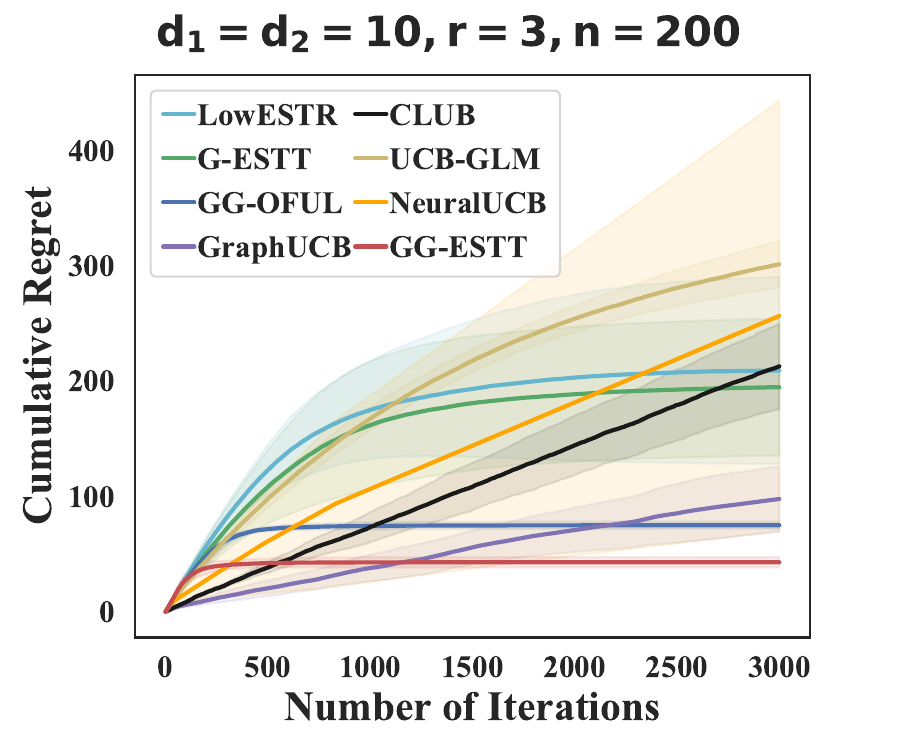}}
% compara_case4_v3_all_rep5_v3.png
\hspace{-0.3cm} % 换行
\subcaptionbox{ER random graph.\label{fig:figure5_test3}
}
{\includegraphics[width=0.25\textwidth]{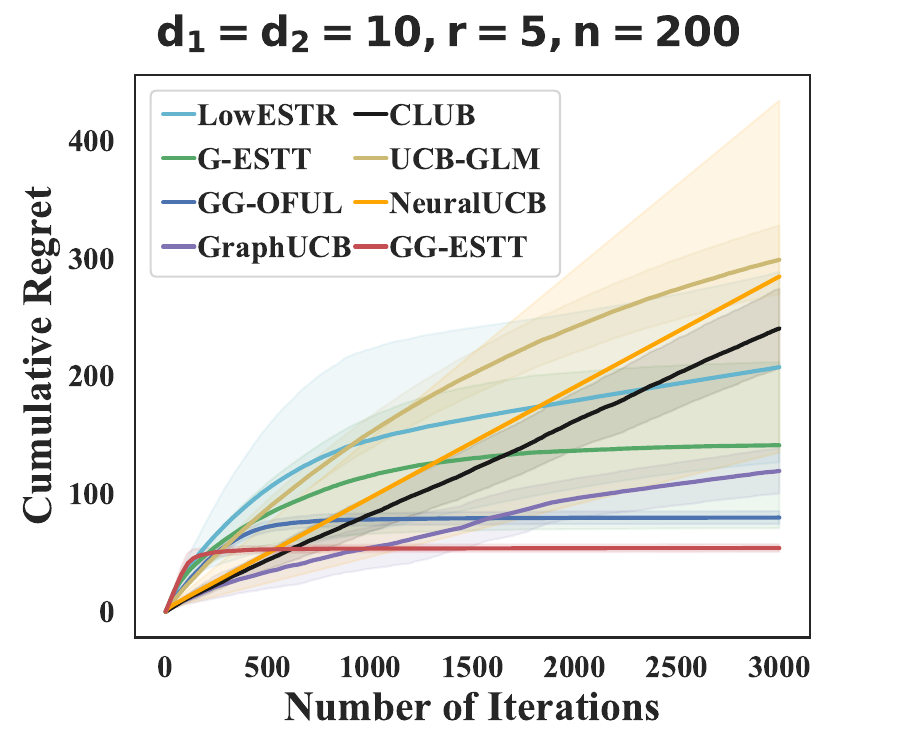}}
%compara_case2_v3_all_rep5.png
\hspace{-0.3cm}
\subcaptionbox{ER random graph.\label{fig:figure5_test4}
}
{\includegraphics[width=0.25\textwidth]{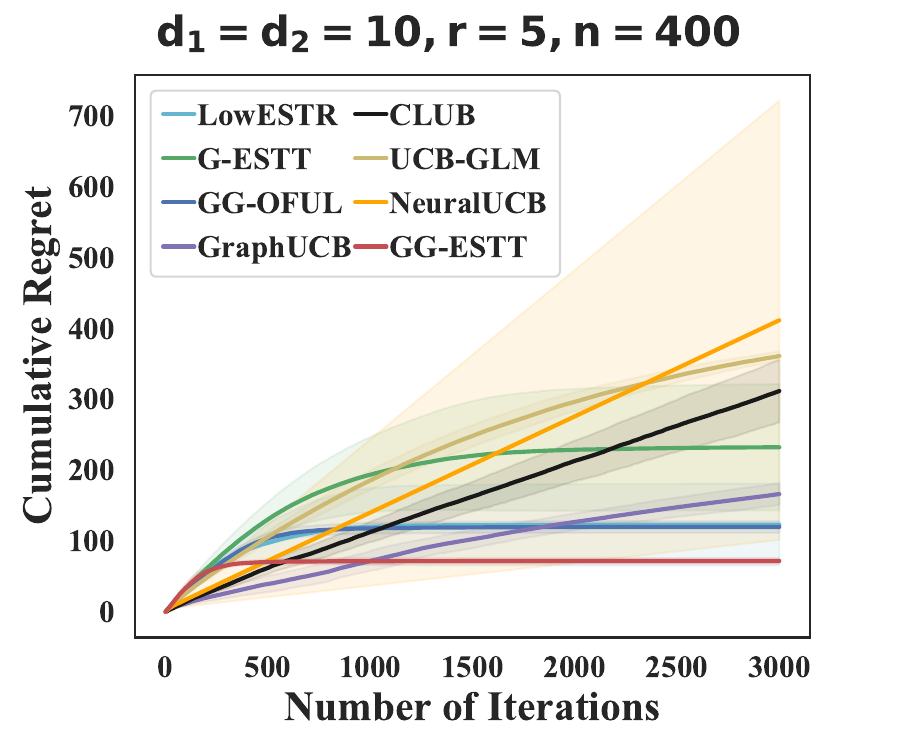}}
}
% compara_case3_v3_all_rep5.png
{% Main caption
Linear Case: Comparison of Regret in Four Different Settings about $r, n$, and Random Graph Model.
\label{fig:figure5}
}
{% notes
}
\end{figure}
\setlength{\floatsep}{2mm}
\begin{figure}
\centering
\FIGURE
{% Subcaptions and Graphics
\subcaptionbox{ER random graph.\label{fig:figure6_test1}
}
{\includegraphics[width=0.25\textwidth]{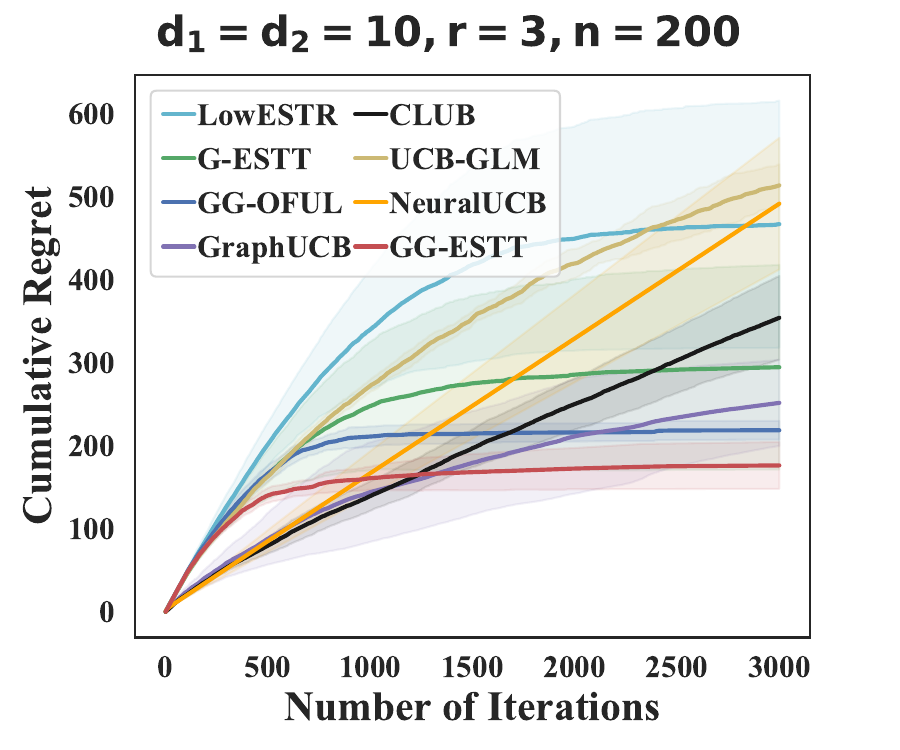}}
%compara_case1_v3_new_all_rep5.png
\hspace{-0.3cm}
\subcaptionbox{BA random graph.\label{fig:figure6_test2}
}
{\includegraphics[width=0.25\textwidth]{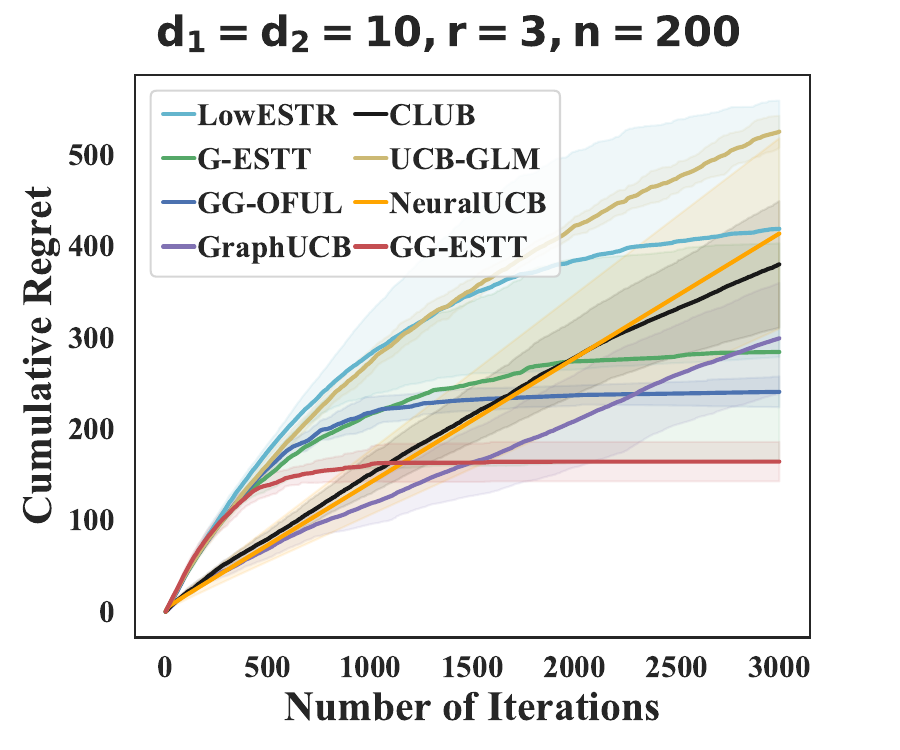}}
% compara_case4_v3_new_all_rep5.png
\hspace{-0.3cm} % 换行
\subcaptionbox{ER random graph.\label{fig:figure6_test3}
}
{\includegraphics[width=0.25\textwidth]{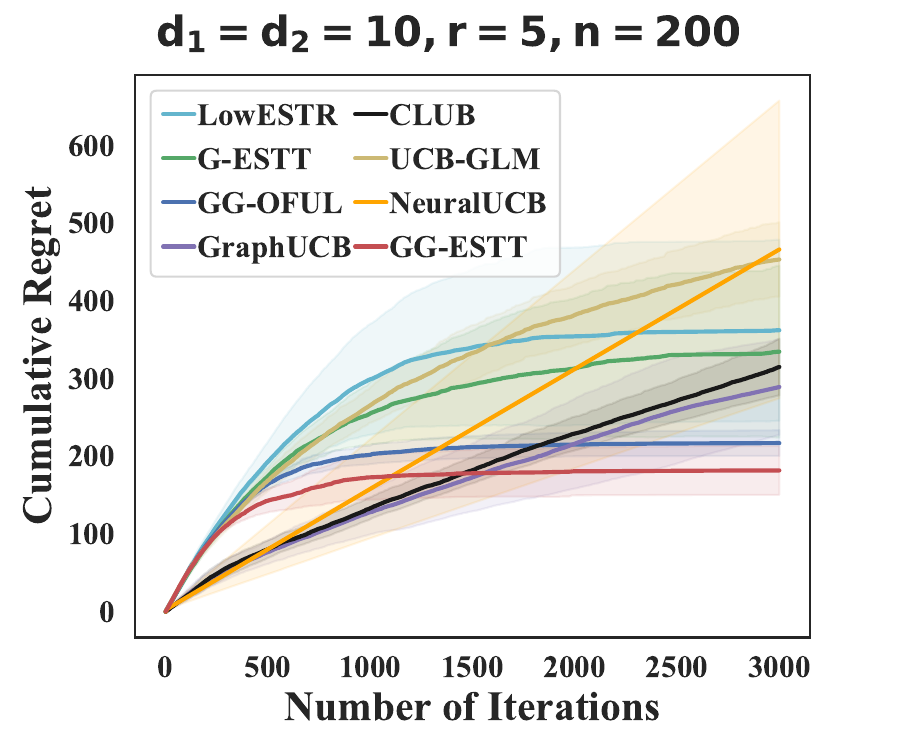}}
% compara_case2_v3_new_all_rep5.png
\hspace{-0.3cm}
\subcaptionbox{ER random graph.\label{fig:figure6_test4}
}
{\includegraphics[width=0.25\textwidth]{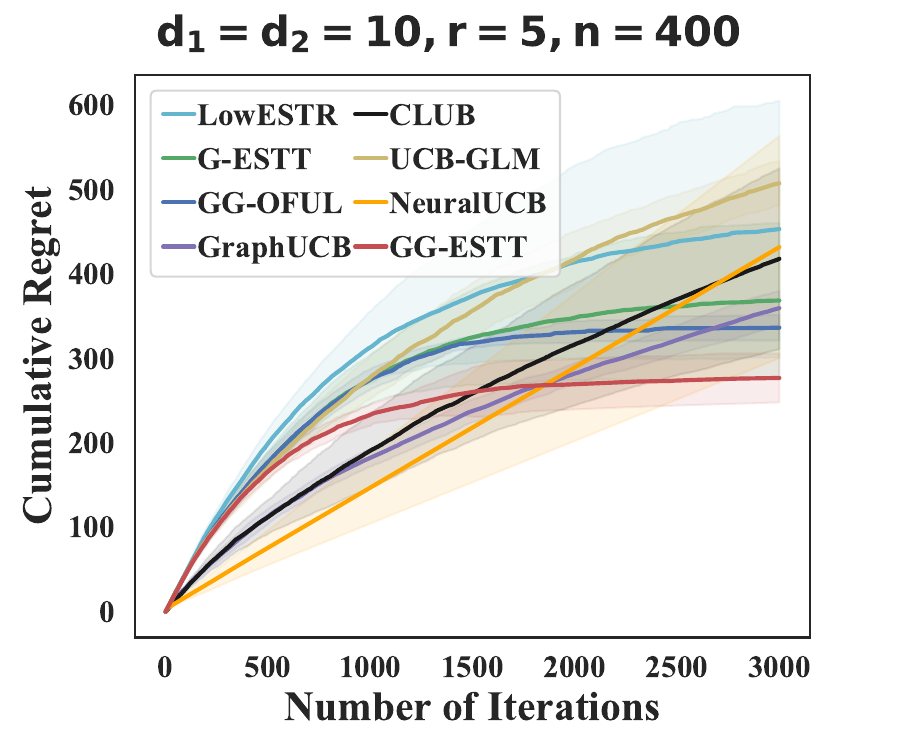}}
% compara_case3_v3_new_all_rep5.png
}
{% Main caption
Binary Logistic Case: Comparison of Regret in Four Different Settings about $r, n$, and Random Graph Model.
\label{fig:figure6}
}
{% notes
}
\end{figure}
\setlength{\floatsep}{2mm}
\begin{figure}[h]
\centering
\FIGURE
{% Subcaptions and Graphics
\subcaptionbox{ER random graph.\label{fig:figure7_test1}%(p=0.4)
}
{\includegraphics[width=0.25\textwidth]{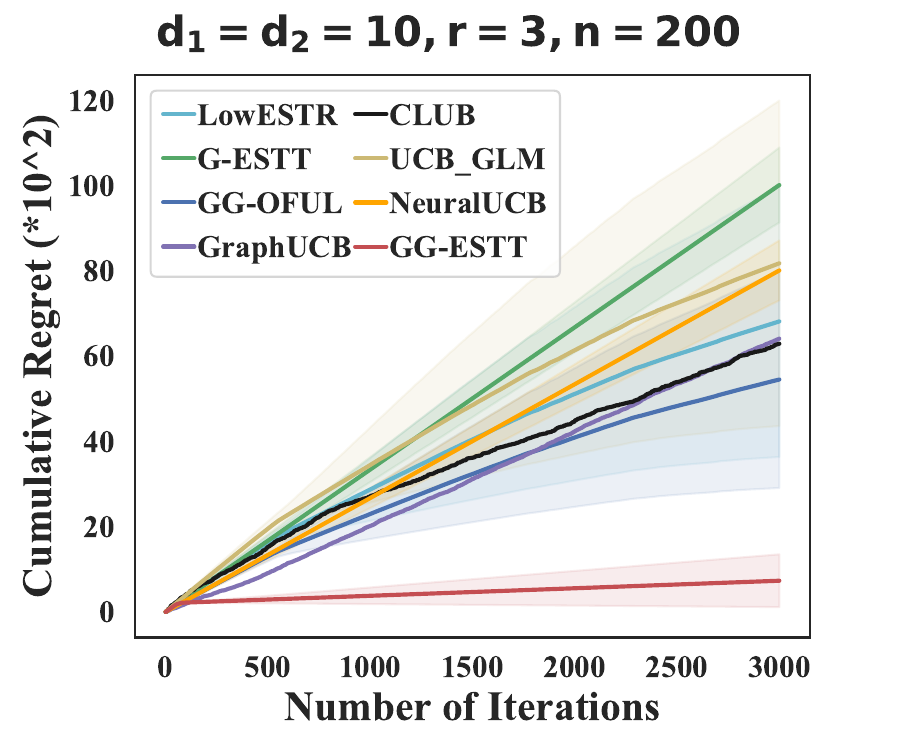}}
% compara_case1_v3_new_all_rep5_possion.pdf
\hspace{-0.3cm}
\subcaptionbox{BA random graph.\label{fig:figure7_test2}% (m=4)
}
{\includegraphics[width=0.25\textwidth]{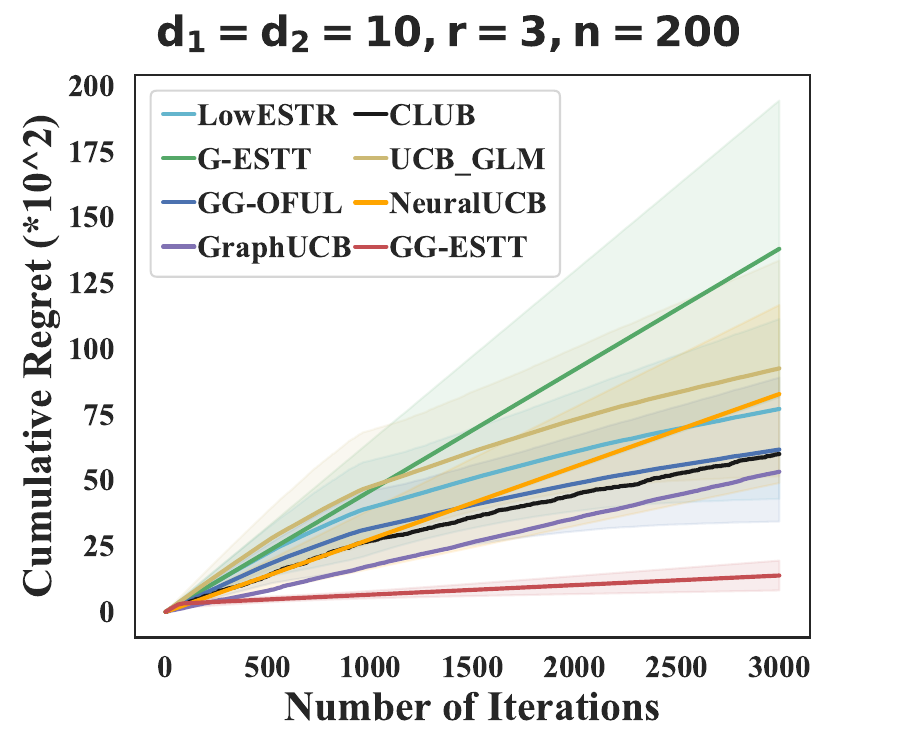}}
% compara_case4_v3_new_all_rep5_possion.pdf
\hspace{-0.3cm} % 换行
\subcaptionbox{ER random graph.\label{fig:figure7_test3}% (p=0.4)
}
{\includegraphics[width=0.25\textwidth]{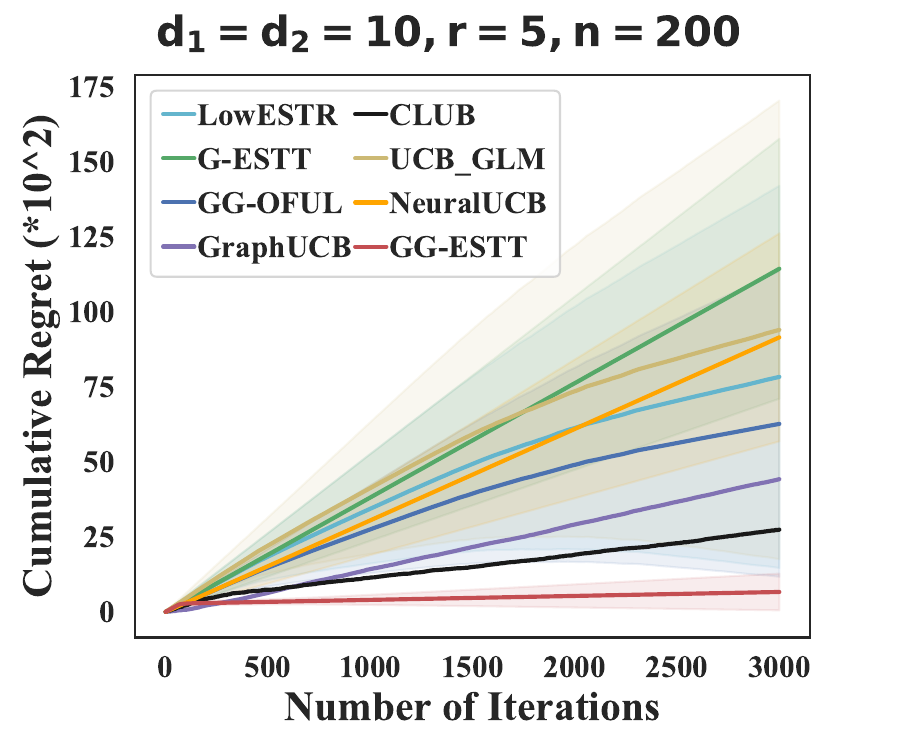}}
% compara_case2_v3_new_all_rep5_possion.pdf
\hspace{-0.3cm}
\subcaptionbox{ER random graph.\label{fig:figure7_test4}% (p=0.4)
}
{\includegraphics[width=0.25\textwidth]{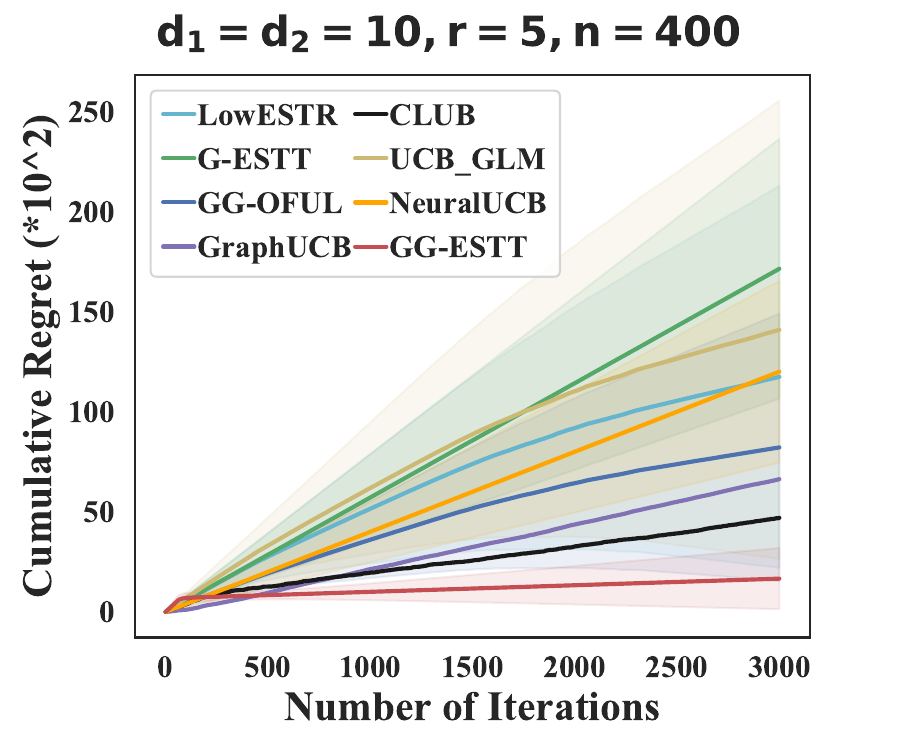}}
% compara_case3_v3_new_all_rep5_possion.pdf
}
{% Main caption
Poisson Case: Comparison of Regret in Four Different Settings about $r, n$, and Random Graph Model.
\label{fig:figure7}
}
{% notes
}
\end{figure}
% \vspace{-8mm}

In this set of simulations, we consider four different settings that vary in terms of the rank $r$ of the parameter matrix $\Theta^*$, the number of actions $n$, and the type of random graph model used. These experimental results are shown in Figure \ref{fig:figure5} (linear case), Figure \ref{fig:figure6} (binary logistic case) and Figure \ref{fig:figure7} (poisson case). Clearly, in these four different settings, our proposed GG-ESTT algorithm outperforms other algorithms in terms of cumulative regret. Additionally, we observed that our algorithm achieves faster to obtain sub-linear performance. Compared to the two algorithms in the low-rank matrix setting, these advantages indicate that the first stage obtained a more accurate estimation of the parameter subspace. This is consistent with our theory, suggesting that GG-ESTT not only leverages the low-rank structure, but also integrates graph information, thus improving the accuracy of estimating the parameter $\Theta^*$ and providing better guidance for subsequent decisions. Compared to algorithms in the vector setting, this advantage suggests that data vectorization impairs its inherent low-rank structure, affecting the effectiveness of the estimation and subsequently influencing the action selection process.

\subsection{Real-World Data Experiments
}

% \begin{figure}[htb]
% \centering
% \subfigure[CCLE: linear case]{
% \includegraphics[width=5.5cm]{compara_ccle_all.png}
% }
% \quad
% \subfigure[Movilens: binary logistic case]{
% \includegraphics[width=5.5cm]{compara_movie_all_T5000.png}
% %\caption{fig1}
% }
% \quad
% \subfigure[KDD Cup 2012: poisson case]{
% \includegraphics[width=5.5cm]{compara_CTR_all.png}
% %\caption{fig1}
% }
% \caption{Comparison of regret on real datasets}
% \label{fig8}
% \end{figure}

% To better highlight the outstanding performance of our algorithm, we consider three real-world application scenarios. The first scenario involves pharmaceutical companies focusing on the development of drugs that offer optimal therapeutic effects for different types of cancer (linear case). The second scenario involves a movie recommendation platform that aims to recommend movies to users for maximum revenue (binary logistic case). The third scenario is an online advertising platform, where the main goal is to display highly relevant ads based on users' search queries to achieve the highest click-through rate (poisson case). To enhance the persuasiveness of our experiments, we use publicly available datasets for these three applications, specifically the widely used Cancer Cell Line Encyclopedia (CCLE) dataset\endnote{See \url{https://depmap.org/portal/download/}.} for large-scale cancer drug screening and the common MovieLens dataset\endnote{See \url{https://grouplens.org/datasets/movielens/}.} for movie recommendations and the KDD Cup 2012 dataset\endnote{See \url{http://www.kddcup2012.org/c/kddcup2012-track2/}.} for online advertising.

To better highlight the outstanding performance of our algorithm, we consider three real-world applications involving cancer treatment (linear case), movie recommendation (binary logistic case), and ad searches (poisson case). To enhance the persuasiveness of our experiments, we use publicly available datasets for each application and apply the following preprocessing steps:

% During the experimental process, we applied different preprocessing steps to these three datasets:

\begin{itemize}
    \item The Cancer Cell Line Encyclopedia (CCLE) dataset\endnote{See \url{https://depmap.org/portal/download/}.}: We perform data normalization due to significant differences in data ranges. We consider the (cell line, target) pair as an action, involving 20 cell lines and 17 targets, resulting in 340 actions. The reward is given by the corresponding drug sensitivity measurements. 
    \item The MovieLens dataset\endnote{See \url{https://grouplens.org/datasets/movielens/}.}: We address missing values by imputing them with zeros. We treat the (user, movie) pair as an action. To minimize the proportion of missing data, we select the top 20 active users and movies, forming 400 actions. The reward is set to 1 if the user's rating for the movie is greater than 3, and 0 otherwise.
    \item The KDD Cup 2012 dataset\endnote{See \url{http://www.kddcup2012.org/c/kddcup2012-track2/}.}: We address missing values by imputing them with zeros. We treat the (ad, search keywords) pair as an action,  involving 20 ads and 13 search keywords, resulting in 260 actions. The reward is the ad's impression, which refers to the total number of times the ad has been viewed.
\end{itemize}

% Each arm pair consists of the \( i \)-th element of item 1 and the \( j \)-th element of item 2 and is indexed by \( l = (i-1)n_2 + j \). Here, the index \( l \) is determined by the values of \( i \) and \( j \), ensuring that all possible combinations are uniquely numbered. The corresponding feature matrix \( X_t \) for each pair is selected from the set \( \mathcal{X} = \{X_l\}_{l=1}^n \), where \( n = n_1 n_2 \) represents the total number of possible combinations. 

\begin{figure}
\FIGURE
{% Subcaptions and Graphics
\subcaptionbox{CCLE dataset.\label{fig:figure11_test1}
}
{\includegraphics[width=0.35\textwidth]{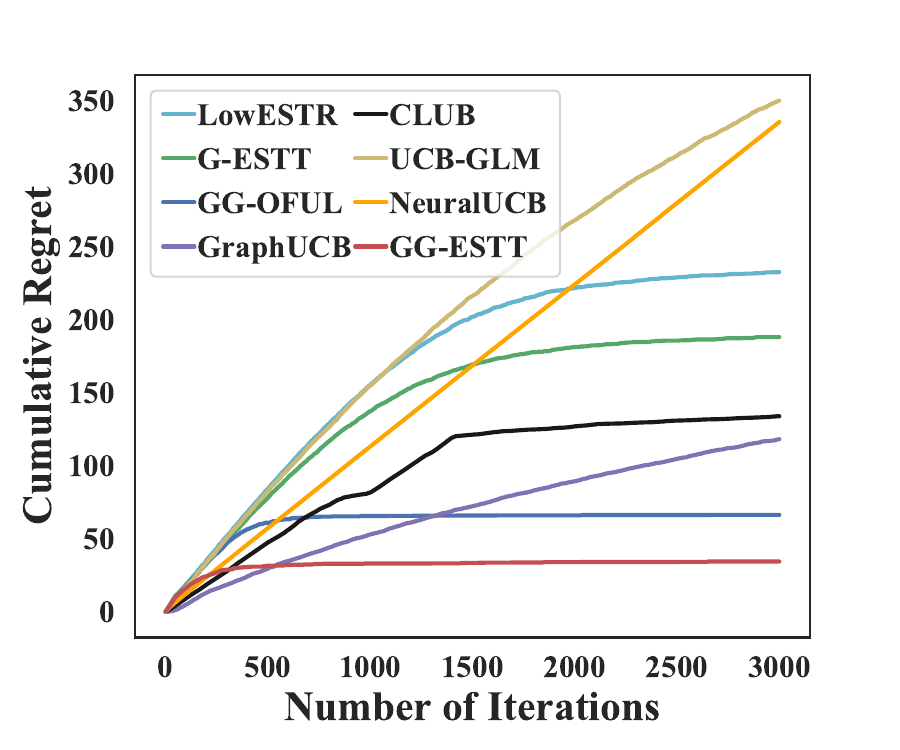}}
% compara_ccle_all.png
\hspace{-0.5cm}\subcaptionbox{MovieLens dataset.\label{fig:figure11_test2}
}
{\includegraphics[width=0.35\textwidth]{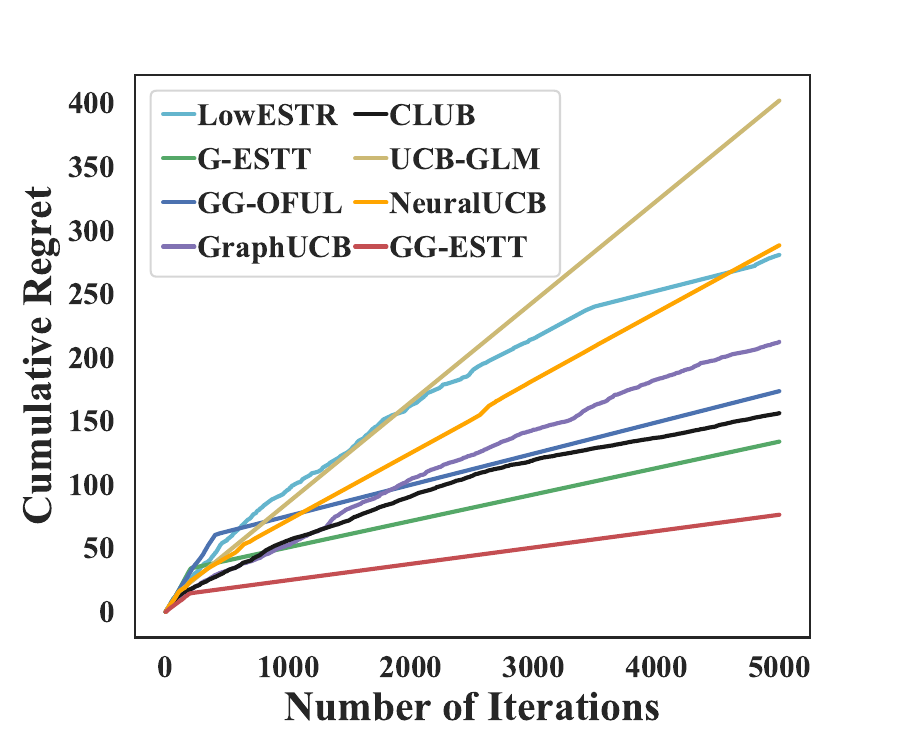}}
% compara_movie_all_new.png
\hspace{-0.5cm}\subcaptionbox{KDD Cup 2012 dataset.\label{fig:figure11_test3}
}
{\includegraphics[width=0.35\textwidth]{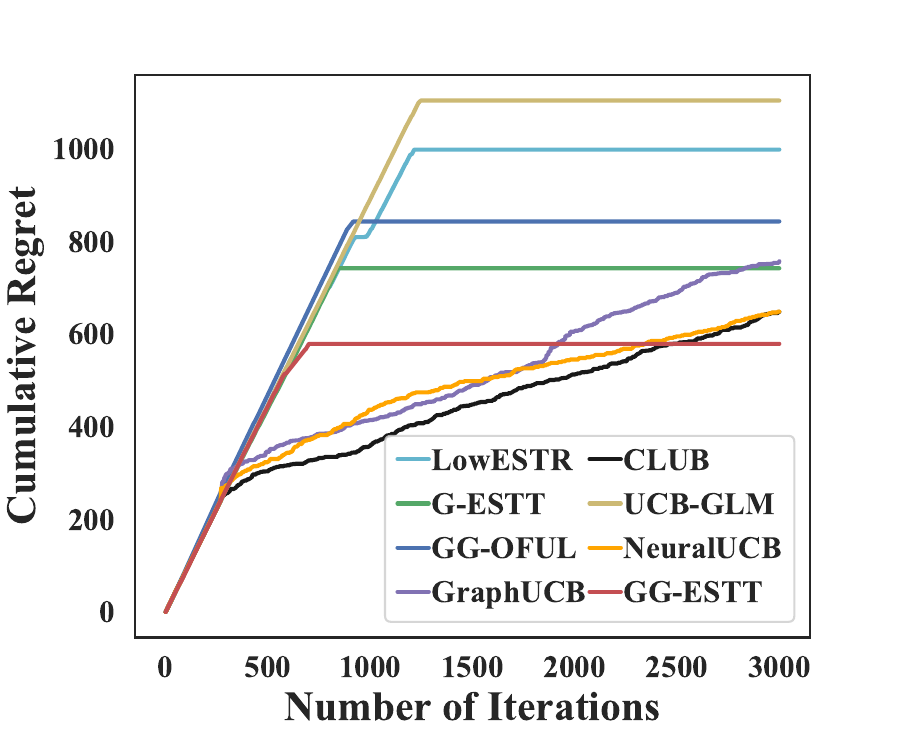}}
% compara_CTR_all_new_v3.png
}
{% Main caption
Comparison of Regret on Three Real-World Datasets.
\label{fig:figure11}}
{% notes
}
\end{figure}

We model the rewards as the corresponding matrix $R$ and perform SVD of $R=F_1 \Theta^* F_2^\top$. We then define $\left\{ X_l=F_1[i,:] \odot F_2[j,:]: l = (i-1)n_2 + j \in \{1,\cdots,n\},i \in \{1,\cdots,n_1\}, j \in \{1,\cdots,n_2\}\right\}$ as the action set and $\Theta^*$ as the true parameter, where $n_1$ is the number of item 1, $n_2$ is the number of item 2, the number of (item 1, item 2) is $n \triangleq n_1 n_2$. In contrast to simulated experiments, for real-world data, we construct the corresponding graph $\mathcal{G}$ using the 5-nearest neighbor method.

% assume that actions which are 5-nearest neighbors exhibit similarity, which correspond to the 5-nearest neighbors graph model. 

% For the CCLE dataset, we process a subset involving 20 cell lines and 17 targets, resulting in 340 actions. In the case of the MovieLens dataset, to minimize the proportion of missing data, we select the top 20 active users and movies, forming 400 actions. Similarly, for the KDD Cup 2012 dataset, it includes 20 ads and 13 search keywords, resulting in 260 actions. We then compare the performance of our algorithm with other methods.

The cumulative regret as a standard metric to measure the reward gap between the algorithm’s decisions and the optimal ones. In other words, a smaller cumulative regret indicates that the algorithm is making more accurate decisions. From Figure \ref{fig:figure11}, it is evident that over a certain period, the proposed GG-ESTT algorithm can achieve sublinear regret more rapidly, while also having the smallest cumulative regret. In other words, compared to other algorithms, our approach leverages the inherent structure (graph information and low-rankness) of the dataset more effectively, allowing for a better estimation of the parameter $\Theta^*$, and consequently, delivering superior performance.

\setlength{\floatsep}{2mm}

\begin{figure}
\FIGURE
{% Subcaptions and Graphics
\subcaptionbox{CCLE dataset.\label{fig:figure12_test1}
}
{\includegraphics[width=0.33\textwidth]{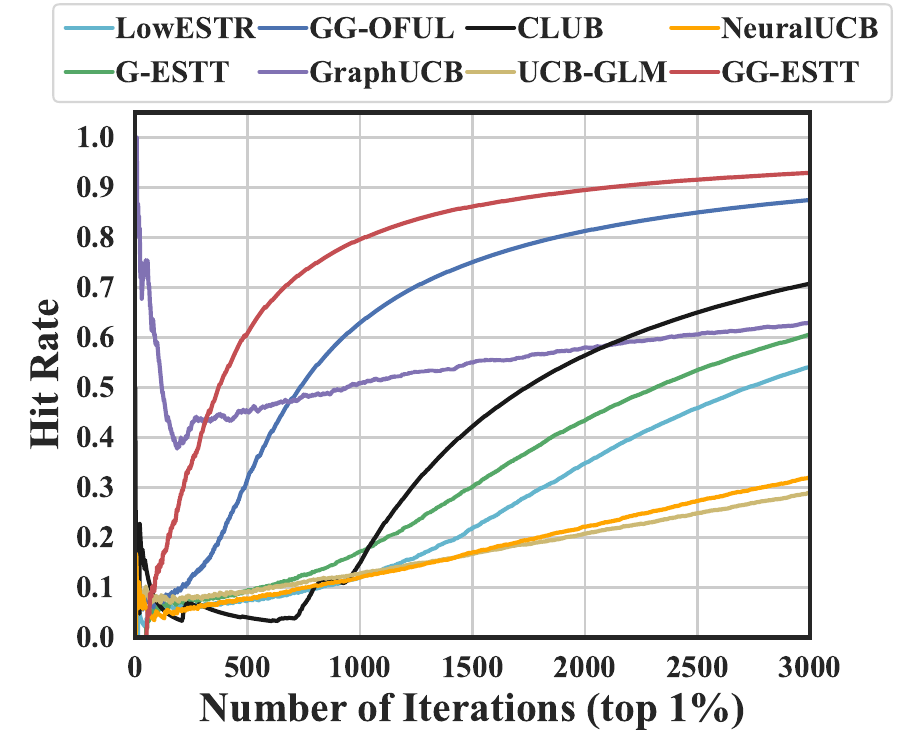}}
% plot_hit_rate_ccle_0.01_new.pdf
\hspace{-0.1cm}\subcaptionbox{MovieLens dataset.\label{fig:figure12_test2}
}
{\includegraphics[width=0.33\textwidth]{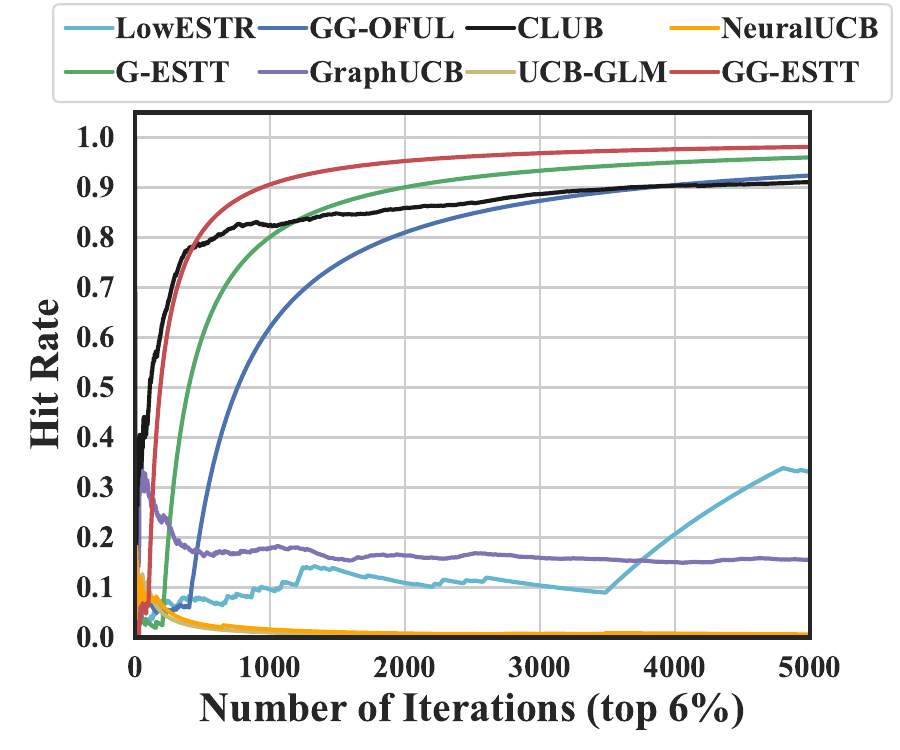}}
% plot_hit_rate_movie_0.06_new.pdf
\hspace{-0.1cm}\subcaptionbox{KDD Cup 2012 dataset.\label{fig:figure12_test3}
}
{\includegraphics[width=0.33\textwidth]{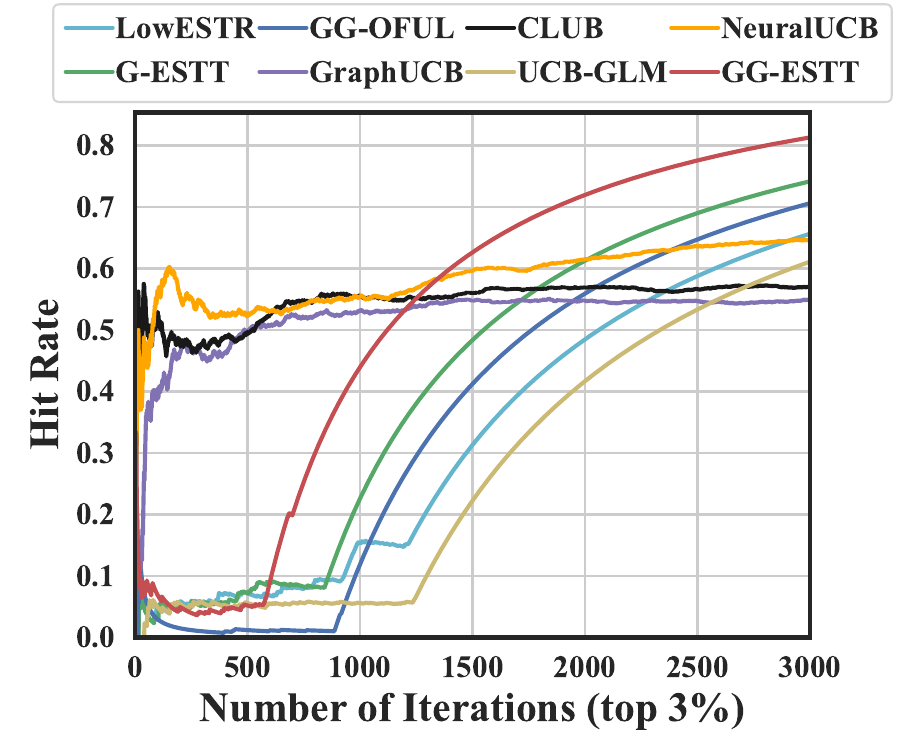}}
% plot_hit_rate_CTR_0.03_new.pdf
}
{% Main caption
Comparison of Hit Rate on Three Real-World Datasets.
\label{fig:figure12}}
{}% notes
%Panels (a) describe the experimental results under the linear case,
% Panels (b) describe the experimental results under the binary logic case,
% Panels (c) describe the experimental results under the Poisson case.
\end{figure}

In addition, we further illustrate the performance of our algorithm by visualizing the Hit Rate~\citep{yi2024effective}, which quantifies the frequency of the optimal action selected over time and is defined as $\text{Hit Rate}(t) \triangleq \frac{1}{t} \sum_{i=1}^{t} \mathbb{I}(h_i \in \mathcal{H})$, 
where $h_i$ denotes the action selected at time $i$, and $\mathcal{H}$ represents the set of optimal actions (i.e., the set of actions ranked in the top $x \%$ in terms of rewards). As shown in Figure \ref{fig:figure12}, our algorithm consistently achieves the highest hit rate across all three real-world datasets. Moreover, the Hit Rate gradually approaches 1 as the number of iterations increases, further demonstrating the effectiveness of our algorithm in selecting optimal actions.

\section{Conclusion}\label{sec6}

This study tackles the practical needs of complex decision-making scenarios where both the low-rank matrix structure and graph information are available. Precisely, we introduce the nuclear norm and a newly designed graph Laplacian regularization to capture these two types of underlying structural information and devise a graph-based action selection strategy based on the UCB principle. By representing the sum of such two regularizers as an equivalent atomic norm, we are able to directly leverage existing analytical frameworks to derive a better cumulative regret bound compared with the popular alternatives. Specifically, the numerator of this bound includes a factor that reflects the richness of the graph information, which decreases as more graph information becomes available, thereby highlighting the significant role of utilizing graph information in decision-making. In future work, we are trying to extend this presented study to dynamic or unknown graph settings, thereby improving its applicability and generalization capabilities. 

% Moreover, this bound is independent of the number of actions, effectively avoiding performance degradation in large action spaces. Overall, this work offers a new theoretical insight and algorithmic tool for combining graph information with low-rank modeling, and

% In this study, we tackle the challenge of integrating
% low-rank structures and graph information in complex decision-making scenarios by
% proposing a generalized low-rank matrix bandit model with graph information. By
% incorporating graph Laplacian regularization, our model effectively captures the
% structural information of the graph, thereby enhancing the accuracy of low-rank
% subspace estimation during the first stage. Additionally, the integration
% of an efficient UCB algorithm further improves the accuracy of selecting optimal arms. Theoretical analysis reveals that the proposed algorithm introduces an additional factor
% in the cumulative regret bound, which is no greater than 1 and decreases as the amount
% of graph information increases. This finding underscores the necessity of leveraging
% graph information. Our experimental results also demonstrate that our approach
% outperforms existing baseline methods in terms of decision-making accuracy on both
% synthetic and real-world datasets. Despite these promising results, further exploration is
% needed. For instance, the current model is based on a static graph structure. Future
% research could extend it to dynamic graph environments, thereby broadening its
% applicability.

%\THEEndNotes
\begingroup \parindent 0pt \parskip 0.0ex \def\enotesize{\normalsize} \theendnotes \endgroup

\bibliographystyle{informs2014} % outcomment this and next line in Case 1
\bibliography{ref} % if more than one, comma separated

% CASE 2: BiBTeX used to generate mypaper.bbl (to be further fine tuned)
%\input{mypaper.bbl} % outcomment this line in Case 2

%If you don't use BiBTex, you can manually itemize references as shown below.

%\bibliographystyle{nonumber}

% \begin{thebibliography}{3}
% \providecommand{\natexlab}[1]{#1}
% \providecommand{\url}[1]{\texttt{#1}}
% \providecommand{\urlprefix}{URL }

% \bibitem[{Smith(2005)}]{smith2005}
% Smith J (2005) Optimal resource allocation in humanitarian logistics.
%   \emph{Journal of Operations Research} 30(2):123--135.
  
% \bibitem[{Jones(2010)}]{jones2010}
% Jones S (2010) Stochastic programming models for humanitarian logistics.
%   \emph{INFORMS Mathematics of Operations Research} 35(4):567--580.

% \bibitem[{Brown(2015)}]{brown2015}
% Brown D (2015) \emph{Introduction to Stochastic Programming} (Springer).

% \end{thebibliography}

% %\THEEndNotes
% \begingroup \parindent 0pt \parskip 0.0ex \def\enotesize{\normalsize} \theendnotes \endgroup

% % Appendix here
% % Options are (1) APPENDIX (with or without general title) or
% %             (2) APPENDICES (if it has more than one unrelated sections)
% % Outcomment the appropriate case if necessary
% %
% % \begin{APPENDIX}{<Title of the Appendix>}
% % \end{APPENDIX}
% %
% %   or
% %

\clearpage
% \APPENDICES

\begin{APPENDICES}
\section*{Supplemental Material for ``Generalized Low-Rank Matrix Contextual  Bandits with Graph Information"}

This supplementary material provides the complete proofs of the theoretical results presented in the main manuscript, including Lemma~\ref{lem1} and Theorems~\ref{the1}–\ref{the5}, along with the corresponding technical lemmas required in the analysis. In addition, due to space limitations in the main text, we also include a detailed description of the vectorized version of the GG-ESTT introduced in Algorithm~\ref{algo3}.

% This supplementary material contains the proofs of the relevant theoretical results of the main manuscript, specifically including the proofs of Lemma \ref{lem1}, Theorems \ref{the1}-\ref{the5}, with some intermediate technical lemmas also proved at the corresponding locations. In addition, due to space constraints in the main text, this material provides detailed information on the vectorized version of the algorithm presented in Algorithm \ref{algo3}.

\section{Proof of the Lemma \ref{lem1}}

\renewcommand{\thelemma}{A.\arabic{lemma}}  % 附录中的引理编号为 A.1, A.2, ...

\setcounter{lemma}{0}    % 附录中重新开始引理计数

% The Lemma \ref{lem1} represents the innovation in the first stage of this paper, and its proof draws inspiration from the atomic norm concept in \cite{rao2015collaborative}. To elaborate on this, we present its detailed formulation in the subsequent Lemma \ref{lem2}.

% The proof is inspired by the concept of an atomic norm introduced in \cite{rao2015collaborative}. 

The Lemma \ref{lem1} represents the first key step in this work, providing an equivalent reformulation of the sum of two regularization terms as an atomic norm. To elaborate on this result, we introduce the following Lemma \ref{lem2}.
 
\begin{lemma}[Corollary 1 in \cite{rao2015collaborative}]\label{lemA.1}
  Let $L_w=U_w S_w U_w^\top, L_h=U_h S_h U_h^\top$ be the full SVD decomposition of $L_w, L_h$, respectively. Defined $\mathcal{A} \triangleq \{\omega_i h_i^\top: \omega_i=A u_i, h_i=B v_i, \|u_i \|=\|v_i \|=1,  A=U_w S_w^{-1 / 2}, B=U_h S_h^{-1 / 2}\}$, then $$\|Z\|_{\mathcal{A}}=\inf_{W,H} \frac{1}{2} \left\{ \mathrm{tr}(W^\top L_w W)+\mathrm{tr}(H^\top L_h H) \right\},$$ where $Z=W H^\top$. 
  \label{lem2}
\end{lemma}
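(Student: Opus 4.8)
The plan is to derive the claimed identity directly from Lemma~\ref{lem2} (Corollary~1 of \cite{rao2015collaborative}) by substituting the prescribed balanced factorization of $\Theta$ and then simplifying each of the two trace terms with the two stated conditions. Throughout I write $\boldsymbol{\theta} = \mathrm{vec}(\Theta)$ and fix the factorization $\Theta = W H^\top$ with $W = U_\Theta S_\Theta^{1/2}$ and $H = V_\Theta S_\Theta^{1/2}$, where $\Theta = U_\Theta S_\Theta V_\Theta^\top$ is the SVD appearing in the statement; this is exactly the balanced factorization at which the variational form of the nuclear norm is attained, so Lemma~\ref{lem2} reduces the weighted atomic norm to $\frac{1}{2}\mathrm{tr}(W^\top L_w W) + \frac{1}{2}\mathrm{tr}(H^\top L_h H)$ evaluated at this pair $(W,H)$.

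First I would treat the item (that is, $H$) term. Since $L_h = \lambda I$ by the first condition and $H^\top H = S_\Theta^{1/2} V_\Theta^\top V_\Theta S_\Theta^{1/2} = S_\Theta$, this gives $\frac{1}{2}\mathrm{tr}(H^\top L_h H) = \frac{\lambda}{2}\mathrm{tr}(S_\Theta) = \frac{\lambda}{2}\|\Theta\|_*$. The same computation yields $\|W\|_F^2 = \mathrm{tr}(S_\Theta) = \|\Theta\|_*$, which I will reuse below.

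Next I would rewrite the user (that is, $W$) term through vectorization. Using $\mathrm{vec}(W H^\top) = (H \otimes I)\,\mathrm{vec}(W)$, the graph-Laplacian quadratic form becomes $a_\mu\alpha\,\boldsymbol{\theta}^\top \widetilde{X}^\top L \widetilde{X}\boldsymbol{\theta} = \frac{1}{2}\,\mathrm{vec}(W)^\top\big[\,2a_\mu\alpha (H^\top\otimes I)\widetilde{X}^\top L \widetilde{X}(H\otimes I)\,\big]\mathrm{vec}(W)$. Invoking the second condition to replace the bracketed matrix with $I\otimes L_w - \lambda I$, and using the identities $\mathrm{vec}(W)^\top(I\otimes L_w)\mathrm{vec}(W) = \mathrm{tr}(W^\top L_w W)$ and $\mathrm{vec}(W)^\top\mathrm{vec}(W) = \|W\|_F^2 = \|\Theta\|_*$, I obtain $a_\mu\alpha\,\mathrm{tr}(\Theta^\top \widetilde{X}^\top L \widetilde{X}\Theta) = \frac{1}{2}\mathrm{tr}(W^\top L_w W) - \frac{\lambda}{2}\|\Theta\|_*$, hence $\frac{1}{2}\mathrm{tr}(W^\top L_w W) = a_\mu\alpha\,\mathrm{tr}(\Theta^\top \widetilde{X}^\top L \widetilde{X}\Theta) + \frac{\lambda}{2}\|\Theta\|_*$. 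Adding the two simplified trace terms and substituting into Lemma~\ref{lem2} collapses the infimum to $\|\Theta\|_{\mathcal{A}} = \lambda\|\Theta\|_* + a_\mu\alpha\,\mathrm{tr}(\Theta^\top \widetilde{X}^\top L \widetilde{X}\Theta)$, which is the assertion.

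The main obstacle I anticipate is not the algebra but justifying that the infimum in Lemma~\ref{lem2} is genuinely attained at the balanced factorization $W = U_\Theta S_\Theta^{1/2}$, $H = V_\Theta S_\Theta^{1/2}$ even though $L_w \neq \lambda I$ in general; the two conditions should be read as the precise compatibility requirements---linking the (user,item)-pair Laplacian $L$, through the design $\widetilde{X}$, to the factor-level Laplacians $L_w, L_h$---under which this optimal factorization carries over and the variational problem collapses to the displayed closed form. Care is also needed to state the conditions unambiguously, since $W$ and $H$ themselves depend on $\Theta$: I would make explicit that they are evaluated at this balanced factorization, and verify the Kronecker and $\mathrm{vec}$ bookkeeping (index ordering and block sizes) so that $(H\otimes I)$ acts on $\mathrm{vec}(W)$ consistently with the $d_1 d_2 \times d_1 d_2$ matrix $\widetilde{X}^\top L \widetilde{X}$.
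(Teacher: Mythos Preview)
Your proposal does not address the displayed statement. The lemma you were asked to prove is Lemma~\ref{lem2} (Corollary~1 of \cite{rao2015collaborative}), which is the variational characterization
\[
\|Z\|_{\mathcal{A}}=\inf_{W,H:\,Z=WH^\top}\tfrac{1}{2}\bigl\{\mathrm{tr}(W^\top L_w W)+\mathrm{tr}(H^\top L_h H)\bigr\}
\]
for a \emph{general} $Z$ and general positive-definite $L_w,L_h$. The paper does not prove this result; it simply quotes it from \cite{rao2015collaborative}. Your argument instead \emph{assumes} Lemma~\ref{lem2} as a black box and derives the conclusion of Lemma~\ref{lem1}: the identity $\|\Theta\|_{\mathcal{A}}=\lambda\|\Theta\|_*+a_\mu\alpha\,\mathrm{tr}(\Theta^\top\widetilde{X}^\top L\widetilde{X}\Theta)$ under the two compatibility conditions $L_h=\lambda I$ and $2a_\mu\alpha(H^\top\otimes I)\widetilde{X}^\top L\widetilde{X}(H\otimes I)+\lambda I=I\otimes L_w$. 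Those conditions, the balanced factorization $W=U_\Theta S_\Theta^{1/2}$, $H=V_\Theta S_\Theta^{1/2}$, and the target identity all belong to Lemma~\ref{lem1}, not to the statement you were given.

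If your intended target was in fact Lemma~\ref{lem1}, then your approach is essentially the paper's proof, step for step: the paper also fixes the balanced factorization, uses $\|\Theta\|_*=\tfrac{1}{2}(\|W\|_F^2+\|H\|_F^2)$, vectorizes via $\mathrm{vec}(WH^\top)=(H\otimes I)\,\mathrm{vec}(W)$, applies the second condition to convert the quadratic form into $\tfrac{1}{2}\mathrm{vec}(W)^\top(I\otimes L_w)\mathrm{vec}(W)=\tfrac{1}{2}\mathrm{tr}(W^\top L_w W)$, and then invokes Lemma~\ref{lem2}. Your caveat about whether the infimum in Lemma~\ref{lem2} is actually attained at the balanced factorization when $L_w\neq\lambda I$ is well taken; the paper's proof does not address this point either and simply treats the two conditions as the hypotheses under which the computation goes through.
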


\begin{proof}{Proof of Lemma \ref{lem1}}

Perform the full SVD of $\Theta$, denoted as $\Theta = U_\Theta S_\Theta V_\Theta^\top$. Define $W = U_\Theta S_\Theta^{1/2}$ and $H = V_\Theta S_\Theta^{1/2}$, then it follows that $\Theta = W H^\top$. Let $L_h=\lambda I,
2 a_\mu \alpha\left(H^\top \otimes I \right)\widetilde{X}^\top L \widetilde{X} \left(H \otimes I \right)+\lambda I=I \otimes L_w$, then the sum of the regularization terms in Equation \eqref{eq2} can be rewritten as
\begin{equation}
    \begin{aligned}
% \begin{align}
      & \lambda\|\Theta\|_* +a_\mu \alpha \mathrm{tr}\left\{ \Theta^{\top} \widetilde{X}^\top L \widetilde{X}  \Theta \right\}  \\
 =& \frac{\lambda}{2}  \left\{ \|W\|^2_F+\|H\|^2_F\right\}  +  a_\mu \alpha \mathrm{vec}^\top(WH^\top) \widetilde{X}^\top L \widetilde{X}  \mathrm{vec}(WH^\top)  \\
= & \frac{\lambda}{2}  \left\{ \mathrm{vec}^\top(W)\mathrm{vec}(W)+\mathrm{vec}^\top(H)\mathrm{vec}(H)\right\} +a_\mu \alpha \mathrm{vec}^\top(W)[\left(H^\top \otimes I \right)\widetilde{X}^\top L \widetilde{X} \left(H \otimes I \right)] \mathrm{vec}(W)  \\ 
= & \frac{1}{2}  \mathrm{vec}^\top(H)(\lambda I)\mathrm{vec}(H)+ \mathrm{vec}^\top(W)\left(a_\mu \alpha\left(H^\top \otimes I \right)\widetilde{X}^\top L \widetilde{X} \left(H \otimes I \right)+\lambda I\right) \mathrm{vec}(W) \\
= & \frac{1}{2}  \mathrm{vec}^\top(H)(I \otimes L_h)\mathrm{vec}(H)+ \frac{1}{2} \mathrm{vec}^\top(W)(I \otimes L_w) \mathrm{vec}(W) \\
= & \frac{1}{2}  \mathrm{tr}\left\{H^\top L_h H\right\}+ \frac{1}{2} \mathrm{tr}\left\{W^\top L_w W\right\},  
    \end{aligned}
\end{equation}
% \end{align}
where the $i$-th row of the matrix $\widetilde{X}$ corresponds to the vectorization of the $i$-th action. Then, according to Lemma \ref{lem2}, we have $\|\Theta\|_{\mathcal{A}}=\inf_{\Theta} \left\{ \lambda\|\Theta\|_*+a_\mu \alpha \mathrm{tr}\left\{ \Theta^{\top} \widetilde{X}^\top L \widetilde{X}  \Theta \right\} \right\} $. 
\Halmos
\end{proof}

% \begin{equation}
    
% \end{equation}

\section{Proof of the Theorem \ref{the1}}\label{appendixb}

\renewcommand{\thelemma}{B.\arabic{lemma}}  % 附录中的引理编号为 A.1, A.2, ...

\setcounter{lemma}{0}    % 附录中重新开始引理计数

In this section, we will provide a detailed proof of Theorem \ref{the1}. The basic framework of the proof is as follows.
\begin{itemize}
    \item Step 1: Drawing on the idea of \cite{rao2015collaborative}, we equivalently transform the sum of the two regularizations into an atomic norm, leading to Lemma \ref{lem1}.
    \item Step 2: Building on the first step, we adopt the proof framework proposed by \cite{kang2022efficient}, handling the regularization term and the loss function separately, which correspond to Lemma \ref{lem22} and Lemma \ref{lem3}, respectively. Finally, we derive the bound for the parameter estimation error.
    \item Step 3: To demonstrate the improvement of the bound, we need to verify that the additional factor related to graph information in the bound is less than 1, as shown in Lemma \ref{lem6}.
\end{itemize}

% Step 2: Building on the first step, we employed the proof framework from \cite{kang2022efficient}, addressing the regularization term and loss function separately. Consequently, we derived bounds for the parameter estimation errors.
% Step 3: To establish the improvement in bounds, we need to prove that the additional factor related to graph information in the bound is less than 1.

First, we introduce the relevant lemmas, with their detailed proofs provided in subsection \ref{appendixb.1}-\ref{appendixb.3}.

\begin{lemma} For $\Theta^* =  AM B^\top$,
where the definitions of $A$ and $B$ can be found in Lemma \ref{lem1}, the low-rank matrix $M\overset{SVD}{=}(U,U_{\perp}) \Sigma^* (V,V_{\perp})^\top$, let $\Theta=\widehat{\Theta}-\mu^*\Theta^*$,
\begin{equation}
    \Lambda_1=\left(\begin{array}{cc}
0 & 0 \\
0 & U_{\perp}^{\top} A^{-1}\Theta B^{-\top} V_{\perp}
\end{array}\right), \quad \Lambda_2=\left(\begin{array}{cc}
U^{\top} A^{-1}\Theta B^{-\top} V & U^{\top} A^{-1}\Theta B^{-\top} V_{\perp} \\
U_{\perp}^{\top} A^{-1}\Theta B^{-\top} V & 0
\end{array}\right),
\end{equation} then 
\begin{itemize}
    \item $\left\|\mu^*\Theta^*\right\|_{\mathcal{A}}-\|\widehat{\Theta}\|_{\mathcal{A}} \leq \left\|A \left(U, U_{\perp}\right) \Lambda_2\left(V, V_{\perp}\right)^{\top} B^\top\right\|_{\mathcal{A}}-\left\|A \left(U, U_{\perp}\right) \Lambda_1\left(V, V_{\perp}\right)^{\top} B^\top\right\|_{\mathcal{A}}$;
    \item $\left\|\widehat{\Theta}-\mu^*\Theta^*\right\|_{\mathcal{A}}=\left\|A \left(U, U_{\perp}\right) \Lambda_1 \left(V, V_{\perp}\right)^{\top} B^\top\right\|_{\mathcal{A}}+\left\|A \left(U, U_{\perp}\right) \Lambda_2 \left(V, V_{\perp}\right)^{\top} B^\top\right\|_{\mathcal{A}}$.
\end{itemize}

    \label{lem22}
\end{lemma}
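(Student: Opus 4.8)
The plan is to exploit the decomposability of the atomic norm $\|\cdot\|_{\mathcal{A}}$ with respect to the subspace pair induced by the SVD of $M = (U,U_\perp)\Sigma^*(V,V_\perp)^\top$, mirroring the classical argument used for the nuclear norm but carried out in the transformed coordinate system $A^{-1}(\cdot)B^{-\top}$ so that the atomic structure aligns with the row/column spaces of $\Theta^*$. First I would observe that, by the definition of $W = U_w S_w^{1/2}$, $H = V_w S_w^{1/2}$ and the change of variables in Lemma \ref{lem1}, the atomic norm $\|Z\|_{\mathcal{A}}$ behaves, under the linear map $Z \mapsto A^{-1} Z B^{-\top}$, like an ordinary (unweighted) Schatten-1 norm on the transformed matrix; more precisely $\|Z\|_{\mathcal{A}} = \|A^{-1}ZB^{-\top}\|_*$ (or a directly analogous identity), so that questions about $\|\cdot\|_{\mathcal{A}}$ for matrices of the form $A(U,U_\perp)\Lambda(V,V_\perp)^\top B^\top$ reduce to questions about $\|\cdot\|_*$ for $(U,U_\perp)\Lambda(V,V_\perp)^\top$, i.e., for $\Lambda$ itself since $(U,U_\perp)$ and $(V,V_\perp)$ are orthogonal.

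Next, writing $\Theta = \widehat{\Theta} - \mu^*\Theta^*$ and noting $A^{-1}\Theta B^{-\top} = \Lambda_1 + \Lambda_2$ by construction (the block decomposition into the $(U_\perp, V_\perp)$ corner versus everything else exactly partitions $(U,U_\perp)^\top A^{-1}\Theta B^{-\top}(V,V_\perp)$), I would establish the second bullet — the additivity $\|\widehat{\Theta}-\mu^*\Theta^*\|_{\mathcal{A}} = \|A(U,U_\perp)\Lambda_1(V,V_\perp)^\top B^\top\|_{\mathcal{A}} + \|A(U,U_\perp)\Lambda_2(V,V_\perp)^\top B^\top\|_{\mathcal{A}}$ — using the fact that $\Lambda_1$ is supported on the block orthogonal (in both row and column space) to the support of $M$ and to the column/row spans touched by $\Lambda_2$. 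Concretely, $\Lambda_1$ has nonzero entries only in rows indexed by $U_\perp$ and columns indexed by $V_\perp$, while $\Lambda_2$ has zeros in that corner, so $\Lambda_1$ and $\Lambda_2$ have orthogonal row spaces and orthogonal column spaces; the nuclear norm (hence the transformed atomic norm) of a sum of two such "block-orthogonal" matrices is the sum of their nuclear norms. This is the decomposability property of the nuclear norm with respect to a compatible subspace pair.

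For the first bullet, I would use the triangle inequality and the same decomposability in the other direction: $\mu^*\Theta^*$ has $A^{-1}(\mu^*\Theta^*)B^{-\top} = \mu^* M$ supported on the $(U,V)$ block, which lies entirely within the support of $\Lambda_2$-type matrices and is orthogonal to $\Lambda_1$; hence
$\|\mu^*\Theta^*\|_{\mathcal{A}} = \|A(U,U_\perp)(\,\mu^*\Sigma^*\text{-block}\,)(V,V_\perp)^\top B^\top\|_{\mathcal{A}}$ and, writing $\widehat{\Theta} = \mu^*\Theta^* + \Theta = \mu^*\Theta^* + A(U,U_\perp)(\Lambda_1+\Lambda_2)(V,V_\perp)^\top B^\top$, I get
$\|\widehat{\Theta}\|_{\mathcal{A}} \ge \|\mu^*\Theta^* + A(U,U_\perp)\Lambda_1(V,V_\perp)^\top B^\top\|_{\mathcal{A}} - \|A(U,U_\perp)\Lambda_2(V,V_\perp)^\top B^\top\|_{\mathcal{A}} = \|\mu^*\Theta^*\|_{\mathcal{A}} + \|A(U,U_\perp)\Lambda_1(V,V_\perp)^\top B^\top\|_{\mathcal{A}} - \|A(U,U_\perp)\Lambda_2(V,V_\perp)^\top B^\top\|_{\mathcal{A}}$,
where the last equality again uses that $\mu^*\Theta^*$ and the $\Lambda_1$-term are block-orthogonal. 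Rearranging gives the claimed inequality.

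The main obstacle I anticipate is justifying cleanly that the weighted atomic norm $\|\cdot\|_{\mathcal{A}}$ inherits the nuclear-norm decomposability under the $A,B$ change of coordinates — i.e., pinning down the precise identity relating $\|Z\|_{\mathcal{A}}$ to a Schatten norm of $A^{-1}ZB^{-\top}$ and verifying it is exact (not just an inequality) for the specific block-structured matrices appearing here, given that $A = U_w S_w^{-1/2}$, $B = U_h S_h^{-1/2}$ are not orthogonal. Once that identity is in hand, the block-orthogonality bookkeeping for $\Lambda_1$ versus $\Lambda_2$ and $\mu^* M$ is routine. I would therefore spend most of the effort making the coordinate-change/decomposability step rigorous, then invoke the standard triangle-inequality manipulation for both bullets.
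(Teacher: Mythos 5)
Your proposal is correct and follows essentially the same route as the paper: the paper also first establishes the exact identity $\|Z\|_{\mathcal{A}}=\|A^{-1}ZB^{-\top}\|_*$ (via the variational characterization $\|Z\|_{\mathcal{A}}=\inf_{W,H}\tfrac12\{\mathrm{tr}(W^\top L_w W)+\mathrm{tr}(H^\top L_h H)\}$ from Lemma \ref{lemA.1}, which is exactly the coordinate-change step you flagged as the main obstacle), and then proves both bullets by passing to $\Lambda=\Lambda_1+\Lambda_2$ and using the block-orthogonality (decomposability) of the nuclear norm together with the triangle inequality, precisely as you describe.
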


\begin{lemma}
    For the loss function defined as \( L_{T_1}(\Theta)\triangleq \langle\Theta, \Theta\rangle-\frac{2}{T_1} \sum_{i=1}^{T_1}\left\langle \psi_\nu\left(y_i \cdot S\left(X_i\right)\right), \Theta\right\rangle \), then use $$\beta=\frac{4\|A\| \|B\|}{\sqrt{T_1}} \sqrt{2 d_1 d_2 \gamma \left(4 \omega^2+r_{\max}^2\right) \log \left(\frac{2\left(d_1+d_2\right)}{\delta}\right)}, \nu=\sqrt{\frac{2 \log \left(\frac{2\left(d_1+d_2\right)}{\delta}\right)}{T_1 d_1 d_2 \gamma \left(4 \omega^2+r_{\max}^2\right)}},$$
    with probability at least $1-\delta $, we have
$\beta \geq 2 \|\nabla L\left(\mu^*\Theta^*\right)\|_{\mathcal{A}}^*$, where $\|\cdot\|_{\mathcal{A}}^*$ is the dual norm of the weighted atomic norm $\|\cdot\|_{\mathcal{A}}$.
\label{lem3}
\end{lemma}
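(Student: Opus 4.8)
\textbf{Proof proposal for Lemma~\ref{lem3}.} The goal is to control the dual atomic norm of the gradient of the quadratic loss at the rescaled truth $\mu^*\Theta^*$, and then pick $\beta$ large enough to dominate twice this quantity with high probability. First I would compute the gradient explicitly. Since $L_{T_1}(\Theta)=\langle\Theta,\Theta\rangle - \frac{2}{T_1}\sum_{i=1}^{T_1}\langle \psi_\nu(y_i\cdot S(X_i)),\Theta\rangle$, we have $\nabla L_{T_1}(\Theta) = 2\Theta - \frac{2}{T_1}\sum_{i=1}^{T_1}\psi_\nu(y_i\cdot S(X_i))$, so $\nabla L_{T_1}(\mu^*\Theta^*) = 2\bigl(\mu^*\Theta^* - \frac{1}{T_1}\sum_{i=1}^{T_1}\psi_\nu(y_i\cdot S(X_i))\bigr)$. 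The key probabilistic fact is that $\frac{1}{T_1}\sum_i \psi_\nu(y_i\cdot S(X_i))$ concentrates around $\mu^*\Theta^*$; this is the role of the truncated matrix function $\psi_\nu$ from Definition~\ref{def3} — it robustifies the empirical mean of the (heavy-tailed) matrices $y_i\cdot S(X_i)$, whose expectation equals $\mu^*\Theta^*$ by the score-function / Stein-type identity $\mathbb{E}[y\cdot S(X)] = \mathbb{E}[\mu'(\langle X,\Theta^*\rangle)]\,\Theta^* = \mu^*\Theta^*$ (this identity, together with Assumption~\ref{as1} bounding $\mathbb{E}[S^2(X_{ij})]\le\gamma$, is what makes the construction work).

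Next I would translate the dual atomic norm into a more tractable object. By Definition~\ref{def5} with $A = U_w S_w^{-1/2}$, $B = U_h S_h^{-1/2}$, one has the standard duality $\|G\|_{\mathcal{A}}^* = \sup_{\|\boldsymbol u\|=\|\boldsymbol v\|=1}\langle G, A\boldsymbol u\boldsymbol v^\top B^\top\rangle = \|A^\top G B\|_{op}$, the operator norm of the reweighted gradient. Hence $\|\nabla L_{T_1}(\mu^*\Theta^*)\|_{\mathcal{A}}^* \le \|A\|\,\|B\|\,\bigl\|2\bigl(\mu^*\Theta^* - \frac{1}{T_1}\sum_i\psi_\nu(y_i\cdot S(X_i))\bigr)\bigr\|_{op}$. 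So it remains to bound the operator-norm deviation $\bigl\|\frac{1}{T_1}\sum_i\psi_\nu(y_i\cdot S(X_i)) - \mu^*\Theta^*\bigr\|_{op}$ with probability $\ge 1-\delta$. For this I would invoke the sub-exponential matrix concentration result for truncated rectangular matrices (Minsker-type, as in the reference attached to Definition~\ref{def3}): with the stated choice of $\nu$, a union bound over the $d_1+d_2$ "dimensions" yields a deviation of order $\sqrt{\frac{d_1 d_2\,\gamma\,(4\omega^2+r_{\max}^2)\log(2(d_1+d_2)/\delta)}{T_1}}$, where $r_{\max}=|\mu(0)|+k_\mu$ bounds $|\mu(\langle X,\Theta^*\rangle)|$ via Assumptions~\ref{as2}--\ref{as3} (so the second moment of each entry $y_i S(X_i)_{jk}$ is at most $(4\omega^2+r_{\max}^2)\gamma$, using $\mathbb{E}[y^2]\le 2(\mathrm{Var}+r_{\max}^2)\le 4\omega^2+2r_{\max}^2$-type bookkeeping). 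Plugging this bound in and multiplying by $2\|A\|\|B\|$ gives exactly the claimed $\beta$, so that $\beta \ge 2\|\nabla L_{T_1}(\mu^*\Theta^*)\|_{\mathcal{A}}^*$ on the high-probability event.

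The main obstacle I anticipate is the careful verification of the concentration step: one must correctly identify the variance proxy of the truncated matrices (tracking the constants $4\omega^2$ and $r_{\max}^2$ through the sub-Gaussian noise parameter $\omega$ and the bound $r_{\max}$ on $\mu$), confirm that the specific $\nu$ chosen is the one that optimizes the Minsker bias–variance tradeoff so that both the bias term $O(\nu\cdot\text{second moment})$ and the fluctuation term $O(\nu^{-1}\log(\dots)/T_1)$ balance, and handle the rectangular-to-symmetric dilation $\widetilde A$ in Definition~\ref{def3} so that operator norms match up. A secondary subtlety is making the Stein identity $\mathbb{E}[y\cdot S(X)]=\mu^*\Theta^*$ fully rigorous entrywise under Assumption~\ref{as1}, which requires integrability justifying the exchange of gradient and expectation. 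Once these pieces are in place, the inequality $\beta\ge 2\|\nabla L_{T_1}(\mu^*\Theta^*)\|_{\mathcal{A}}^*$ follows directly from the displayed choices of $\nu$ and $\beta$.
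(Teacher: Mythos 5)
Your proposal is correct and follows essentially the same route as the paper: express the dual atomic norm as $\|A^\top \nabla L(\mu^*\Theta^*)B\|$, bound it by $\|A\|\,\|B\|\,\|\nabla L(\mu^*\Theta^*)\|$, and control the gradient's operator norm by the Minsker-type concentration for the truncated matrices with the stated $\nu$. The only difference is that the paper imports that concentration step as a black box (Lemma B.4 of Kang et al.\ 2022) rather than re-deriving the bias--variance balancing and the Stein identity that you sketch.
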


\begin{lemma}
    Let $\lambda \leq 1$, $\alpha \leq \frac{1 -\lambda}{4 a_\mu n (n-1)}$, then $\|A^{-1}\|\leq 1,\|B^{-1}\|\leq 1$.
    \label{lem6}
\end{lemma}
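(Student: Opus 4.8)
\textbf{Proof proposal for Lemma~\ref{lem6}.}
The plan is to control $\|A^{-1}\|$ and $\|B^{-1}\|$ via the defining relations of $A$ and $B$ from Lemma~\ref{lem1}. Recall that $B = U_h S_h^{-1/2}$ with $L_h = U_h S_h U_h^\top$; under the hypothesis $L_h = \lambda I$ we immediately get $S_h = \lambda I$, hence $B = \lambda^{-1/2} U_h$ and $B^{-1} = \lambda^{1/2} U_h^\top$ (using that $U_h$ is orthogonal). Therefore $\|B^{-1}\| = \sqrt{\lambda} \le 1$ whenever $\lambda \le 1$, which disposes of the second bound with essentially no work. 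The substance of the lemma is the bound on $\|A^{-1}\|$.

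For $\|A^{-1}\|$, I would start from the second structural condition of Lemma~\ref{lem1}, namely
$2 a_\mu \alpha (H^\top \otimes I)\widetilde{X}^\top L \widetilde{X}(H \otimes I) + \lambda I = I \otimes L_w$, together with $A = U_w S_w^{-1/2}$, so that $\|A^{-1}\|^2 = \|S_w^{1/2} U_w^\top\|^2 = \|S_w\| = \lambda_{\max}(L_w)$. Thus it suffices to show $\lambda_{\max}(L_w) \le 1$. Since $I \otimes L_w$ has the same spectrum as $L_w$ (with multiplicity), I would bound $\|I \otimes L_w\|$ using the displayed identity: $\|I \otimes L_w\| \le 2 a_\mu \alpha \|(H^\top \otimes I)\widetilde{X}^\top L \widetilde{X}(H \otimes I)\| + \lambda$. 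The first term I would estimate by submultiplicativity: $\|H\|^2 \cdot \|\widetilde{X}^\top L \widetilde{X}\| \le \|H\|^2 \|\widetilde{X}\|^2 \|L\|$. Here $\|H\|^2 = \|S_\Theta\| = \|\Theta\|$, which is bounded by a constant (indeed $\le 1$) under Assumption~\ref{as2}; $\|\widetilde{X}\|^2 \le \sum_i \|\mathrm{vec}(X_i)\|^2 = \sum_i \|X_i\|_F^2 \le n$ again by Assumption~\ref{as2}; and $\|L\| \le 2 d_{\max} \le 2(n-1)$ for a graph on $n$ nodes (the standard Laplacian bound $\lambda_{\max}(L) \le 2 d_{\max}$, with $d_{\max} \le n-1$). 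Combining, the first term is at most $2 a_\mu \alpha \cdot n \cdot 2(n-1) = 4 a_\mu \alpha\, n(n-1)$, so
$$\lambda_{\max}(L_w) \le 4 a_\mu \alpha\, n(n-1) + \lambda.$$
Plugging in the hypothesis $\alpha \le \frac{1-\lambda}{4 a_\mu n(n-1)}$ gives $\lambda_{\max}(L_w) \le (1-\lambda) + \lambda = 1$, hence $\|A^{-1}\| \le 1$, as desired.

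The main obstacle I anticipate is getting the constants in the Laplacian/Kronecker bounds exactly right so that they match the stated hypothesis $\alpha \le \frac{1-\lambda}{4 a_\mu n(n-1)}$: in particular one must be careful about whether $\widetilde{X}$'s operator norm or Frobenius norm is the relevant quantity, and about the precise form of the bound $\lambda_{\max}(L)\le 2(n-1)$ versus a tighter graph-dependent estimate. A secondary point to check is that $H$ here is genuinely $V_\Theta S_\Theta^{1/2}$ evaluated at the optimizer $\widehat{\Theta}_{T_1}$ (or at $\Theta^*$), so that $\|H\|^2 = \|\widehat{\Theta}_{T_1}\|$ or $\|\Theta^*\|$ is controlled — one may need to invoke Assumption~\ref{as2} (or the boundedness of feasible $\Theta$) at this step rather than treating $H$ as arbitrary. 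Modulo these bookkeeping issues, the argument is a direct chain of operator-norm inequalities applied to the two structural identities of Lemma~\ref{lem1}.
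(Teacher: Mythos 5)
Your proposal is correct and follows essentially the same route as the paper's proof: both reduce $\|B^{-1}\|$ to $\sigma_{\max}^{1/2}(L_h)=\sqrt{\lambda}$, and both bound $\|A^{-1}\|^2=\sigma_{\max}(L_w)$ through the identity $I\otimes L_w = 2a_\mu\alpha(H^\top\otimes I)\widetilde{X}^\top L\widetilde{X}(H\otimes I)+\lambda I$, submultiplicativity, $\sigma_{\max}^2(H)=\sigma_{\max}(\Theta)\le\|\Theta\|_F\le 1$, $\sigma_{\max}^2(\widetilde{X})\le\|\widetilde{X}\|_F^2\le n$, and $\sigma_{\max}(L)\le 2(n-1)$, yielding exactly the stated condition on $\alpha$. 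Your anticipated "bookkeeping" concerns are resolved in the paper the same way you suggest (operator norm bounded by Frobenius norm, and $\|\Theta\|_F\le 1$ invoked for $H$), so there is no gap.
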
 

Next, we will complete the proof of Theorem \ref{the1} by following the steps summarized above and using the corresponding lemmas.

\begin{proof}{Proof of Theorem \ref{the1}}
According to the optimal problem and Taylor's expansion, we have
\begin{equation*}
    \left\{
             \begin{array}{lc}
             L(\hat{\Theta})+\beta\|\hat{\Theta}\|_{\mathcal{A}}\leq L(\mu^* \Theta^*)+\beta\|\mu^*\Theta^*\|_{\mathcal{A}}, &  \quad \text{optimal problem}\\
             L(\hat{\Theta})=L(\mu^* \Theta^*)+\langle \nabla L\left(\mu^* \Theta^*\right),\hat{\Theta}-\mu^*\Theta^*\rangle+2\|\hat{\Theta}-\mu^* \Theta^*\|_F^2. & \quad \text{Taylor's expansion} 
             \end{array}
\right.
\end{equation*}
% \begin{align*}
%     \left\{
%              \begin{array}{lc}
%              L(\hat{\Theta})+\beta\|\hat{\Theta}\|_{\mathcal{A}}\leq L(\mu^* \Theta^*)+\beta\|\mu^*\Theta^*\|_{\mathcal{A}}, &  \quad \text{optimal problem}\\
%              L(\hat{\Theta})=L(\mu^* \Theta^*)+\langle \nabla \mathcal{L}\left(\mu^* \Theta^*\right),\hat{\Theta}-\mu^*\Theta^*\rangle+2\|\hat{\Theta}-\mu^* \Theta^*\|_F^2. & \quad \text{Taylor's expansion} 
%              \end{array}
% \right.
% \end{align*}
Then, by swapping the order, we obtain 
\begin{equation}
   \|\hat{\Theta}-\mu^*\Theta^*\|_F^2 \leq -\frac{1}{2}\langle \nabla L\left(\mu^*\Theta^*\right),\hat{\Theta}-\mu^*\Theta^*\rangle+\frac{\beta}{2} \left[ \|\mu^* \Theta^*\|_{\mathcal{A}}-\|\hat{\Theta}\|_{\mathcal{A}}\right]. 
   \label{eq15}
\end{equation}
Next, by Cauchy–Schwarz inequality, \eqref{eq15} can be further expressed as 
\begin{equation}
    \|\hat{\Theta}-\mu^*\Theta^*\|_F^2\leq \frac{1}{2}\| \nabla L\left(\mu^* \Theta^*\right)\|_{\mathcal{A}}^*\|\hat{\Theta}-\mu^* \Theta^*\|_{\mathcal{A}}+\frac{\beta}{2} \left[ \|\mu^*\Theta^*\|_{\mathcal{A}}-\|\hat{\Theta}\|_{\mathcal{A}}\right].
\end{equation}
Due to Lemma \ref{lem22}, we can get 
\begin{equation}
\begin{aligned}
   \|\hat{\Theta}-\mu^*\Theta^*\|_F^2 & \leq \frac{1}{2}\left[\| \nabla L\left(\mu^*\Theta^*\right)\|_{\mathcal{A}}^* +\beta \right]\left\|A \left(U, U_{\perp}\right) \Lambda_2 \left(V, V_{\perp}\right)^{\top} B^\top\right\|_{\mathcal{A}} \notag \\&+\frac{1}{2} \left[ \| \nabla L \left(\mu^*\Theta^*\right)\|_{\mathcal{A}}^* -\beta \right] \left\|A \left(U, U_{\perp}\right) \Lambda_1 \left(V, V_{\perp}\right)^{\top} B^\top\right\|_{\mathcal{A}}. 
\end{aligned}
\end{equation}
Due to Lemma \ref{lem3}, we can get
\begin{equation}
    \begin{aligned}
        \|\hat{\Theta}-\mu^*\Theta^*\|_F^2&\leq \frac{3}{4}\beta \left\|A \left(U, U_{\perp}\right) \Lambda_2 \left(V, V_{\perp}\right)^{\top} B^\top\right\|_{\mathcal{A}}-\frac{1}{4} \beta \left\|A \left(U, U_{\perp}\right) \Lambda_1 \left(V, V_{\perp}\right)^{\top} B^\top\right\|_{\mathcal{A}}  \\
     &\leq \frac{3}{4}\beta \left\|A \left(U, U_{\perp}\right) \Lambda_2 \left(V, V_{\perp}\right)^{\top} B^\top\right\|_{\mathcal{A}}\\
    &\leq \frac{3}{4}\beta  \left\|\Lambda_2 \right\|_{*}\leq \frac{3}{4}\beta  \sqrt{2r}\left\|\Lambda_2\right\|_{F}\leq \frac{3}{4}\beta  \sqrt{2r}\left\|\Lambda\right\|_{F}\\
    &\leq \frac{3}{4}\beta  \sqrt{2r}\left\|A^{-1}(\widehat{\Theta}-\mu^*\Theta^*)B^{-\top}) \right\|_{F} \\
    &\leq \frac{3}{4}\beta \sqrt{2r}\left\|A^{-1}\right\| \left\|\widehat{\Theta}-\mu^* \Theta^*\right\|_{F} \left\|B^{-\top}\right\| \\
    &\leq \frac{3}{4}\beta  \sqrt{2r} \left\|\widehat{\Theta}-\mu^*\Theta^*\right\|_{F},   
    \end{aligned}
\end{equation}
that is $\left\|\widehat{\Theta}-\mu^* \Theta^*\right\|_{F}\leq \frac{3}{4}\beta \sqrt{2r} = \frac{6 \sqrt{\gamma(4\omega^2+r_{\max}^2)} \|A\| \|B\| \sqrt{d_1 d_2 r \log \left(\frac{2\left(d_1+d_2\right)} {\delta}\right)}}{\sqrt{T_1}}$. Due to Lemma \ref{lem6}, we have $\|A\|\leq \|A^{-1}\|\leq 1$ and $\|B\|\leq \|B^{-1}\|\leq 1$. Then letting $\zeta \triangleq \|A\| \|B\|$, we can obtain $\zeta \in (0,1]$.
\Halmos

\end{proof}

%第一步：利用2015协同过滤的思想，将两项正则的和等价转换成原子范数；第二步：与康悦对参数估计误差界的证明思路相一致，进而完成了证明。

\subsection{Proof of the Lemma \ref{lem22}}\label{appendixb.1}

% Handling of the Atomic Norm

\begin{proof}{Proof of Theorem \ref{lem22}}
    
According to Definition \ref{def5}, we can deduce that $\Theta^* = \sum_i c_i A u_i v_i^\top B^\top \triangleq AM B^\top$, simultaneously satisfying $\text{rank}(M) = \text{rank}(\Theta^*) = r$. Therefore, we can conclude that $\Theta^* \in \mathcal{M} = \{A M B^\top: \text{rank}(M) = r, A = U_w S_w^{\frac{1}{2}}, B = U_h S_h^{\frac{1}{2}}\}$. Let $M=U \Sigma V^\top$ be the truncated SVD, where $U \in \mathbb{R}^{d_1 \times r}$, $V \in \mathbb{R}^{d_2 \times r}$. Then
\begin{equation}
    \Theta^*=A \left(U, U_{\perp}\right)\left(\begin{array}{cc}
\Sigma & 0 \\
0 & 0
\end{array}\right)\left(V, V_{\perp}\right)^{\top} B^\top=A \left(U, U_{\perp}\right) \Sigma^*\left(V, V_{\perp}\right)^{\top} B^\top,
\end{equation}
where $U_{\perp} \in \mathbb{R}^{d_1 \times\left(d_1-r\right)}, \Sigma^* \in \mathbb{R}^{d_1 \times d_2}$ and $V_{\perp} \in \mathbb{R}^{d_2 \times\left(d_2-r\right)}$. Furthermore, we define
$\Lambda=\left(U, U_{\perp}\right)^{\top} A^{-1}\Theta B^{-\top}\left(V, V_{\perp}\right)=\Lambda_1+\Lambda_2$, where let $\Theta=\widehat{\Theta}-\mu^*\Theta^*$,
\begin{equation}
    \Lambda_1=\left(\begin{array}{cc}
0 & 0 \\
0 & U_{\perp}^{\top} A^{-1}\Theta B^{-\top} V_{\perp}
\end{array}\right), \quad \Lambda_2=\left(\begin{array}{cc}
U^{\top} A^{-1}\Theta B^{-\top} V & U^{\top} A^{-1}\Theta B^{-\top} V_{\perp} \\
U_{\perp}^{\top} A^{-1}\Theta B^{-\top} V & 0
\end{array}\right),
\end{equation}
i.e. $\Theta=A \left(U, U_{\perp}\right) \Lambda \left(V, V_{\perp}\right)^{\top} B^\top$. Please note that we can express $\|Z\|_{\mathcal{A}}$ of Lemma \ref{lemA.1} as follows: 
\begin{equation}
    \begin{aligned}
\|Z\|_{\mathcal{A}}=&\inf_{W,H} \frac{1}{2} \left\{ \mathrm{tr}(W^\top L_w W)+\mathrm{tr}(H^\top L_h H) \right\}=\inf_{W,H} \frac{1}{2} \{\|A^{-1}W\|_F^2 +\|B^{-1}H\|_F^2 \}\\
=&\inf_{W,H} \left\{\|A^{-1}W (B^{-1}H)^\top\|_*\right\}=\|A^{-1}Z B^{-\top}\|_*.
    \end{aligned}
\end{equation}

On the one hand, 
for $\left\|\mu^*\Theta^*\right\|_{\mathcal{A}}-\|\widehat{\Theta}\|_{\mathcal{A}}$, it holds that
\refstepcounter{equation}
\begin{equation}
    \begin{aligned}
&\left\| \widehat{\Theta}\right\|_{\mathcal{A}}=\left\| \mu^*\Theta^*+\Theta\right\|_{\mathcal{A}}=\left\| A^{-1}(\mu^*\Theta^*+\Theta)B^{-\top}\right\|_{*}
=\left\| \mu^*\Sigma^*+\Lambda\right\|_{*} \\
=&\left\| \mu^*\Sigma^*+\Lambda_1+\Lambda_2\right\|_{*}  \geq \left\|\mu^*\Sigma^*+\Lambda_1\right\|_{*}-\left\|\Lambda_2\right\|_{*} 
=\left\|\mu^*\Sigma^*\right\|_{*}+\left\|\Lambda_1\right\|_{*}-\left\|\Lambda_2\right\|_{*} \\
=&\left\| A^{-1}(A \left(U, U_{\perp}\right)\mu^* \Sigma^*\left(V, V_{\perp}\right)^{\top} B^\top)B^{-\top}\right\|_{*}+\left\|A^{-1}(A \left(U, U_{\perp}\right) \Lambda_1\left(V, V_{\perp}\right)^{\top} B^\top)B^{-\top}\right\|_{*}\notag \\&-\left\|A^{-1}(A \left(U, U_{\perp}\right) \Lambda_2\left(V, V_{\perp}\right)^{\top} B^\top)B^{-\top}\right\|_{*}\\
=&\left\| A \left(U, U_{\perp}\right) \mu^*\Sigma^*\left(V, V_{\perp}\right)^{\top} B^\top\right\|_{\mathcal{A}}+\left\|A \left(U, U_{\perp}\right) \Lambda_1\left(V, V_{\perp}\right)^{\top} B^\top\right\|_{\mathcal{A}}-\left\|A \left(U, U_{\perp}\right) \Lambda_2\left(V, V_{\perp}\right)^{\top} B^\top\right\|_{\mathcal{A}}\\
=&\left\| \mu^*\Theta^*\right\|_{\mathcal{A}}+\left\|A \left(U, U_{\perp}\right) \Lambda_1\left(V, V_{\perp}\right)^{\top} B^\top\right\|_{\mathcal{A}}-\left\|A \left(U, U_{\perp}\right) \Lambda_2\left(V, V_{\perp}\right)^{\top} B^\top\right\|_{\mathcal{A}}, 
    \end{aligned}
\end{equation}
which implies that $\left\|\mu^*\Theta^*\right\|_{\mathcal{A}}-\|\widehat{\Theta}\|_{\mathcal{A}} \leq \left\|A \left(U, U_{\perp}\right) \Lambda_2\left(V, V_{\perp}\right)^{\top} B^\top\right\|_{\mathcal{A}}-\left\|A \left(U, U_{\perp}\right) \Lambda_1\left(V, V_{\perp}\right)^{\top} B^\top\right\|_{\mathcal{A}}$.

On the other hand, 
for $\left\|\widehat{\Theta}-\mu^*\Theta^*\right\|_{\mathcal{A}}$, we can obtain
\refstepcounter{equation}
\begin{equation}
    \begin{aligned}
        \left\|\widehat{\Theta}-\mu^*\Theta^*\right\|_{\mathcal{A}}=&\left\|\Theta\right\|_{\mathcal{A}}=\left\|A \left(U, U_{\perp}\right) \Lambda \left(V, V_{\perp}\right)^{\top} B^\top\right\|_{\mathcal{A}}
=\left\|A^{-1}(A \left(U, U_{\perp}\right) \Lambda \left(V, V_{\perp}\right)^{\top} B^\top)B^{-\top}\right\|_{*}\\
=&\left\|\Lambda_1+\Lambda_2\right\|_{*}
=\left\|\Lambda_1\right\|_{*}+\left\|\Lambda_2\right\|_{*}
=\left\|A^{-1}(A \left(U, U_{\perp}\right) \Lambda_1 \left(V, V_{\perp}\right)^{\top} B^\top)B^{-\top}\right\|_{*} \notag \\&+\left\|A^{-1}(A \left(U, U_{\perp}\right) \Lambda_2 \left(V, V_{\perp}\right)^{\top} B^\top)B^{-\top}\right\|_{*}\\
=&\left\|A \left(U, U_{\perp}\right) \Lambda_1 \left(V, V_{\perp}\right)^{\top} B^\top\right\|_{\mathcal{A}}+\left\|A \left(U, U_{\perp}\right) \Lambda_2 \left(V, V_{\perp}\right)^{\top} B^\top\right\|_{\mathcal{A}}.
    \end{aligned}
\end{equation}
% \begin{align}
% \left\|\widehat{\Theta}-\mu^*\Theta^*\right\|_{\mathcal{A}}=&\left\|\Theta\right\|_{\mathcal{A}}=\left\|A \left(U, U_{\perp}\right) \Lambda \left(V, V_{\perp}\right)^{\top} B^\top\right\|_{\mathcal{A}}
% =\left\|A^{-1}(A \left(U, U_{\perp}\right) \Lambda \left(V, V_{\perp}\right)^{\top} B^\top)B^{-\top}\right\|_{*}\\
% =&\left\|\Lambda_1+\Lambda_2\right\|_{*}
% =\left\|\Lambda_1\right\|_{*}+\left\|\Lambda_2\right\|_{*}
% =\left\|A^{-1}(A \left(U, U_{\perp}\right) \Lambda_1 \left(V, V_{\perp}\right)^{\top} B^\top)B^{-\top}\right\|_{*} \notag \\&+\left\|A^{-1}(A \left(U, U_{\perp}\right) \Lambda_2 \left(V, V_{\perp}\right)^{\top} B^\top)B^{-\top}\right\|_{*}\\
% =&\left\|A \left(U, U_{\perp}\right) \Lambda_1 \left(V, V_{\perp}\right)^{\top} B^\top\right\|_{\mathcal{A}}+\left\|A \left(U, U_{\perp}\right) \Lambda_2 \left(V, V_{\perp}\right)^{\top} B^\top\right\|_{\mathcal{A}}.
% \end{align}
\Halmos
\end{proof}

\subsection{Proof of the Lemma \ref{lem3}}\label{appendixb.2}
% Handling of The Loss Function

This subsection aims to prove the relationship between the norm of the gradient of the loss function (specifically, the dual norm of the regularization norm) and the adjusted parameter, as stated in the Lemma \ref{lem3}. Before formally presenting the proof of the Lemma \ref{lem3}, it is necessary to clarify the form of the dual norm of the atomic norm, as stated in Lemma \ref{lem4}.
% \begin{lemma}
%     For the loss function defined as \( L_{T_1}(\Theta)\triangleq \langle\Theta, \Theta\rangle-\frac{2}{T_1} \sum_{i=1}^{T_1}\left\langle \psi_\nu\left(y_i \cdot S\left(X_i\right)\right), \Theta\right\rangle \), then use $\beta=\frac{4\zeta}{\sqrt{T_1}} \sqrt{2 d_1 d_2 \gamma \left(4 R^2+S_f^2\right) \log \left(\frac{2\left(d_1+d_2\right)}{\delta}\right)}$, $$\nu=\sqrt{\frac{2 \log \left(\frac{2\left(d_1+d_2\right)}{\delta}\right)}{T_1 d_1 d_2 \gamma \left(4 R^2+S_f^2\right)}},$$
%     with probability at least $1-\delta $, we have
% $\beta \geq 2 \|\nabla L\left(\mu^*\Theta^*\right)\|_{\mathcal{A}}^*$.
% \label{lem3}
% \end{lemma}

\begin{lemma}[\cite{rao2015collaborative}]
  The dual norm of the weighted atomic norm in Definition \ref{def5} is $\|Z\|_{\mathcal{A}}^*=\left\|A^\top Z B\right\|$.
  \label{lem4}
\end{lemma}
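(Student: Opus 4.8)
The plan is to reduce the claim to the standard fact that the dual of an atomic norm is the support function of its atomic set. Recall that for a centrally symmetric, compact set $\mathcal{A}$ whose convex hull has the origin in its interior, the atomic norm $\|\cdot\|_{\mathcal{A}}$ is exactly the Minkowski gauge of $\mathrm{conv}(\mathcal{A})$, and hence its dual norm is $\|Z\|_{\mathcal{A}}^{*}=\sup_{G\in\mathcal{A}}\langle G,Z\rangle$. First I would check that the atomic set at hand, $\mathcal{A}=\{(A\boldsymbol{u})(B\boldsymbol{v})^{\top}:\|\boldsymbol{u}\|=\|\boldsymbol{v}\|=1\}$ with $A=PS_p^{-1/2}$, $B=QS_q^{-1/2}$, satisfies these conditions: it is symmetric (negating $\boldsymbol{u}$ negates the atom), compact (a continuous image of a product of unit spheres, as $A,B$ are fixed), and, since $A$ and $B$ are invertible, its convex hull is full-dimensional so that $\|\cdot\|_{\mathcal{A}}$ is a genuine norm. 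The dual-norm identity above then applies directly.

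The key computational step is then to evaluate this support function. For an atom $G=(A\boldsymbol{u})(B\boldsymbol{v})^{\top}$ one has
$$\langle G,Z\rangle=\langle (A\boldsymbol{u})(B\boldsymbol{v})^{\top},Z\rangle=(A\boldsymbol{u})^{\top}Z(B\boldsymbol{v})=\boldsymbol{u}^{\top}\bigl(A^{\top}ZB\bigr)\boldsymbol{v}.$$
Maximizing the right-hand side over $\|\boldsymbol{u}\|=\|\boldsymbol{v}\|=1$ is precisely the variational characterization of the largest singular value, so $\|Z\|_{\mathcal{A}}^{*}=\sup_{\|\boldsymbol{u}\|=\|\boldsymbol{v}\|=1}\boldsymbol{u}^{\top}(A^{\top}ZB)\boldsymbol{v}=\sigma_{\max}(A^{\top}ZB)=\|A^{\top}ZB\|$, which is the claimed formula.

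If one prefers to avoid citing the gauge/support-function duality as a black box, the same conclusion follows from proving the two inequalities separately. For $\|Z\|_{\mathcal{A}}^{*}\leq\|A^{\top}ZB\|$: any $W$ with $\|W\|_{\mathcal{A}}\leq 1$ admits a representation $W=\sum_i c_i G_i$, $G_i\in\mathcal{A}$, with $\sum_i|c_i|$ arbitrarily close to $\|W\|_{\mathcal{A}}$, whence $\langle Z,W\rangle=\sum_i c_i\langle Z,G_i\rangle\leq\bigl(\sum_i|c_i|\bigr)\sup_{G\in\mathcal{A}}|\langle Z,G\rangle|\leq\|A^{\top}ZB\|$ by the per-atom bound above. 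For the reverse inequality, pick unit vectors $\boldsymbol{u}^{\star},\boldsymbol{v}^{\star}$ attaining $\sigma_{\max}(A^{\top}ZB)$; then $G^{\star}=(A\boldsymbol{u}^{\star})(B\boldsymbol{v}^{\star})^{\top}\in\mathcal{A}$ has $\|G^{\star}\|_{\mathcal{A}}\leq 1$ and $\langle Z,G^{\star}\rangle=\|A^{\top}ZB\|$, giving $\|Z\|_{\mathcal{A}}^{*}\geq\|A^{\top}ZB\|$. I do not expect a genuine obstacle here; the only point needing care is the well-definedness check, i.e. that $S_p,S_q$ (equivalently the weight matrices feeding into $A,B$ after the reductions of Lemma \ref{lem1}) are positive definite so that $A,B$ are invertible and $\|\cdot\|_{\mathcal{A}}$ is a norm; in a degenerate case one restricts attention to the range of the relevant linear map and the identity persists there.
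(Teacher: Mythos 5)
Your proof is correct: the paper itself does not prove this lemma but simply imports it from \cite{rao2015collaborative}, and your argument --- identifying the dual atomic norm with the support function $\sup_{G\in\mathcal{A}}\langle G,Z\rangle$ and evaluating $\boldsymbol{u}^{\top}(A^{\top}ZB)\boldsymbol{v}$ over unit vectors to get $\sigma_{\max}(A^{\top}ZB)$ --- is exactly the standard derivation behind the cited result. Your side remark on well-definedness is also consistent with the paper's setting, since the conditions of Lemma \ref{lem1} force $L_h=\lambda I$ and $L_w\succeq\lambda I$, so $S_p,S_q$ are positive definite and $A,B$ are invertible as you require.
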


In addition, despite the difference in the regularization norm between this paper and \cite{kang2022efficient}, the corresponding loss function is the same. Therefore, in the proof, we shall utilize a conclusion from that paper, which we present as follows.

\begin{lemma}[Lemma B.4 in \cite{kang2022efficient}]
    $L: \mathbb{R}^{d_1 \times d_2} \rightarrow \mathbb{R}$ is the loss function defined as \( L_{T_1}(\Theta)\triangleq \langle\Theta, \Theta\rangle-\frac{2}{T_1} \sum_{i=1}^{T_1}\left\langle \psi_\nu\left(y_i \cdot S\left(X_i\right)\right), \Theta\right\rangle \). Then by setting
    \begin{equation*}
        \begin{aligned}
           t=\sqrt{2 d_1 d_2 \gamma \left(4 \omega^2+r_{\max}^2\right) \log \left(\frac{2\left(d_1+d_2\right)}{\delta}\right)}, 
\nu=\frac{t}{\left(4 \omega^2+r_{\max
}^2\right) \gamma d_1 d_2 \sqrt{T_1}}=\sqrt{\frac{2 \log \left(\frac{2\left(d_1+d_2\right)}{\delta}\right)}{T_1 d_1 d_2 \gamma \left(4 \omega^2+r_{\max}^2\right)}},  
        \end{aligned}
    \end{equation*}
% \begin{align*}
%    t=\sqrt{2 d_1 d_2 \gamma \left(4 \omega^2+r_{\max}^2\right) \log \left(\frac{2\left(d_1+d_2\right)}{\delta}\right)}, 
% \nu=\frac{t}{\left(4 \omega^2+r_{\max
% }^2\right) \gamma d_1 d_2 \sqrt{T_1}}=\sqrt{\frac{2 \log \left(\frac{2\left(d_1+d_2\right)}{\delta}\right)}{T_1 d_1 d_2 \gamma \left(4 \omega^2+r_{\max}^2\right)}}, 
% \end{align*}
we have with probability at least $1-\delta$, it holds that $P\left(\left\|\nabla L\left( \mu^*\Theta^*\right)\right\| \geq \frac{2 t}{\sqrt{T_1}}\right) \leq \delta$,
where $\mu^*=\mathbb{E}[\mu^\prime (\langle X, \Theta^* \rangle)]$.
\label{lem5}
\end{lemma}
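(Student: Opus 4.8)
The plan is to follow the robust matrix mean estimation argument of \cite{minsker2018sub}, as adapted to the generalized linear matrix setting in \cite{kang2022efficient}. First I would compute the gradient explicitly: since $L_{T_1}(\Theta)=\langle\Theta,\Theta\rangle-\frac{2}{T_1}\sum_{i=1}^{T_1}\langle\psi_\nu(y_i S(X_i)),\Theta\rangle$ is quadratic in $\Theta$, we have $\nabla L_{T_1}(\Theta)=2\Theta-\frac{2}{T_1}\sum_{i=1}^{T_1}\psi_\nu(y_i S(X_i))$, and hence $\nabla L_{T_1}(\mu^*\Theta^*)=2\big(\mu^*\Theta^*-\tfrac{1}{T_1}\sum_{i=1}^{T_1}\psi_\nu(y_i S(X_i))\big)$. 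Therefore it suffices to prove that the truncated empirical average concentrates around $\mu^*\Theta^*$, namely $\big\|\tfrac{1}{T_1}\sum_{i}\psi_\nu(y_i S(X_i))-\mu^*\Theta^*\big\|\leq t/\sqrt{T_1}$ with probability at least $1-\delta$.

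The second step identifies the target as $\mathbb{E}[y S(X)]=\mu^*\Theta^*$ via a Stein-type (score function) identity. Using the entrywise score structure of Definition~\ref{s-function-def} and the feature distribution of Assumption~\ref{as1}, integration by parts gives $\mathbb{E}[g(X)S(X_{ij})]=\mathbb{E}[\partial_{X_{ij}}g(X)]$ for smooth $g$; applying this to $g(X)=\mu(\langle X,\Theta^*\rangle)$ and using $\mathbb{E}[y\mid X]=\mu(\langle X,\Theta^*\rangle)$ yields $\mathbb{E}[y S(X)]=\mathbb{E}[\mu'(\langle X,\Theta^*\rangle)]\,\Theta^*=\mu^*\Theta^*$. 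So the (untruncated) estimator is exactly unbiased, and $\psi_\nu$ will only introduce a controllable bias.

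The third step applies the sub-Gaussian matrix mean estimator guarantee of \cite{minsker2018sub} to the summands $Z_i\triangleq y_i S(X_i)\in\mathbb{R}^{d_1\times d_2}$. The required variance proxy is $v\geq\mathbb{E}\|Z\|_F^2\vee\|\mathbb{E}[ZZ^\top]\|\vee\|\mathbb{E}[Z^\top Z]\|$, and here $\mathbb{E}[y^2\mid X]\leq 4\omega^2+r_{\max}^2$ follows from sub-Gaussianity of $\epsilon$ with parameter $\omega$ together with $|\mu(\langle X,\Theta^*\rangle)|\leq|\mu(0)|+k_\mu|\langle X,\Theta^*\rangle|\leq|\mu(0)|+k_\mu=r_{\max}$ using Assumptions~\ref{as2}--\ref{as3}, while $\mathbb{E}\|S(X)\|_F^2=\sum_{ij}\mathbb{E}[S^2(X_{ij})]\leq d_1 d_2\gamma$ by Assumption~\ref{as1}; conditioning on $X$ gives $v\leq d_1 d_2\gamma(4\omega^2+r_{\max}^2)$. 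For this variance proxy, the choice $\nu=\sqrt{2\log(2(d_1+d_2)/\delta)/(T_1 v)}$ in Definition~\ref{def3} simultaneously keeps the truncation bias at the right order and makes each $\psi_\nu(Z_i)$ effectively bounded in operator norm, so a matrix Bernstein/Freedman inequality gives $\big\|\tfrac{1}{T_1}\sum_i\psi_\nu(Z_i)-\mu^*\Theta^*\big\|\leq\sqrt{2v\log(2(d_1+d_2)/\delta)/T_1}=t/\sqrt{T_1}$; combining with the first step yields $\|\nabla L_{T_1}(\mu^*\Theta^*)\|\leq 2t/\sqrt{T_1}$, as claimed, and one checks directly that the stated $\nu$ and $t$ are exactly this balancing choice.

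The main obstacle is the analysis of the non-commutative truncation operator $\psi_\nu$ (defined through the full SVD of the self-adjoint dilation): one must show both that it perturbs the mean only negligibly and that it renders the summands sufficiently bounded for an exponential matrix concentration bound to apply, which is where the coupling between $\nu$, the variance proxy $v$, the dimension, and the failure probability $\delta$ gets pinned down. Since this is precisely Lemma~B.4 of \cite{kang2022efficient} (itself building on the techniques of \cite{minsker2018sub}), one may alternatively cite that lemma directly, after verifying only that $v\leq d_1 d_2\gamma(4\omega^2+r_{\max}^2)$ holds under Assumptions~\ref{as1}--\ref{as3}.
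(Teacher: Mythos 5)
The paper itself gives no proof of this statement—it is imported verbatim as Lemma B.4 of \cite{kang2022efficient}—and your sketch correctly reconstructs the argument used there: reduce $\|\nabla L_{T_1}(\mu^*\Theta^*)\|$ to concentration of the truncated average $\frac{1}{T_1}\sum_i \psi_\nu(y_i S(X_i))$ around $\mu^*\Theta^*$, identify the target via the Stein/score identity $\mathbb{E}[y\,S(X)]=\mu^*\Theta^*$, bound the variance proxy by $d_1 d_2\gamma(4\omega^2+r_{\max}^2)$ using Assumptions \ref{as1}--\ref{as3}, and invoke the dilation-truncation estimator guarantee of \cite{minsker2018sub} with the stated $\nu$. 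So your proposal is correct and takes essentially the same route as the cited source; your closing observation that one may simply verify the variance-proxy bound and cite the lemma is exactly how the paper handles it.
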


\begin{proof}{Proof of Lemma \ref{lem3}}
\begin{equation}
    \begin{aligned}
       &P\left(\left\|\nabla L\left( \mu^* \Theta^*\right)\right\|_{\mathcal{A}}^* \geq \frac{\beta}{2}\right)=P\left(\left\|A^\top \nabla L\left(\mu^* \Theta^*\right)B\right\| \geq \frac{\beta}{2}\right)\\
     \leq & P\left(\left\|A\right\| \left\| \nabla L\left( \mu^*\Theta^*\right)\right\| \left\|B\right\| \geq \frac{\beta}{2}\right)
    =P\left( \left\| \nabla L\left( \mu^*\Theta^*\right)\right\|  \geq \frac{\beta}{2\left\|A\right\| \left\|B\right\|}\right). 
    \end{aligned}
\end{equation}
% \begin{align}
%     &P\left(\left\|\nabla L\left( \mu^* \Theta^*\right)\right\|_{\mathcal{A}}^* \geq \frac{\beta}{2}\right)=P\left(\left\|A^\top \nabla L\left(\mu^* \Theta^*\right)B\right\| \geq \frac{\beta}{2}\right)\\
%      \leq & P\left(\left\|A\right\| \left\| \nabla L\left( \mu^*\Theta^*\right)\right\| \left\|B\right\| \geq \frac{\beta}{2}\right)
%     =P\left( \left\| \nabla L\left( \mu^*\Theta^*\right)\right\|  \geq \frac{\beta}{2\left\|A\right\| \left\|B\right\|}\right).
% \end{align}
Then according to Lemma \ref{lem5}, let $\beta=\frac{4\left\|A\right\| \left\|B\right\|}{\sqrt{T_1}} \sqrt{2 d_1 d_2 \gamma \left(4 \omega^2+r_{\max}^2\right) \log \left(\frac{2\left(d_1+d_2\right)}{\delta}\right)}$, we have completed the proof of Lemma \ref{lem3}.
\Halmos
\end{proof}

\subsection{Proof the Lemma \ref{lem6}}\label{appendixb.3}
% Verify Graph-Based Factor

% In order to demonstrate that the bound in this paper has indeed been improved, we need to prove that the introduced additional factor is no greater than 1, as stated in the following lemma.
% \begin{lemma}
%     Let $\lambda \leq 1$, $\alpha \leq \frac{1}{2 c_\mu \lambda_{\max}^{2}(H)\lambda_{\max}^{2}(X)\lambda_{\max}(L)}$, then $\|A^{-1}\|\leq 1,\|B^{-1}\|\leq 1$.
%     \label{lem6}
% \end{lemma}

The proof process of the Lemma \ref{lem6} will involve some fundamental facts about the singular values. To this end, we describe it as follows:
\begin{itemize}
    \item For any matrix $A,B$, $\sigma_i(A^\top B)\leq \sigma_i(A)\sigma_1(B)$, where $\sigma_i(\cdot)$ is the $i$-th largest singular value.
    \item For any matrix $A\in \mathbb{R}^{n \times n}$, $B\in \mathbb{R}^{p \times p}$, let $\lambda_1,\lambda_1,\dots,\lambda_n$
 be the singular values of $A$, $\mu_1,\mu_1,\dots,\mu_p$
 be the singular values of $B$. 
Then the singular values of $A \otimes B$ are $\lambda_i \mu_j, \; i=1,2,\dots,n, \; j=1,2,\dots,p$.
    \item For any Laplacian matrix $L$, $\sigma_{\min}(L)=0$.
    \item For the graph Laplacian $L = D - W$, $ \sigma_{\max}(L) \leq 2 d_{\max}\leq 2(n-1)$, where $d_{\max} = \max_i \sum_j W_{ij}$ is the maximum node degree, $n$ is the number of nodes. 
\end{itemize}

\begin{proof}{Proof of Lemma \ref{lem6}}
According to the definition of $B$ in Lemma \ref{lem1}, we have $\|B^{-1}\|=\|(U_h S_h^{-\frac{1}{2}})^{-1}\| =\| S_h^{\frac{1}{2}}\| =\sigma_{\max}\left(S_h^{\frac{1}{2}}\right)  =\sigma_{\max}^{\frac{1}{2}} \left(S_h\right) =\sigma_{\max}^{\frac{1}{2}} \left(L_h\right)  =\sigma_{\max}^{\frac{1}{2}} \left(\lambda I\right)  = \lambda^{\frac{1}{2}} $. Let $\lambda \leq 1$, then $\|B^{-1}\| \leq1$. Similarly, we can obtain $\|A^{-1}\|=\|(U_w S_w^{-\frac{1}{2}})^{-1}\| =\| S_w^{\frac{1}{2}} U_w\| =\| S_w^{\frac{1}{2}}\| =\sigma_{\max}\left(S_w^{\frac{1}{2}}\right)  =\sigma_{\max}^{\frac{1}{2}} \left(S_w\right)  =\sigma_{\max}^{\frac{1}{2}} \left(L_w\right)$. Next, we further bound $\sigma_{\max} \left(L_w\right)$ by
\begin{equation}
    \begin{aligned} \sigma_{\max}\left(L_w\right)&=\sigma_{\max}\left(I \otimes L_w\right)=\sigma_{\max}\left(2a_\mu \alpha\left(H^\top \otimes I \right)\widetilde{X}^\top L \widetilde{X} \left(H \otimes I \right)+\lambda I\right)\\
&=2a_\mu \alpha \sigma_{\max}\left(\left(H^\top \otimes I \right)\widetilde{X}^\top L \widetilde{X} \left(H \otimes I \right)\right)+\lambda \\
&\leq 2a_\mu \alpha \sigma_{\max}\left(H^\top \otimes I \right)\sigma_{\max}\left(\widetilde{X}^\top L \widetilde{X} \left(H \otimes I \right)\right)+\lambda \\
&\leq 2a_\mu \alpha \sigma_{\max}\left(H^\top \otimes I \right)\sigma_{\max}\left(\widetilde{X}^\top L \widetilde{X}\right)\sigma_{\max}\left(H\otimes I \right)+\lambda \\
&\leq 2a_\mu \alpha \sigma_{\max}\left(H^\top \otimes I \right)\sigma_{\max}\left(\widetilde{X}^\top \right)\sigma_{\max}\left( L \right)\sigma_{\max}\left(\widetilde{X} \right)\sigma_{\max}\left(H\otimes I \right)+\lambda \\
&=2a_\mu \alpha \sigma_{\max}^2\left(H \right) \sigma_{\max}^2\left(\widetilde{X} \right) \sigma_{\max}\left(L \right)+\lambda\\
&=2a_\mu \alpha \sigma_{\max}^2\left(V_\Theta S_\Theta^{\frac{1}{2}} \right) \sigma_{\max}^2\left(\widetilde{X} \right) \sigma_{\max}\left(L \right)+\lambda\\
&=2a_\mu \alpha \sigma_{\max}\left(\Theta  \right) \sigma_{\max}^2\left(\widetilde{X} \right) \sigma_{\max}\left(L \right)+\lambda\\
&\leq2a_\mu \alpha \left\|\Theta  \right\|_F \sigma_{\max}^2\left(\widetilde{X} \right) \sigma_{\max}\left(L \right)+\lambda\\
&\leq2a_\mu \alpha \sigma_{\max}^2\left(\widetilde{X} \right) \sigma_{\max}\left(L \right)+\lambda,
% \\&\leq4a_\mu \alpha \left\|\Theta  \right\|_F n^2 d_{\max}+\lambda
% \\&\leq2a_\mu \alpha \sigma_{\max}^2\left(\widetilde{X} \right) \sigma_{\max}\left(L \right)+\lambda.
\end{aligned}
\end{equation}
where the condition for the last inequality to hold is the assumption $\|\Theta\|_F \leq 1$ made without loss of generality.
% \begin{align*}
% \lambda_{\max}\left(L_w\right)&=\lambda_{\max}\left(I \otimes L_w\right)\\
% &=\lambda_{\max}\left(2a_\mu \alpha\left(H^\top \otimes I \right)X^\top L X \left(H \otimes I \right)+\lambda I\right)\\
% &=2a_\mu \alpha \lambda_{\max}\left(\left(H^\top \otimes I \right)X^\top L X \left(H \otimes I \right)\right)+\lambda \\
% &\leq 2a_\mu \alpha \lambda_{\max}\left(H^\top \otimes I \right)\lambda_{\max}\left(X^\top L X \left(H \otimes I \right)\right)+\lambda \\
% &\leq 2a_\mu \alpha \lambda_{\max}\left(H^\top \otimes I \right)\lambda_{\min}\left(X^\top L X\right)\lambda_{\max}\left(H\otimes I \right)+\lambda \\
% &\leq 2a_\mu \alpha \lambda_{\max}\left(H^\top \otimes I \right)\lambda_{\max}\left(X^\top \right)\lambda_{\max}\left( L \right)\lambda_{\max}\left(X \right)\lambda_{\max}\left(H\otimes I \right)+\lambda \\
% &=2a_\mu \alpha \lambda_{\max}^2\left(H \right) \lambda_{\max}^2\left(X \right) \lambda_{\max}\left(L \right)+\lambda,
% \end{align*}
Then, letting 
\begin{equation}
 \alpha  \leq \frac{1-\lambda}{2 a_\mu  \sigma_{\max}^2\left(\widetilde{X} \right) \sigma_{\max}\left(L \right)},
 \label{alpha}
\end{equation}
we have $\|A^{-1}\| =\sigma_{\max}^{\frac{1}{2}} \left(L_w\right) \leq1$. For practical purposes, using the bounds $\sigma_{\max}^2\left(\widetilde{X} \right) \leq \left\|\widetilde{X} \right\|_F^2 \leq n, \sigma_{\max}\left(L \right)\leq 2(n-1)$, we further relax the condition to $\alpha \leq \frac{1 -\lambda}{4 a_\mu n (n-1)}$, to avoid the explicit computation of singular values.

    \Halmos
\end{proof}

\section{Proof of the Theorem \ref{the2}}

\renewcommand{\thelemma}{C.\arabic{lemma}}  % 附录中的引理编号为 A.1, A.2, ...

\setcounter{lemma}{0}    % 附录中重新开始引理计数

Theorem \ref{the2} delineates the confidence interval of expected reward. In its proof, we primarily rely on the Fundamental Theorem of Calculus and inequalities involving weighted norm, thereby breaking it down into two components involving noise and true parameter. The proofs of these two components are associated with the following two lemmas, respectively.

\begin{lemma}[Lemma C.3 in \cite{kang2022efficient}]
  Let $\left\{F_t\right\}_{t=0}^{\infty}$ be a filtration and $\left\{\eta_t\right\}_{t=1}^{\infty}$ be a real-valued stochastic process such that $\eta_t$ is $F_t$-measurable and $\eta_t$ is conditionally $\omega$-sub-Gaussian for some $\omega \geq 0$ i.e.
$$
\forall \lambda \in \mathbb{R}, \; \mathbb{E}\left[e^{\lambda \eta_t} \mid F_{t-1}\right] \leq \exp \left(\frac{\lambda^2 \omega^2}{2}\right) .
$$
Let $\left\{\boldsymbol{x}_t\right\}_{t=1}^{\infty}$ be an $\mathbb{R}^d$-valued stochastic process such that $\boldsymbol{x}_t$ is $F_{t-1}$-measurable. Assume that $V$ is a $d \times d$ positive definite matrix and independent with sample random variables after time $m$. For any $t \geq 2$, define
$$
\bar{V}_t=V+\sum_{s=1}^t \boldsymbol{x}_s \boldsymbol{x}_s^{\top} \quad \boldsymbol{s}_t=\sum_{s=1}^t \eta_s \boldsymbol{x}_s .
$$
Then, for any $\delta>0$, with probability at least $1-\delta$, for all $t \geq m+1$,
$$
\left\|\boldsymbol{s}_t\right\|_{\bar{V}_t^{-1}}^2 \leq  \omega^2 \log \left(\frac{\left|\bar{V}_t\right| }{|V| \delta^2}\right)
.$$  
\label{lem7}
\end{lemma}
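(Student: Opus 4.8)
The plan is to prove this self-normalized tail bound by the classical ``method of mixtures'' (pseudo-maximization) argument, as used by Abbasi-Yadkori, P\'al and Szepesv\'ari and ultimately resting on the self-normalized martingale machinery of Pe\~na, Klass and Lai. First, for each fixed direction $\boldsymbol{\lambda}\in\mathbb{R}^d$, I would introduce the scalar process $M_t^{\boldsymbol{\lambda}}=\prod_{s=1}^{t}\exp(\langle\boldsymbol{\lambda},\eta_s\boldsymbol{x}_s\rangle-\tfrac{\omega^2}{2}\langle\boldsymbol{\lambda},\boldsymbol{x}_s\rangle^2)=\exp(\langle\boldsymbol{\lambda},\boldsymbol{s}_t\rangle-\tfrac{\omega^2}{2}\|\boldsymbol{\lambda}\|_{S_t}^2)$, where $S_t\triangleq\sum_{s=1}^t\boldsymbol{x}_s\boldsymbol{x}_s^\top$ and $M_0^{\boldsymbol{\lambda}}=1$. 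Since $\boldsymbol{x}_s$ is $F_{s-1}$-measurable, so is the scalar $\langle\boldsymbol{\lambda},\boldsymbol{x}_s\rangle$; conditioning on $F_{s-1}$ and applying the $\omega$-sub-Gaussian moment-generating-function bound to $\langle\boldsymbol{\lambda},\eta_s\boldsymbol{x}_s\rangle=\eta_s\langle\boldsymbol{\lambda},\boldsymbol{x}_s\rangle$ gives $\mathbb{E}[\exp(\langle\boldsymbol{\lambda},\eta_s\boldsymbol{x}_s\rangle-\tfrac{\omega^2}{2}\langle\boldsymbol{\lambda},\boldsymbol{x}_s\rangle^2)\mid F_{s-1}]\le1$; hence $\{M_t^{\boldsymbol{\lambda}}\}_{t\ge0}$ is a nonnegative supermartingale with $\mathbb{E}[M_t^{\boldsymbol{\lambda}}]\le1$.

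Next, I would ``mix'' over $\boldsymbol{\lambda}$ against the Gaussian prior $h=\mathcal{N}(\boldsymbol{0},(\omega^2 V)^{-1})$ and set $\bar M_t=\int_{\mathbb{R}^d}M_t^{\boldsymbol{\lambda}}\,dh(\boldsymbol{\lambda})$. By Tonelli's theorem $\bar M_t$ is again a nonnegative supermartingale with $\bar M_0=1$, and because the integrand is the exponential of a quadratic in $\boldsymbol{\lambda}$ with exponent $\langle\boldsymbol{\lambda},\boldsymbol{s}_t\rangle-\tfrac{\omega^2}{2}\|\boldsymbol{\lambda}\|_{S_t}^2-\tfrac{\omega^2}{2}\|\boldsymbol{\lambda}\|_V^2=\langle\boldsymbol{\lambda},\boldsymbol{s}_t\rangle-\tfrac{\omega^2}{2}\|\boldsymbol{\lambda}\|_{\bar V_t}^2$ (using $\bar V_t=V+S_t$), completing the square and carrying out the Gaussian integral yields, once the $\omega^{2d}$ factors in $\det(\omega^2 V)$ and $\det(\omega^2\bar V_t)$ cancel, the closed form $\bar M_t=(|V|/|\bar V_t|)^{1/2}\exp(\tfrac{1}{2\omega^2}\|\boldsymbol{s}_t\|_{\bar V_t^{-1}}^2)$. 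Then I would apply Ville's maximal inequality for nonnegative supermartingales, $P(\sup_{t\ge0}\bar M_t\ge1/\delta)\le\delta\,\bar M_0=\delta$. On the complementary event $\bar M_t<1/\delta$ for every $t$; substituting the closed form, taking logarithms and rearranging converts this into $\|\boldsymbol{s}_t\|_{\bar V_t^{-1}}^2<2\omega^2\log(1/\delta)+\omega^2\log(|\bar V_t|/|V|)=\omega^2\log(|\bar V_t|/(|V|\delta^2))$ simultaneously for all $t$, which is exactly the claimed bound.

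The one place that needs real care --- and which I expect to be the main obstacle --- is the generality of the hypothesis on $V$: it need only be independent of the sample variables \emph{after} time $m$, and the conclusion is required only for $t\ge m+1$. Since then $V$ may depend on $\{\eta_s,\boldsymbol{x}_s\}_{s\le m}$, the mixture prior $\mathcal{N}(\boldsymbol{0},(\omega^2 V)^{-1})$ is not deterministic, so I would handle it by conditioning on $F_m$ and restarting the construction at time $m$: given $F_m$ the objects $V$, $S_m$ and $\boldsymbol{s}_m$ are fixed, the post-$m$ noises $\{\eta_s\}_{s>m}$ are still conditionally $\omega$-sub-Gaussian by the independence assumption, and one reruns the argument above with effective regularizer $\widetilde V\triangleq V+S_m$ (so that $\bar V_t=\widetilde V+\sum_{s=m+1}^t\boldsymbol{x}_s\boldsymbol{x}_s^\top$) and, where needed, a shifted prior mean to re-centre the post-$m$ noise around the fixed vector $\boldsymbol{s}_m$ so that $\langle\boldsymbol{\lambda},\boldsymbol{s}_t\rangle$ reappears. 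Getting this conditioning-and-restart step exactly right, together with the Gaussian-integral bookkeeping in the second paragraph (the square-completion and the determinant-ratio cancellation), is where essentially all the effort lies; the remaining steps are routine.
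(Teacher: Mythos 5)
This lemma is not proved in the paper at all: it is imported verbatim as Lemma C.3 of \cite{kang2022efficient}, and the argument behind it is exactly the Abbasi-Yadkori-style method of mixtures you describe. Your first two paragraphs (the per-direction supermartingale $M_t^{\boldsymbol{\lambda}}$, the Gaussian mixture with prior $\mathcal{N}(\boldsymbol{0},(\omega^2V)^{-1})$, the closed form $\bar M_t=(|V|/|\bar V_t|)^{1/2}\exp\bigl(\tfrac{1}{2\omega^2}\|\boldsymbol{s}_t\|_{\bar V_t^{-1}}^2\bigr)$, and Ville's inequality) are correct and are the standard proof of the fixed-$V$ case.

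The place where your sketch does not deliver the stated bound is precisely the step you flag: the ``restart at time $m$ with regularizer $\widetilde V=V+S_m$ and a shifted prior mean.'' Carrying out that construction (prior $\mathcal{N}(\widetilde V^{-1}\boldsymbol{s}_m/\omega^2,(\omega^2\widetilde V)^{-1})$ applied to the post-$m$ increments) gives the mixture $(|\widetilde V|/|\bar V_t|)^{1/2}\exp\bigl(\tfrac{1}{2\omega^2}[\|\boldsymbol{s}_t\|_{\bar V_t^{-1}}^2-\|\boldsymbol{s}_m\|_{\widetilde V^{-1}}^2]\bigr)$, so Ville's inequality yields $\|\boldsymbol{s}_t\|_{\bar V_t^{-1}}^2\le\|\boldsymbol{s}_m\|_{(V+S_m)^{-1}}^2+\omega^2\log\bigl(|\bar V_t|/(|V+S_m|\,\delta^2)\bigr)$, i.e.\ the determinant ratio is taken with respect to $|V+S_m|$ rather than $|V|$ and there is an uncontrolled additive term $\|\boldsymbol{s}_m\|_{(V+S_m)^{-1}}^2$ coming from the pre-$m$ noise. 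To recover the lemma as stated one must still control that pre-$m$ contribution by roughly $\omega^2\log(|V+S_m|/|V|)$, which is itself a self-normalized bound over the first $m$ steps and is exactly the part the restart was meant to avoid; so the re-centering trick, as described, only proves a weaker inequality. The intended treatment is instead to run a single mixture from time $0$ with the prior built on $V$, treating $V$ as fixed by conditioning on it (the hypothesis that $V$ is independent of the post-$m$ samples is what keeps the conditional sub-Gaussianity, and hence the supermartingale property, valid on the enlarged filtration for the rounds where the conclusion is claimed, $t\ge m+1$); that is the route taken in the cited source, and your write-up would need to replace the restart-with-shifted-mean step by this conditioning argument (or otherwise bound the pre-$m$ term) to actually obtain $\omega^2\log\bigl(|\bar V_t|/(|V|\delta^2)\bigr)$.
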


% \begin{lemma}[Theorem 1 in \cite{abbasi2011improved}]
%   Let $\left\{F_t\right\}_{t=0}^{\infty}$ be a filtration. Let $\left\{\eta_t\right\}_{t=1}^{\infty}$ be a real-valued stochastic process such that $\eta_t$ is $F_t$-measurable and $\eta_t$ is conditionally $\omega$-sub-Gaussian for some $\omega \geq 0$ i.e.
% $$
% \forall \lambda \in \mathbb{R}, \; \mathbb{E}\left[e^{\lambda \eta_t} \mid F_{t-1}\right] \leq \exp \left(\frac{\lambda^2 \omega^2}{2}\right) .
% $$
% Let $\left\{\boldsymbol{x}_t\right\}_{t=1}^{\infty}$ be an $\mathbb{R}^d$-valued stochastic process such that $\boldsymbol{x}_t$ is $F_{t-1}$-measurable. Assume that $V$ is a $d \times d$ positive definite matrix. For any $t \geq 0$, define
% $$
% \bar{V}_t=V+\sum_{s=1}^t \boldsymbol{x}_s \boldsymbol{x}_s^{\top} \quad \boldsymbol{s}_t=\sum_{s=1}^t \eta_s \boldsymbol{x}_s .
% $$
% Then, for any $\delta>0$, with probability at least $1-\delta$, for all $t \geq 0$,
% $$
% \left\|\boldsymbol{s}_t\right\|_{\bar{V}_t^{-1}}^2 \leq  \omega^2 \log \left(\frac{\left|\bar{V}_t\right| }{|V| \delta^2}\right)
% .$$  
% \label{lem7}
% \end{lemma}

\begin{lemma}[Lemma 16 in \cite{kocak2020spectral}]
    For any symmetric, positive semi-definite matrix $X$, and any vectors $\boldsymbol{u}$ and $\boldsymbol{y}$, $\boldsymbol{y}^{\top}\left(X+\boldsymbol{u u}^{\top}\right)^{-1} \boldsymbol{y} \leq \boldsymbol{y}^{\top} X^{-1} \boldsymbol{y}$.
\label{lem8}
\end{lemma}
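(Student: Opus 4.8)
The plan is to reduce the claim to the Sherman--Morrison rank-one update formula and then observe that a manifestly non-negative correction term is being subtracted. I would treat $X$ as positive definite so that both $X^{-1}$ and $(X+\boldsymbol{u}\boldsymbol{u}^\top)^{-1}$ exist; in the application the matrix $X$ is one of the design matrices $\bar{V}_t$ or $V(c_\mu)$, which are strictly positive definite, so this causes no loss. For a genuinely rank-deficient positive semi-definite $X$ the same computation goes through for any invertible $X+\boldsymbol{u}\boldsymbol{u}^\top$, or via the Moore--Penrose pseudoinverse with an identical limiting argument, so I would state the invertibility as a standing assumption at the outset.

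First I would recall the Sherman--Morrison identity: for invertible $X$ with $1+\boldsymbol{u}^\top X^{-1}\boldsymbol{u}\neq 0$,
$$
(X+\boldsymbol{u}\boldsymbol{u}^\top)^{-1}=X^{-1}-\frac{X^{-1}\boldsymbol{u}\boldsymbol{u}^\top X^{-1}}{1+\boldsymbol{u}^\top X^{-1}\boldsymbol{u}}.
$$
Substituting this into the quadratic form against $\boldsymbol{y}$ and using the symmetry of $X^{-1}$ gives
$$
\boldsymbol{y}^\top(X+\boldsymbol{u}\boldsymbol{u}^\top)^{-1}\boldsymbol{y}=\boldsymbol{y}^\top X^{-1}\boldsymbol{y}-\frac{\bigl(\boldsymbol{y}^\top X^{-1}\boldsymbol{u}\bigr)^2}{1+\boldsymbol{u}^\top X^{-1}\boldsymbol{u}}.
$$

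The key step is then to verify that the subtracted term is non-negative. Its numerator $\bigl(\boldsymbol{y}^\top X^{-1}\boldsymbol{u}\bigr)^2$ is a perfect square, hence $\geq 0$, and its denominator satisfies $1+\boldsymbol{u}^\top X^{-1}\boldsymbol{u}\geq 1>0$ because $X^{-1}$ inherits positive (semi-)definiteness from $X$, so $\boldsymbol{u}^\top X^{-1}\boldsymbol{u}\geq 0$. Dropping this non-negative correction yields exactly $\boldsymbol{y}^\top(X+\boldsymbol{u}\boldsymbol{u}^\top)^{-1}\boldsymbol{y}\leq \boldsymbol{y}^\top X^{-1}\boldsymbol{y}$, which is the assertion.

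There is no substantive obstacle here; the only point requiring care is well-definedness of the inverses and the strict positivity of the Sherman--Morrison denominator, both of which follow from the invertibility assumption I would fix at the start. As a self-consistency check I would also note a formula-free route: since $X\preceq X+\boldsymbol{u}\boldsymbol{u}^\top$ in the Loewner order and matrix inversion is operator-antitone on positive definite matrices (i.e. $A\preceq B\Rightarrow B^{-1}\preceq A^{-1}$), we immediately get $(X+\boldsymbol{u}\boldsymbol{u}^\top)^{-1}\preceq X^{-1}$ and hence the quadratic-form inequality by testing against $\boldsymbol{y}$. I would nonetheless present the Sherman--Morrison computation as the main proof, since it is elementary, fully explicit, and exhibits the exact size of the gap between the two quadratic forms.
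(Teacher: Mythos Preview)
Your proof is correct. The paper does not actually prove this lemma; it simply cites it as Lemma~16 of \cite{kocak2020spectral} and uses it as a black box in the proof of Theorem~\ref{the2}. Your Sherman--Morrison argument (and the alternative Loewner-order antitonicity remark) are exactly the standard ways to establish this fact, so nothing further is needed.
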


\begin{proof}{Proof of Theorem \ref{the2}}
First, we appropriately scale \(\left|\mu\left(\boldsymbol{x}^{\top} \boldsymbol{\theta}^*\right)-\mu\left(\boldsymbol{x}^{\top} \hat{\boldsymbol{\theta}}_t\right)\right|\). According to the Fundamental Theorem of Calculus, we have $\left|\mu\left(\boldsymbol{x}^{\top} \boldsymbol{\theta}^*\right)-\mu\left(\boldsymbol{x}^{\top} \hat{\boldsymbol{\theta}}_t\right)\right| \leq k_\mu \left|\boldsymbol{x}^{\top}\left(\theta^*-\hat{\theta}_t\right)\right|$, i.e. $\mu(\cdot)$ is $k_\mu$-Lipshitz continuous. Applying this theorem again, let $G_t=\int_0^1  g_t^\prime \left(s \boldsymbol{\theta}^*+(1-s) \hat{\boldsymbol{\theta}}_t\right) d s$, where $g_t(\boldsymbol{\theta})=\sum_{i=1}^{T_1} \mu(\boldsymbol{x}_{h, i}{ }^{\top} \boldsymbol{\theta})  \boldsymbol{x}_{h, i}+\sum_{i=1}^{t-1} \mu (\boldsymbol{x}_i^{\top} \boldsymbol{\theta}) \boldsymbol{x}_i+\Lambda \boldsymbol{\theta}+a_\mu \alpha \widetilde{X}^\top L \widetilde{X} \boldsymbol{\theta}$, we obtain $\left|\boldsymbol{x}^{\top}\left(\boldsymbol{\theta}^*-\hat{\boldsymbol{\theta}}_t\right)\right|= \left|\boldsymbol{x}^{\top} G_t^{-1}\left(g_t\left(\boldsymbol{\theta}^*\right)-g_t\left(\hat{\boldsymbol{\theta}}_t\right)\right)\right|$. Due to the Cauchy-Schwarz inequality, we can obtain
\begin{equation}
    \left|\boldsymbol{x}^{\top} G_t^{-1}\left(g_t\left(\boldsymbol{\theta}^*\right)-g_t\left(\hat{\boldsymbol{\theta}}_t\right)\right)\right| \leq \|\boldsymbol{x}\|_{G_t^{-1}} \|G_t^{-1}\left(g_t\left(\boldsymbol{\theta}^*\right)-g_t\left(\hat{\boldsymbol{\theta}}_t\right)\right)\|_{G_t}=\|\boldsymbol{x}\|_{G_t^{-1}} \|g_t\left(\boldsymbol{\theta}^*\right)-g_t\left(\hat{\boldsymbol{\theta}}_t\right)\|_{G_t^{-1}}.
    \label{eq.c.1}
\end{equation}
Combining the above inequalities, we obtain $\left|\mu\left(\boldsymbol{x}^{\top} \boldsymbol{\theta}^*\right)-\mu\left(\boldsymbol{x}^{\top} \hat{\boldsymbol{\theta}}_t\right)\right| \leq k_\mu \|\boldsymbol{x}\|_{G_t^{-1}} \|g_t\left(\boldsymbol{\theta}^*\right)-g_t\left(\hat{\boldsymbol{\theta}}_t\right)\|_{G_t^{-1}}$. 

According to $g_t^\prime(\boldsymbol{\theta})=\sum_{i=1}^{T_1} \mu^{\prime}\left(\boldsymbol{x}_{h, i}^{\top} \boldsymbol{\theta}\right) \boldsymbol{x}_{h, i} \boldsymbol{x}_{h, i}^{\prime}+\sum_{k=1}^{t-1} \mu^{\prime}\left(\boldsymbol{x}_k^{\top} \boldsymbol{\theta}\right) \boldsymbol{x}_k \boldsymbol{x}_k^{\top}+\Lambda+a_\mu \alpha \widetilde{X}^\top L \widetilde{X} \succeq c_\mu V_t\left(c_\mu\right)$, then we have $G_t \succeq c_\mu V_t\left(c_\mu\right)$, that is $G_t^{-1} \preceq \frac{V_t^{-1}\left(c_\mu\right)}{c_\mu}$. In addition, based on the definition of $g_t(\cdot)$, we have $g_t\left(\boldsymbol{\theta}^*\right)-g_t\left(\hat{\boldsymbol{\theta}}_t\right)=-\sum_{k=1}^{T_1} \epsilon_{h, k} \boldsymbol{x}_{h, k}-\sum_{k=1}^{t-1} \epsilon_k \boldsymbol{x}_k-\Lambda \boldsymbol{\theta}^*-a_\mu \alpha \widetilde{X}^\top L \widetilde{X} \boldsymbol{\theta}^*$. Based on the above processing, we have 
\begin{equation}
\begin{aligned}
    &\left|\mu\left(\boldsymbol{x}^{\top} \boldsymbol{\theta}^*\right)-\mu\left(\boldsymbol{x}^{\top} \hat{\boldsymbol{\theta}}_t\right)\right| \leq k_\mu \|\boldsymbol{x}\|_{G_t^{-1}} \|g_t\left(\boldsymbol{\theta}^*\right)-g_t\left(\hat{\boldsymbol{\theta}}_t\right)\|_{G_t^{-1}}\\ 
    % \leq & \frac{k_\mu}{c_\mu}\|x\|_{V_t\left(c_\mu\right)^{-1}}\left\|\sum_{k=1}^{T_1} \epsilon_{s_1, k} x_{s_1, k}+\sum_{k=1}^{t=1} \epsilon_k x_k+\Lambda \theta^*+a_\mu \alpha X^\top L X \theta^*\right\|_{V_t\left(c_\mu\right)^{-1}}\\
    \leq & \frac{k_\mu}{c_\mu}\|\boldsymbol{x}\|_{V_t^{-1}\left(c_\mu\right)} \left[ \left\|\sum_{k=1}^{T_1} \epsilon_{h, k} \boldsymbol{x}_{h, k}+\sum_{k=1}^{t-1} \epsilon_k \boldsymbol{x}_k\right\|_{V_t^{-1}\left(c_\mu\right)}+\left\|\Lambda \boldsymbol{\theta}^*+a_\mu \alpha \widetilde{X}^\top L \widetilde{X} \boldsymbol{\theta}^*\right\|_{V_t^{-1}\left(c_\mu\right)}\right].
\end{aligned}
\label{eq.c.2}
\end{equation}
    
Next, we will separately handle the two terms inside the brackets in the above inequality. Specifically for the term $\left\|\sum_{k=1}^{T_1} \epsilon_{h, k} \boldsymbol{x}_{h, k}+\sum_{k=1}^{t-1} \epsilon_k \boldsymbol{x}_k\right\|_{V_t^{-1}\left(c_\mu\right)}$, directly applying Lemma \ref{lem7} yields
\begin{equation}
 \left\|\sum_{k=1}^{T_1} \epsilon_{h, k} \boldsymbol{x}_{h, k}+\sum_{k=1}^{t-1} \epsilon_k \boldsymbol{x}_k\right\|_{V_t^{-1}\left(c_\mu\right)} \leq \omega \sqrt{\log \left(\frac{\left|V_t\left(c_\mu\right)\right|}{\left|V(c_\mu)\right|\delta^2}\right)}.
 \label{eq.c.3}
\end{equation}

For the term $\left\| \left( \Lambda+a_\mu \alpha \widetilde{X}^\top L\widetilde{X}\right) \boldsymbol{\theta}^*\right\|_{V_t^{-1}\left(c_\mu\right)}$, we have
\begin{equation}
    \begin{aligned}
    &\left\| \left( \Lambda+a_\mu \alpha \widetilde{X}^\top L\widetilde{X}\right)\boldsymbol{\theta}^*\right\|_{V_t^{-1}\left(c_\mu\right)}=\sqrt{\boldsymbol{\theta}^{* \top} 
 \left( \Lambda+a_\mu \alpha \widetilde{X}^\top L\widetilde{X}\right)V_t^{-1}\left(c_\mu\right)\left( \Lambda+a_\mu \alpha \widetilde{X}^\top L\widetilde{X}\right)\boldsymbol{\theta}^*}\\
 \leq & \sqrt{c_\mu \boldsymbol{\theta}^{* \top} 
 \left( \Lambda+a_\mu \alpha \widetilde{X}^\top L\widetilde{X}\right)\boldsymbol{\theta}^*}=\sqrt{c_\mu \left( \boldsymbol{\theta}^{*\top} 
 \Lambda \boldsymbol{\theta}^*+\boldsymbol{\theta}^{*\top}  a_\mu \alpha \widetilde{X}^\top L \widetilde{X} \boldsymbol{\theta}^*\right)} \\
 =& \sqrt{c_\mu \left(\lambda_2 \sum_{i=1}^k \boldsymbol{\theta}_i^{*2}+ \lambda_\perp \sum_{i=k+1}^{d_1 d_2} \boldsymbol{\theta}_i^{*2}+\boldsymbol{\theta}^{* \top} a_\mu \alpha \widetilde{X}^\top L \widetilde{X} \boldsymbol{\theta}^* \right) }\\
 \leq & \sqrt{c_\mu \left( \lambda_2 +\lambda_\perp \tau^{2}+ a_\mu \alpha \sigma_{\max}(\widetilde{X}^\top L \widetilde{X}) \right)} \leq \sqrt{c_\mu} \left[ \sqrt{\lambda_2} +\sqrt{\lambda_\perp} \tau+ \sqrt{a_\mu \alpha \sigma_{\max}(\widetilde{X}^\top L \widetilde{X})} \right]\\
 \leq & \sqrt{c_\mu} \left( \sqrt{\lambda_2} +\sqrt{\lambda_\perp} \tau+ \sqrt{\frac{a_\mu (1-\lambda) \sigma_{\max}(\widetilde{X}^\top L \widetilde{X})}{2 a_\mu \sigma_{\max}^{2}\left(\widetilde{X}\right)\sigma_{\max}(L)} } \right)\\
 \leq & \sqrt{c_\mu} \left( \sqrt{\lambda_2} +\sqrt{\lambda_\perp} \tau+ 1 \right), 
\end{aligned}
\label{eq.c.4}
\end{equation}
where the first inequality holds is based on Lemma \ref{lem8}, the fourth inequality is based on \eqref{alpha}.

Finally, substituting \eqref{eq.c.3} and \eqref{eq.c.4} into \eqref{eq.c.2} yields the conclusion of the Theorem \ref{the2}.
\Halmos
\end{proof}

\section{Proof of the Theorem \ref{the4}}
\renewcommand{\thelemma}{D.\arabic{lemma}}  % 附录中的引理编号为 A.1, A.2, ...

\setcounter{lemma}{0}    % 附录中重新开始引理计数

% Theorem \ref{the4} provides the regret bound for the second stage, and the key to its proof lies in the following two lemmas.

% Corollary \ref{cor2} indicates that by choosing an appropriate $\lambda_\perp$, one can obtain a regret bound in terms of degrees of freedom rather than dimensions. This selection process relies on further scaling of the determinant ratio. However, since the diagonal matrix has transformed into the corresponding positive definite matrix, the original scaling method cannot be directly applied. Therefore, we leverage inequalities among eigenvalues and obtain tools applicable to positive definite matrices (see Lemma \ref{lem.E1}). 

Theorem~\ref{the4} provides the regret bound for the arm selection strategy, and its proof consists of two main steps. First, based on the confidence bound established in Theorem \ref{the2} and using Lemmas \ref{lem11} and \ref{lem12}, we derive a regret that includes the term $\log \left( \frac{ |V_t(c_\mu)| }{ |V(c_\mu)| } \right)$. Second, when further bounding this term, we observe that the matrix $V(c_\mu)$ is no longer diagonal due to the incorporation of graph information, making Lemma 2 in \cite{jun2019bilinear} inapplicable. To address this, we extend the result to the general case of positive definite matrix (as shown in Lemma \ref{lem.E1}), and by carefully choosing the parameter $\lambda_\perp$, we complete the derivation of the final regret bound.

\begin{lemma}[Lemma 3 in \cite{jun2017scalable}]
   \begin{equation*}
       \min\{a, x\} \leq \max\{2, a\} \log(1 + x), \; \forall a, x > 0.
   \end{equation*}
    
\label{lem11}
\end{lemma}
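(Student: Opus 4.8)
The plan is to prove the inequality by an elementary two‑case split on the size of $x$, exploiting the concavity of $t\mapsto\log(1+t)$ together with the numerical slack $\log 3>1$ (natural logarithm throughout). First I would record the single auxiliary estimate that does all the work: on the interval $[0,2]$ the concave function $\log(1+t)$ lies above the secant line joining $(0,0)$ and $(2,\log 3)$, so that $\log(1+x)\ge\frac{\log 3}{2}\,x$ for every $x\in[0,2]$; since $\log 3>\log e=1$, this in particular yields $\log(1+x)>x/2$ on that range, with room to spare. This is the only nontrivial inequality needed, and it follows immediately from the second derivative of $\log(1+t)$ being negative.

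Second, I would split into the cases $0<x\le 2$ and $x>2$. In the first case, using $\max\{2,a\}\ge 2$ and the secant bound,
$$\max\{2,a\}\,\log(1+x)\;\ge\;2\log(1+x)\;\ge\;(\log 3)\,x\;>\;x\;\ge\;\min\{a,x\},$$
which is the claim. In the second case, $\log(1+x)>\log 3>1$, so
$$\max\{2,a\}\,\log(1+x)\;>\;\max\{2,a\}\;\ge\;a\;\ge\;\min\{a,x\},$$
again giving the claim. Combining the two cases completes the proof.

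The argument is essentially obstacle‑free; the only point requiring any care is bookkeeping the constant, namely that the base of the logarithm is $e$ so that $\log 3>1$ — this is exactly what produces the extra factor that gets absorbed into the "$2$" inside $\max\{2,a\}$ and makes the $x\le 2$ case close. If one instead wanted the cleanest packaging, the two cases can be merged into the single envelope inequality $\log(1+x)\ge\min\{x/2,\,1\}$ valid for all $x>0$, after which $\max\{2,a\}\log(1+x)\ge\min\{\frac{\max\{2,a\}}{2}x,\ \max\{2,a\}\}\ge\min\{x,a\}$ because $\max\{2,a\}\ge 2$ and $\max\{2,a\}\ge a$; but the explicit two‑case version is the shortest to verify directly.
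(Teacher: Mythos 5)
Your proof is correct. Note that the paper itself does not prove this statement at all: it is imported verbatim as Lemma~3 of \cite{jun2017scalable} and used as a black box in the regret analysis, so there is no in-paper argument to compare against. Your two-case verification is a valid elementary substitute: the secant bound $\log(1+x)\ge \tfrac{\log 3}{2}\,x$ on $[0,2]$ (from concavity of $t\mapsto\log(1+t)$ through $(0,0)$ and $(2,\log 3)$) together with $\log 3>1$ closes the case $0<x\le 2$ via $\max\{2,a\}\log(1+x)\ge 2\log(1+x)\ge(\log 3)x\ge x\ge\min\{a,x\}$, and for $x>2$ one has $\log(1+x)>\log 3>1$, so $\max\{2,a\}\log(1+x)>\max\{2,a\}\ge a\ge\min\{a,x\}$. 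The merged envelope form $\log(1+x)\ge\min\{x/2,1\}$ is also correct and is arguably the cleanest packaging. The only thing to keep in mind is the one you already flagged: the logarithm must be the natural one for $\log 3>1$, which is consistent with how the lemma is applied in the paper's Theorem~\ref{the4}.
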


\begin{lemma}[Lemma 11 in \cite{abbasi2011improved}]
    Let $\{\boldsymbol{x}_t\}_{t=1}^\infty
 $be a sequence in $\mathbb{R
}^d$, $V$ is the $d \times d$ positive definite matrix and define $V_t = V +\sum_{s=1}^t \boldsymbol{x}_s \boldsymbol{x}_s^\top $. Then, we have that
$$\sum_{t=1}^T \log (1+\|\boldsymbol{x}_t\|_{V_{t-1}^{-1}}^2) = \log \frac{|V_T|}{|V|}.$$
\label{lem12}
\end{lemma}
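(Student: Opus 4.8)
The plan is to prove the identity through a single rank-one determinant update followed by a telescoping sum over $t$. First I would adopt the convention $V_0 = V$, so that the stated definition $V_t = V + \sum_{s=1}^t \boldsymbol{x}_s \boldsymbol{x}_s^\top$ gives the one-step recursion $V_t = V_{t-1} + \boldsymbol{x}_t \boldsymbol{x}_t^\top$ for every $t \geq 1$. Since $V$ is positive definite, each $V_{t-1}$ is positive definite and therefore invertible, which guarantees that all the quantities appearing below are well defined.

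The key step is the matrix determinant lemma (the rank-one special case of Sylvester's determinant identity), which I would apply to the recursion to obtain
$$|V_t| = \left|V_{t-1} + \boldsymbol{x}_t \boldsymbol{x}_t^\top\right| = |V_{t-1}| \left(1 + \boldsymbol{x}_t^\top V_{t-1}^{-1} \boldsymbol{x}_t\right) = |V_{t-1}| \left(1 + \|\boldsymbol{x}_t\|_{V_{t-1}^{-1}}^2\right),$$
where the last equality is simply the definition of the weighted norm $\|\boldsymbol{x}_t\|_{V_{t-1}^{-1}}^2 = \boldsymbol{x}_t^\top V_{t-1}^{-1} \boldsymbol{x}_t$. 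Dividing both sides by $|V_{t-1}|$ and taking logarithms yields, for each individual $t$,
$$\log\!\left(1 + \|\boldsymbol{x}_t\|_{V_{t-1}^{-1}}^2\right) = \log \frac{|V_t|}{|V_{t-1}|} = \log|V_t| - \log|V_{t-1}|.$$

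Finally I would sum this per-step identity over $t = 1, \dots, T$. The right-hand side telescopes, leaving $\log|V_T| - \log|V_0| = \log \frac{|V_T|}{|V|}$ after substituting $V_0 = V$, which is exactly the claimed equality. There is no genuine analytic obstacle in this argument; it is a textbook telescoping computation. The only points requiring care are verifying the hypothesis of the determinant lemma (the positive definiteness, hence invertibility, of each $V_{t-1}$, inherited from that of $V$) and correctly fixing $V_0 = V$ as the base of the telescope, so that the product of successive determinant ratios collapses to the single stated ratio $|V_T|/|V|$.
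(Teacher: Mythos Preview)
Your proof is correct and is exactly the standard argument: apply the matrix determinant lemma to the rank-one update $V_t = V_{t-1} + \boldsymbol{x}_t\boldsymbol{x}_t^\top$ and telescope. The paper does not actually supply its own proof of this lemma; it merely quotes it from \cite{abbasi2011improved}, where the same telescoping-via-determinant-lemma computation is used.
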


\begin{lemma}[The Bound about Matrix Determinant]
For any $t$, let $V_t \triangleq \sum_{s=1}^{t-1} \boldsymbol{x}_s \boldsymbol{x}_s^{\top}+V$, where $V$ is a positive definite matrix. Then,
\begin{equation*}
    \log \frac{\left|V_t\right|}{|V|} \leq \max_{t_i: \sum_{i=1}^{d_1 d_2} t_i=t-1} \sum_{i=1}^{d_1 d_2} \log \left(1+\frac{t_i}{\sigma_i(V)}\right).
\end{equation*}
\label{lem.E1}
\end{lemma}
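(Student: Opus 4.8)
The plan is to prove Lemma~\ref{lem.E1} by reducing the determinant ratio $\log(|V_t|/|V|)$ to a constrained optimization over how the ``cumulative sample mass'' distributes across the eigendirections of $V$. First I would diagonalize $V$, writing $V = Q\,\mathrm{diag}(\sigma_1(V),\dots,\sigma_{d_1d_2}(V))\,Q^\top$, and change basis by setting $\boldsymbol{z}_s = Q^\top \boldsymbol{x}_s$, so that $|V_t|/|V| = \bigl|\,I + D^{-1}\sum_{s=1}^{t-1}\boldsymbol{z}_s\boldsymbol{z}_s^\top\,\bigr|$ where $D = \mathrm{diag}(\sigma_i(V))$. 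This already isolates the eigenvalues of $V$ in the denominator in the form they appear in the claimed bound.

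Next I would invoke Lemma~\ref{lem12} (the standard elliptical-potential telescoping identity from \cite{abbasi2011improved}), which gives $\log(|V_t|/|V|) = \sum_{s=1}^{t-1}\log\bigl(1+\|\boldsymbol{x}_s\|_{V_s^{-1}}^2\bigr)$. The key quantitative fact I would then use is that, by Assumption~\ref{as2}, each feature satisfies $\|\boldsymbol{x}_s\|\le 1$ (equivalently $\|X\|_F\le 1$ carries over to the rotated/vectorized action set), so the ``energy'' injected is bounded. More precisely, I would bound the sum by a worst-case allocation: each step $s$ contributes energy at most $1$, spread across the eigendirections, and the concavity of $x\mapsto\log(1+x)$ together with the AM-style argument (or directly the trace-determinant inequality $\log|M| \le \sum_i \log M_{ii}$ for $M \succeq 0$ applied to $M = I + D^{-1/2}(\sum_s \boldsymbol{z}_s\boldsymbol{z}_s^\top)D^{-1/2}$) yields
\[
\log\frac{|V_t|}{|V|} \;\le\; \sum_{i=1}^{d_1d_2} \log\!\Bigl(1 + \frac{1}{\sigma_i(V)}\sum_{s=1}^{t-1} (\boldsymbol{z}_s)_i^2\Bigr).
\]
Writing $t_i \triangleq \sum_{s=1}^{t-1}(\boldsymbol{z}_s)_i^2$ for the total mass landing in direction $i$, we have $\sum_i t_i = \sum_s \|\boldsymbol{z}_s\|^2 = \sum_s \|\boldsymbol{x}_s\|^2 \le t-1$, and taking the maximum over all nonnegative allocations with $\sum_i t_i = t-1$ (monotonicity of $\log(1+\cdot)$ lets us saturate the constraint) gives exactly the claimed bound.

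The main obstacle I anticipate is the passage from the telescoped sum $\sum_s \log(1+\|\boldsymbol{x}_s\|_{V_s^{-1}}^2)$ to the direction-wise decomposition: the inverses $V_s^{-1}$ change with $s$ and are not simultaneously diagonalized with $V$ unless one is careful, so the cleanest route is probably to bypass Lemma~\ref{lem12} entirely and argue directly from $\log|M|\le\mathrm{tr}\log M$... no, rather from $\log|M| \le \sum_i\log M_{ii}$ (Hadamard-type inequality for positive definite matrices), applied once to $M = D^{-1/2}V_t D^{-1/2}$ after the change of basis. That single application, combined with $\mathrm{tr}(\sum_s \boldsymbol{z}_s\boldsymbol{z}_s^\top) \le t-1$ to control the diagonal entries $M_{ii} = 1 + t_i/\sigma_i(V)$, delivers the result without needing to track the time-varying potential. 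I would present both the Lemma~\ref{lem12} route and this direct Hadamard route, using whichever keeps the constants cleanest; either way the final step is just the elementary observation that the right-hand side is maximized over feasible $(t_i)$ at the constraint boundary.
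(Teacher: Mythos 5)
Your final argument (the Hadamard route) is correct, but it is genuinely different from the paper's proof. The paper proceeds via an extremal alignment argument: it first extends Lemma~\ref{lem9} to Lemma~\ref{lem10}, showing that under $\|\boldsymbol{x}_s\|\le 1$ the determinant $|V_t|$ is maximized when every $\boldsymbol{x}_s$ is aligned with an eigenvector of $V$ (with a perturbation argument to handle repeated eigenvalues), and then evaluates that worst-case configuration explicitly, giving $|V_t|\le \max_{\sum_i t_i=t-1}\prod_i\bigl(\sigma_i(V)+t_i\bigr)$ and dividing by $|V|=\prod_i\sigma_i(V)$. You instead conjugate into the eigenbasis of $V$, apply Hadamard's inequality $\log|M|\le\sum_i\log M_{ii}$ once to $M=D^{-1/2}V_tD^{-1/2}=I+D^{-1/2}\bigl(\sum_s\boldsymbol{z}_s\boldsymbol{z}_s^\top\bigr)D^{-1/2}$, identify $M_{ii}=1+t_i/\sigma_i(V)$ with $t_i=\sum_s(\boldsymbol{z}_s)_i^2$, and use $\sum_i t_i=\sum_s\|\boldsymbol{x}_s\|^2\le t-1$ together with monotonicity of $\log(1+\cdot)$ to saturate the constraint. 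This is shorter, avoids the eigenvalue-multiplicity and perturbation issues entirely, and like the paper's proof it relies on the boundedness $\|\boldsymbol{x}_s\|\le 1$, which is not stated in the lemma but is assumed in the paper's proof as well (and holds for the rotated vectorized actions since orthogonal rotations preserve the Frobenius norm). The paper's route does yield the extra structural fact about where the maximum is attained, but that is not needed for the stated bound. One caveat: your first sketched route through Lemma~\ref{lem12} does have the gap you yourself identified (the matrices $V_s^{-1}$ are time-varying and not simultaneously diagonalized with $V$, so the per-step telescoped terms do not decompose directionwise); since you discard it in favor of the direct Hadamard argument, the proposal stands, but you should drop the hedge about presenting both routes.
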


\begin{proof}{Proof of Theorem \ref{the4}}
According to the Cauchy-Schwarz inequality and the relationship among norms, let $e_T=\omega \sqrt{\log \frac{\left|V_T(c_\mu)\right|}{|V(c_\mu)| \delta^2}}+\sqrt{c_\mu} \left(\sqrt{\lambda_2} +\sqrt{\lambda_{\perp}} \tau+1\right)$, we can obtain $$r_t \triangleq \mu \left(\langle \boldsymbol{x}^*, \boldsymbol{\theta}^* \rangle\right) - \mu \left(\langle \boldsymbol{x}_t, \boldsymbol{\theta}^* \rangle\right)
    \leq \min \left\{ 2k_\mu,2 \frac{k_\mu}{c_\mu} e_t \|\boldsymbol{x}_t\|_{V_{t}^{-1}(c_\mu)} \right\}
     \leq 2 \frac{k_\mu}{c_\mu} e_T \min \left\{ \frac{c_\mu}{ e_T}, \|\boldsymbol{x}_t\|_{V_{t}^{-1}(c_\mu)} \right\}.$$ By applying the Cauchy-Schwarz inequality again, we can obtain
\begin{equation}
    \sum_{t=1}^T r_t\leq \sqrt{T \sum_{t=1}^T r_t^2}\leq 2 \frac{k_\mu}{c_\mu} e_T \sqrt{T \sum_{t=1}^T \min \left\{ \frac{c_\mu^2}{ e_T^2}, \|\boldsymbol{x}_t\|_{V_{t}^{-1}(c_\mu)}^2 \right\}}.
\end{equation}
Based on Lemma \ref{lem11}, we can obtain
\begin{equation}
    \sum_{t=1}^T r_t \leq 2 \frac{k_\mu}{c_\mu} e_T \sqrt{T  \max \left\{ 2,\frac{c_\mu^2}{ e_T^2} \right\}\sum_{t=1}^T \log \left(1+\|\boldsymbol{x}_t\|_{V_{t}^{-1}(c_\mu)}^2\right) }.
\end{equation}
Furthermore, due to Lemma \ref{lem12}, we have 
\begin{equation}
\begin{aligned}
    \sum_{t=1}^T r_t &\leq 2 \frac{k_\mu}{c_\mu} e_T \sqrt{T  \max \left\{ 2,\frac{c_\mu^2}{ e_T^2} \right\} \log \frac{|V_T(c_\mu)|}{|V(c_\mu)+\sum_{i=1}^{T_1} \boldsymbol{x}_{h, i} \boldsymbol{x}_{h, i}^{\top}|}} \\
     &\leq 2 \frac{k_\mu}{c_\mu} e_T \sqrt{T  \max \left\{ 2,\frac{c_\mu^2}{ e_T^2} \right\} \log \frac{|V_T(c_\mu)|}{|V(c_\mu)|}} .
\end{aligned}
\label{eq.d.3}
\end{equation}
Next, we perform the $e_T^2$ as
\begin{equation}
    \begin{aligned}
        e_T^2&=\left(\omega \sqrt{ \log \frac{\left|V_T(c_\mu)\right|}{|V(c_\mu)| \delta^2}}+\sqrt{c_\mu}\left(\sqrt{\lambda_2} +\sqrt{\lambda_{\perp}} \tau+1\right)\right)^2\\
    &\geq c_\mu \left(\sqrt{\lambda_2} +\sqrt{\lambda_{\perp}} \tau+1\right)^2\geq c_\mu \left(\lambda_2 +1\right).
    \end{aligned}
    \label{eq.d.4}
\end{equation}
That is to say, $\frac{c_\mu^2}{ e_T^2} \leq \frac{c_\mu}{\lambda_2+1}\leq \frac{c_\mu}{\lambda_2}$. By substituting \eqref{eq.d.4} into \eqref{eq.d.3} yields
\begin{equation}
R_T\triangleq \sum_{t=1}^T r_t=\mathcal{O}\left(e_T\sqrt{T  \log \frac{\left|V_T(c_\mu)\right|}{|V(c_\mu)|}}  \right).
  \label{eq.d.5}  
\end{equation}
% $R_T\triangleq \sum_{t=1}^T r_t=\mathcal{O}\left(e_T\sqrt{T  \log \frac{\left|V_T(c_\mu)\right|}{|V(c_\mu)|}}  \right)$.

According to the Lemma \ref{lem.E1}, we have $\log\frac{\left|V_T(c_\mu)\right|}{|V(c_\mu)|} \leq \max_{t_i: \sum_{i=1}^{d_1 d_2} t_i=T-1} \sum_{i=1}^{d_1 d_2} \log \left(1+\frac{t_i}{\sigma_i(V(c_\mu))}\right)$.
% \begin{equation}
%     \log\frac{\left|V_T(c_\mu)\right|}{|V(c_\mu)|} \leq \max_{t_i: \sum_{i=1}^{d_1 d_2} t_i=T-1} \sum_{i=1}^{d_1 d_2} \log \left(1+\frac{t_i}{\sigma_i(V(c_\mu))}\right).
% \end{equation}
Considering that $\sigma_i(V(c_\mu))= \sigma_i\left(\frac{\Lambda+a_\mu \alpha \widetilde{X}^\top L\widetilde{X}}{c_\mu}\right)\geq \frac{1}{c_\mu} \left(\sigma_i(\Lambda)+\sigma_{\min}(a_\mu \alpha \widetilde{X}^\top L\widetilde{X})\right)\geq \frac{1}{c_\mu} \sigma_i\left(\Lambda\right)$, we can obtain
\begin{equation}
\begin{aligned}
    \log \frac{\left|V_T(c_\mu)\right|}{|V(c_\mu)|} & \leq \max \sum_{i=1}^{d_1 d_2} \log \left(1+\frac{c_\mu t_i}{\sigma_i\left(\Lambda\right)}\right)  \leq  \max \sum_{i=1}^{k} \log \left(1+\frac{c_\mu t_i}{\lambda_2}\right)+ \max \sum_{i=k+1}^{d_1 d_2} \log \left(1+\frac{c_\mu t_i}{\lambda_{\perp}}\right)\\
    & \leq  k \log \left(1+\frac{c_\mu T}{\lambda_2}\right)+  \sum_{i=k+1}^{d_1 d_2} \frac{c_\mu t_i}{\lambda_{\perp}}\leq k \log \left(1+\frac{c_\mu T}{\lambda_2}\right)+   \frac{c_\mu T}{\lambda_{\perp}}.
\end{aligned}
\end{equation}
If $\lambda_{\perp}=\frac{c_\mu T}{k \log (1+c_\mu T/\lambda_2)}$, then $\log \frac{\left|V_T(c_\mu)\right|}{|V(c_\mu)|} \leq 2k \log \left( 1+\frac{c_\mu T}{\lambda_2}\right)$. Substituting it into \eqref{eq.d.5}, the regret bound is
$\tilde{\mathcal{O}}\left( \omega k \sqrt{T} +  \tau T\right)$.
\Halmos

\end{proof}

\subsection{Proof of the Lemma \ref{lem.E1}}

% \begin{theorem}[The Bound about Matrix Determinant]
% For any $t$, let $V_t \triangleq \sum_{s=1}^{t-1} \boldsymbol{x}_s \boldsymbol{x}_s^{\top}+V$, where $V$ is a positive definite matrix. Then,
% $$
% \log \frac{\left|V_t\right|}{|V|} \leq \max \sum_{i=1}^{d_1 d_2} \log \left(1+\frac{t_i}{\lambda_i}\right),
% $$
% where $\lambda_i$ is the eigenvalue of $V$, the maximum is taken over all possible positive real numbers $\left\{t_1, \ldots, t_{d_1 d_2}\right\}$, such that $\sum_{i=1}^{d_1 d_2} t_i=t-1$.
% \label{the3}
% \end{theorem}

% \renewcommand{\thelemma}{D.\arabic{lemma}}  % 附录中的引理编号为 A.1, A.2, ...

% \setcounter{lemma}{0}    % 附录中重新开始引理计数

Lemma \ref{lem.E1} deals with the bound on the determinant ratio of matrices. Due to the incorporation of graph information, the original diagonal matrix in the ratio transforms into a positive definite matrix, rendering the Lemma 2 in \cite{jun2019bilinear} no longer applicable. Therefore, we extend this approach to Lemma \ref{lem10}, which is primarily based on the fact elucidated in Lemma \ref{lem9}. 

% To elaborate on this, we provide a detailed description as follows.

\begin{lemma}[Lemma 21 in \cite{kocak2020spectral}]
    For any real positive-definite matrix $A$ with only simple eigenvalue multiplicities and any vector $\boldsymbol{x}$ such that $\|\boldsymbol{x}\| \leq 1$, we have that the determinant $\left|A+\boldsymbol{x x}^{\top}\right|$ is maximized by a vector $\boldsymbol{x}$ which is aligned with an eigenvector of $A$.
    \label{lem9}
\end{lemma}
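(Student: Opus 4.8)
The plan is to reduce the determinant maximization to a standard Rayleigh-quotient problem via the matrix determinant lemma. Since $A \succ 0$ is invertible, for any vector $\boldsymbol{x}$ the rank-one update satisfies
$$\left| A + \boldsymbol{x}\boldsymbol{x}^\top \right| = |A|\left(1 + \boldsymbol{x}^\top A^{-1} \boldsymbol{x}\right).$$
Because $|A|$ is a fixed positive constant independent of $\boldsymbol{x}$, maximizing $|A + \boldsymbol{x}\boldsymbol{x}^\top|$ over the ball $\{\|\boldsymbol{x}\| \le 1\}$ is equivalent to maximizing the quadratic form $\boldsymbol{x}^\top A^{-1}\boldsymbol{x}$ over the same set. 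This is the crux of the argument: it converts a determinant objective into a linear-algebraic quadratic form whose optimizers are completely understood.

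Next I would analyze $\max_{\|\boldsymbol{x}\|\le 1} \boldsymbol{x}^\top A^{-1}\boldsymbol{x}$. Since $A \succ 0$, its inverse $A^{-1}$ is symmetric positive definite, so the quadratic form is strictly positive for $\boldsymbol{x} \neq \boldsymbol{0}$ and strictly increasing along any ray from the origin; hence the maximum over the closed unit ball is attained on the boundary $\|\boldsymbol{x}\| = 1$. On the unit sphere, the Rayleigh-quotient characterization gives $\max_{\|\boldsymbol{x}\|=1} \boldsymbol{x}^\top A^{-1}\boldsymbol{x} = \lambda_{\max}(A^{-1})$, with the maximum attained exactly at a unit eigenvector of $A^{-1}$ associated with its largest eigenvalue. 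Because $A$ and $A^{-1}$ share the same eigenvectors (with reciprocal eigenvalues), this maximizer is precisely the eigenvector of $A$ corresponding to the smallest eigenvalue $\lambda_{\min}(A)$, and the optimal value of the determinant is $|A|\left(1 + 1/\lambda_{\min}(A)\right)$. This establishes that the maximizing $\boldsymbol{x}$ is aligned with an eigenvector of $A$, as claimed.

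The simple-eigenvalue hypothesis enters precisely to guarantee uniqueness of the optimal direction: if $\lambda_{\min}(A)$ has multiplicity one, then the top eigenspace of $A^{-1}$ is one-dimensional, so the maximizer is unique up to sign and is an honest eigenvector rather than an arbitrary element of a higher-dimensional eigenspace. I would state this explicitly, since it is exactly the clause on which the lemma's phrasing depends. As a consistency check and an alternative route, one can verify via Lagrange multipliers that every stationary point of $\boldsymbol{x} \mapsto \log|A + \boldsymbol{x}\boldsymbol{x}^\top|$ on the sphere satisfies $(A + \boldsymbol{x}\boldsymbol{x}^\top)^{-1}\boldsymbol{x} = \kappa \boldsymbol{x}$ for some scalar $\kappa$; applying the Sherman--Morrison formula to $(A + \boldsymbol{x}\boldsymbol{x}^\top)^{-1}$ reduces this condition to $A^{-1}\boldsymbol{x} = \kappa' \boldsymbol{x}$, so every critical point is an eigenvector of $A$, recovering the same conclusion.

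I do not anticipate a serious obstacle here. The only points requiring care are justifying that the optimum lies on the boundary of the ball (so that the Rayleigh-quotient argument applies) and cleanly invoking the simple-multiplicity assumption to pin down a single eigenvector direction. The main conceptual step is recognizing the matrix determinant lemma reduction, after which the result follows immediately from the spectral theorem.
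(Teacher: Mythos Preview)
Your argument is correct. The matrix determinant lemma reduction $|A+\boldsymbol{x}\boldsymbol{x}^\top|=|A|(1+\boldsymbol{x}^\top A^{-1}\boldsymbol{x})$ immediately turns the problem into a Rayleigh-quotient maximization, and the rest follows from the spectral theorem exactly as you describe; the role of the simple-multiplicity assumption is also identified correctly.

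Note, however, that the paper does not actually prove this lemma: it merely cites it as Lemma~21 of \cite{kocak2020spectral} and uses it as a black box in the proof of the subsequent Lemma~\ref{lem10}. So there is no in-paper proof to compare against. Your route via the matrix determinant lemma is the natural and standard one, and your Lagrange-multiplier/Sherman--Morrison consistency check is a nice additional confirmation but not needed for the result.
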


\begin{lemma}[The generalization of Lemma 22 in \cite{kocak2020spectral}]
    Let $V$ be any real positive-definite matrix. For any vectors $\left\{\boldsymbol{x}_s\right\}_{1 \leq s<t}$ such that $\left\|\boldsymbol{x}_s\right\| \leq 1$ for all $1 \leq s<t$, we have that the determinant $ \left|V_t\right|$ of $V_t \triangleq V+\sum_{s=1}^{t-1} \boldsymbol{x}_s \boldsymbol{x}_s^{\top}$ is maximized when all $\boldsymbol{x}_s$ are aligned with an eigenvector of $V$.
    \label{lem10}
\end{lemma}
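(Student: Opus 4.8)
The plan is to induct on the number of rank-one terms, $m := t-1$, using the one-vector case (Lemma~\ref{lem9}) as the base and peeling off one vector at a time. Two reductions come first. Since $\{(\boldsymbol{x}_1,\dots,\boldsymbol{x}_m):\|\boldsymbol{x}_s\|\le 1\}$ is compact and $(\boldsymbol{x}_1,\dots,\boldsymbol{x}_m)\mapsto|V+\sum_s\boldsymbol{x}_s\boldsymbol{x}_s^\top|$ is continuous, a maximizer exists; and by the matrix determinant lemma $|A+\boldsymbol{x}\boldsymbol{x}^\top|=|A|(1+\boldsymbol{x}^\top A^{-1}\boldsymbol{x})$ for any $A\succ 0$, which is strictly increasing in $\|\boldsymbol{x}\|$, so at a maximizer every $\|\boldsymbol{x}_s\|=1$. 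It therefore suffices to show the maximum over unit vectors is attained when each $\boldsymbol{x}_s$ is a unit eigenvector of $V$. For $m=1$ this is Lemma~\ref{lem9}; when $V$ has repeated eigenvalues (so Lemma~\ref{lem9} does not literally apply) the same conclusion follows from the identity above, as maximizing $\boldsymbol{x}^\top V^{-1}\boldsymbol{x}$ over the unit ball is a Rayleigh-quotient problem whose optimizers are exactly the unit eigenvectors of $V$ for its smallest eigenvalue.

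For the inductive step, assume the claim for $m-1$ (for every positive-definite matrix) and take a unit-norm maximizer $(\boldsymbol{x}_1^*,\dots,\boldsymbol{x}_m^*)$. Holding $\boldsymbol{x}_m^*$ fixed, $(\boldsymbol{x}_1^*,\dots,\boldsymbol{x}_{m-1}^*)$ maximizes $\bigl|(V+\boldsymbol{x}_m^*\boldsymbol{x}_m^{*\top})+\sum_{s=1}^{m-1}\boldsymbol{x}_s\boldsymbol{x}_s^\top\bigr|$, so by the inductive hypothesis applied to the positive-definite matrix $V+\boldsymbol{x}_m^*\boldsymbol{x}_m^{*\top}$ we may replace $\boldsymbol{x}_1^*,\dots,\boldsymbol{x}_{m-1}^*$ by unit eigenvectors of $V+\boldsymbol{x}_m^*\boldsymbol{x}_m^{*\top}$ without changing the (optimal) determinant; in particular $\sum_{s<m}\boldsymbol{x}_s^*\boldsymbol{x}_s^{*\top}$ then commutes with $V+\boldsymbol{x}_m^*\boldsymbol{x}_m^{*\top}$. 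Re-optimizing the last slot via the one-vector case applied to $B:=V+\sum_{s<m}\boldsymbol{x}_s^*\boldsymbol{x}_s^{*\top}$, we may take $\boldsymbol{x}_m^*$ to be a unit eigenvector of $B$ for its smallest eigenvalue. Alternating these two moves, one checks the process stabilizes: each move leaves the determinant fixed while turning one more vector into a common eigenvector, and once $\boldsymbol{x}_m^*$ is an eigenvector of $V$, the matrix $V+\boldsymbol{x}_m^*\boldsymbol{x}_m^{*\top}$ shares an eigenbasis with $V$, so the eigenvectors of it supplied by the inductive hypothesis are eigenvectors of $V$ as well. Repeated eigenvalues are handled by a standard perturbation argument: approximate $V$ by positive-definite $V_\epsilon$ with simple spectrum, run the argument, and pass to the limit, using that limits of unit eigenvectors of $V_\epsilon$ are eigenvectors of $V$ and that the optimal determinant depends continuously on $V$.

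The main obstacle is exactly this last point: the one-vector case only forces a vector to align with an eigenvector of the \emph{current} base matrix, which differs from $V$ by the remaining rank-one terms, so one must argue that the alignment can be pushed back onto $V$ --- which works because, once all but one vector lie in a common eigenbasis with $V$, the residual optimization is over a matrix simultaneously diagonalizable with $V$. A more transparent route that sidesteps this bookkeeping, and which suffices for the use of this lemma in Lemma~\ref{lem.E1}, is to observe that $W\mapsto\log|V+W|$ is concave, so its maximum over $W=\sum_s\boldsymbol{x}_s\boldsymbol{x}_s^\top$ with $\|\boldsymbol{x}_s\|\le 1$ is bounded by its maximum over the convex relaxation $\{W\succeq 0:\mathrm{tr}\,W\le m\}$, whose optimizer commutes with $V$ (water-filling of the spectrum of $V$); this yields the eigen-aligned bound $|V_t|\le\max\prod_i(\sigma_i(V)+t_i)$ directly.
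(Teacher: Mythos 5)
Your primary inductive argument has a genuine gap at exactly the point you flag yourself. The induction hypothesis lets you replace $\boldsymbol{x}_1^*,\dots,\boldsymbol{x}_{m-1}^*$ by eigenvectors of $V+\boldsymbol{x}_m^*\boldsymbol{x}_m^{*\top}$, and the one-vector case lets you replace $\boldsymbol{x}_m^*$ by an eigenvector of $B=V+\sum_{s<m}\boldsymbol{x}_s^*\boldsymbol{x}_s^{*\top}$, but each of these moves changes the base matrix seen by the other move, and you never prove that the alternation terminates or converges to a configuration aligned with the eigenvectors of $V$ itself: the determinant is constant along the process, so there is no progress measure, and the decisive sentence ``once $\boldsymbol{x}_m^*$ is an eigenvector of $V$\dots'' assumes precisely what has to be shown. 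As written, the induction establishes alignment with eigenvectors of matrices of the form $V+(\text{remaining rank-one terms})$, not of $V$, so the statement of Lemma~\ref{lem10} is not reached. (Your preliminary reductions --- existence of a maximizer by compactness, $\|\boldsymbol{x}_s\|=1$ at the optimum via $|A+\boldsymbol{x}\boldsymbol{x}^\top|=|A|(1+\boldsymbol{x}^\top A^{-1}\boldsymbol{x})$, and the Rayleigh-quotient treatment of the base case with repeated eigenvalues --- are all fine, and the last of these is cleaner than the paper's perturbation device.)

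Your fallback route is correct but proves a different statement: relaxing to $\{W\succeq 0:\operatorname{tr}W\le t-1\}$ and diagonalizing in the eigenbasis of $V$ (water-filling, or simply Hadamard's inequality $\det M\le\prod_i M_{ii}$ for $M\succ0$) yields the upper bound $|V_t|\le\max_{t_i\ge0,\ \sum_i t_i= t-1}\prod_i\bigl(\sigma_i(V)+t_i\bigr)$ with real $t_i$, which is exactly what Lemma~\ref{lem.E1} needs and arguably obtains it more directly than the paper does; but it does not show that the constrained maximum over $\{\boldsymbol{x}_s\}$ is attained at an eigen-aligned configuration, which is what Lemma~\ref{lem10} asserts. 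For comparison, the paper argues coordinate-wise rather than by induction on the number of vectors: it fixes all but one $\boldsymbol{x}_i$, applies Lemma~\ref{lem9} to conclude each $\boldsymbol{x}_i$ is an eigenvector of $V_{-i}$ at any maximizer, deduces alignment with $V_t$ and hence with $V$, and handles repeated eigenvalues by perturbing $V$ and passing to the limit. So: either complete the push-back-onto-$V$ step (e.g.\ via the paper's coordinate-wise argument), or present your relaxation bound as a direct proof of Lemma~\ref{lem.E1}, explicitly replacing Lemma~\ref{lem10} rather than proving it.
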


\begin{proof}{Proof of Lemma \ref{lem10}}
Let $V_t=V_{-i}+\boldsymbol{x}_i \boldsymbol{x}_i^{\top}$, that is $
V_{-i} = V+\sum_{\substack{s=1 \\ s \neq i}}^{t-1} \boldsymbol{x}_s \boldsymbol{x}_s^{\top}
$. Next, we will discuss two cases regarding the maximum value of $\left|V_t\right|$.

\textbf{Case 1: all eigenvalues have a multiplicity of 1.}

In this scenario, Lemma \ref{lem9} can be directly applied, which states that $\left|V_t\right|$ achieves its maximum value (maximizing only with respect to $\boldsymbol{x}_i$) when an eigenvector of $V_{-i}$ align with $\boldsymbol{x}_i$. Therefore, when $\left|V_t\right|$ attains its maximum value over all $\{\boldsymbol{x}_s\}_{s=1}^{t-1}$, each of these $\boldsymbol{x}_s$ aligns with a corresponding eigenvector of $V_{-s}$. Due to $V_t \boldsymbol{x}_i =V_{-i} \boldsymbol{x}_i+\boldsymbol{x}_i  \boldsymbol{x}_i ^{\top} \boldsymbol{x}_i=\sigma_{\min}(V_{-i})\boldsymbol{x}_i+\boldsymbol{x}_i$, these points of maximum values also align with the eigenvectors of $V_t$. Similarly, according to $V=V_t-\sum_{s=1}^{t-1} \boldsymbol{x}_s \boldsymbol{x}_s^{\top}$, we conclude that all $\boldsymbol{x}_s$ are aligned with the eigenvectors of $V$.

\textbf{Case 2: there exist eigenvalues with multiplicity greater than 1.}

In this case, in order to apply Lemma \ref{lem9}, we introduce a random perturbation with an amplitude not exceeding $c$. Subsequently, applying Lemma \ref{lem9} to the perturbed matrix, we obtain that the determinant $ \left|V_t^c\right|$ of $V_t^c \triangleq V^c+\sum_{s=1}^{t-1} \boldsymbol{x}_s^c \boldsymbol{x}_s^{c\top}$ is maximized when all $\boldsymbol{x}_s^c$ are aligned with an eigenvector of $V^c$. Furthermore, due to $\lim_{c \rightarrow 0} \left|V_t^c\right|=\left|V_t\right|$ and $\lim_{c \rightarrow 0} \boldsymbol{x}_s^c=\boldsymbol{x}_s$, therefore when all $\boldsymbol{x}_s$ are aligned with eigenvectors of $V$, we can obtain the maximum values of $|V_t|$.
\Halmos
\end{proof}

\begin{proof}{Proof of Lemma \ref{lem.E1}} 
We aim to bound the determinant ratio $\frac{\left|V_t\right|}{\left|V\right|}$ under the conditions $\{\left\|\boldsymbol{x}_s\right\| \leq 1, s=1,2,\cdots,t-1\}$. To achieve this objective, we first apply Lemma \ref{lem10} to obtain an upper bound on $\left|V_t\right|$. Let $V = UDU^\top$ be the full SVD, $\boldsymbol{e}_i$ is the basis vector of the identity matrix, then
\begin{equation}
  \begin{aligned}
     \left|V_t\right| &\leq \max _{\begin{array}{c} \boldsymbol{x}_1, \cdots, \boldsymbol{x}_{t-1} ; \\ \boldsymbol{x}_s \in\left\{U\boldsymbol{e}_1, \cdots, U\mathbf{e}_{d_1 d_2}\right\} \end{array} }\left|V+\sum_{s=1}^{t-1} \mathbf{x}_s \mathbf{x}_s^{\top}\right| =\max_{\begin{array}{c} \mathbf{x}_1, \cdots, \mathbf{x}_{t-1} ; \\ \mathbf{x}_s \in\left\{U\mathbf{e}_1, \cdots, U\mathbf{e}_{d_1 d_2}\right\} \end{array} }\left|UDU^\top+\sum_{s=1}^{t-1} U\mathbf{e}_s U\mathbf{e}_s^{\top}\right| \\
& =\max_{\begin{array}{c}\sum_{i=1}^{d_1 d_2} t_i=t-1 ;  \\ t_1, \cdots, t_{d_1 d_2} \text { positive integers}  \end{array}} \left|\operatorname{diag}\left(\sigma_i(V)+t_i\right)\right| \leq \max_{\begin{array}{c} \sum_{i=1}^{d_1 d_2} t_i=t-1; \\ t_1, \cdots, t_{d_1 d_2} \text {positive reals} \end{array} } \prod_{i=1}^{d_1 d_2}\left(\sigma_i(V)+t_i\right).
\end{aligned}  
\end{equation}
Furthermore, considering $|V|=\left|UDU^\top\right|=\prod_{i=1}^{d_1 d_2}\sigma_i(V)$, we have completed the proof.
\Halmos
\end{proof}

\section{Proof of the Theorem \ref{the5}}
\renewcommand{\thelemma}{F.\arabic{lemma}}  % 附录中的引理编号为 A.1, A.2, ...
\setcounter{lemma}{0}    % 附录中重新开始引理计数

\begin{proof}{Proof of Theorem \ref{the5}}
We adjust the scale of the instantaneous regret in the $t$-th round according to the different stages of the algorithm. In the first stage, i.e., $t \in \{1,2,\cdots,T_1\}$, according to the Fundamental Theorem of Calculus and bounded norm assumption, we have $\left|\mu\left(\left\langle X^*, \Theta^*\right\rangle\right)-\mu\left(\left\langle X_t, \Theta^* \right\rangle\right)\right| \leq k_\mu \left|\left\langle X^*-X_t, \Theta^*\right\rangle\right|\leq k_\mu \left\|X^*-X_t\right\|_F \left\|\Theta^*\right\|_F\leq 2k_\mu$. In the second stage, we can derive the corresponding regret bound based on Theorem~\ref{the4}. 
We can further obtain the overall regret as follows:
\begin{equation}
  \begin{aligned}
    R_T&=\sum_{t=1 }^T \mu\left(\left\langle X^*, \Theta^*\right\rangle\right)-\mu\left(\left\langle X_t, \Theta^* \right\rangle\right)\\
&=\sum_{t=1}^{T_1} \left[\mu\left(\left\langle\mathcal{X}^*, \Theta^*\right\rangle\right)-\mu\left(\left\langle\mathcal{X}_t, \Theta^* \right\rangle\right)\right] +\sum_{t =1} ^{T-T_1}\left[\mu\left(\left\langle\mathcal{X}^*, \Theta^*\right\rangle\right)-\mu\left(\left\langle\mathcal{X}_{t}, \Theta^* \right\rangle\right)\right]\\
&\leq 2k_\mu T_1+\tilde{\mathcal{O}}\left( \omega k \sqrt{T} +  \tau T\right)\\
&= \tilde{\mathcal{O}} \left(\frac{  \zeta \sqrt{d_1 d_2 \gamma r T} }{c_r} \right).
\end{aligned}  
\end{equation}

\Halmos

\end{proof}

\section{GG-OFUL Algorithm}\label{appendixg}
In order to better elucidate the effectiveness of the algorithm proposed in this paper, we vectorize the matrix and deduce the corresponding UCB algorithm via graph Laplacian regularization term defined in Equation \eqref{eq1}, naming it GG-OFUL. This algorithm can also be regarded as a vectorized version of the algorithm \ref{algo1} presented in main text. In this case, this algorithm that corresponds to the estimator $\hat{\boldsymbol{\theta}}_t$ can be obtained by
\begin{equation}\label{eq44}
   \hat{\boldsymbol{\theta}}_t=\arg \min_{\boldsymbol{\theta} \in \mathbb{R}^{d_1d_2}} \sum_{i=1}^{T_1} \left[b (\langle \boldsymbol{x}_{h, i},\boldsymbol{\theta}\rangle)-y_{h, i} \langle \boldsymbol{x}_{h, i},\boldsymbol{\theta}\rangle \right]+ \sum_{i=1}^{t-1}\left[b (\langle \boldsymbol{x}_i,\boldsymbol{\theta}\rangle)-y_i \langle \boldsymbol{x}_i,\boldsymbol{\theta}\rangle \right]+\frac{1}{2}\|\boldsymbol{\theta}\|^2+\frac{\alpha}{2}\boldsymbol{\theta}^\top \widetilde{X}^\top L\widetilde{X} \boldsymbol{\theta}, 
\end{equation}
% =\left(I+ \alpha \widetilde{X}^\top L\widetilde{X}+\widetilde{X}^{\top}\widetilde{X}\right)^{-1} \widetilde{X}^{\top}y
% \begin{align*}
%     \hat{\boldsymbol{\theta}}_t&=\arg \min \frac{1}{2}\|y-\widetilde{X}\theta\|^2+\frac{1}{2}\|\boldsymbol{\theta}\|^2+\frac{\alpha}{2}\boldsymbol{\theta}^\top X^\top LX \boldsymbol{\theta}\\
%     &=\left(I+ \alpha X^\top LX+\widetilde{X}^{\top}\widetilde{X}\right)^{-1} \widetilde{X}^{\top}y,
% \end{align*}
where $\boldsymbol{x}_i \in \mathcal{X}_{\text{vec }}\triangleq \left\{\text{vec}(X_i): X_i\in \mathcal{X}\right\}$. The $i$-th row of the matrix $\widetilde{X}$ corresponds to the $i$-th action, i.e., $\boldsymbol{x}_i$. 

% Therefore, we can observe that the difference lies in the substitution of the identity matrix with a diagonal matrix compared to GG-ESTT.

Therefore, in the second stage, the main difference from GG-ESTT is that the identity matrix $I$ is replaced with a diagonal matrix $\Lambda$. As a result, the algorithm for GG-OFUL is as follows:
\begin{algorithm}[h]
\caption{GG-OFUL} 
\hspace*{0.02in} {\bf Input:} 
$\mathcal{X}_{\text {vec }}, T_1, T, L, \alpha, \delta$.
%\hspace*{0.02in} {\bf Output:} %算法的结果输出
%output result
\begin{algorithmic}[1]
\State Let $\bar{V}(c_\mu)=\frac{I+a_\mu \alpha \widetilde{X}^\top L \widetilde{X}}{c_\mu}$;  $\bar{V}_1(c_\mu)=\bar{V}(c_\mu)+\sum_{i=1}^{T_1} \boldsymbol{x}_{h, i} \boldsymbol{x}_{h, i}^{\top}$, where the data $\left\{\boldsymbol{x}_{h, i}\right\}_{i=1}^{T_1}$ collected
in the first stage.% \State 后写一般语句
\For{$t=1$ to $T_1$} % For 语句，需要和EndFor对应
% \State Observe $n$ contexts, $X_{ 1}, X_{ 2}, \cdots, X_{ n}$, 
\State Choose action $\boldsymbol{x}_t \in \mathcal{X}_{\text {vec }}$ according to distribution $\mathbb{D}$, and receive reward $y_t$.
%\If{condition} % If 语句，需要和EndIf对应
%\State ...
%\Else
%\State ...
%\EndIf
\EndFor
\For{$t=1$ to $T_2=T-T_1$} % For 语句，需要和EndFor对应
\State Compute $\hat{\boldsymbol{\theta}}_t$ according to Equation \eqref{eq44}.
\State Choose action $\boldsymbol{x}_t=\operatorname{argmax}_{\boldsymbol{x} \in \mathcal{X}_{\text {vec }}}  \left\{\mu \left(\langle \hat{\boldsymbol{\theta}}_t,\boldsymbol{x} \rangle \right)+\frac{k_\mu}{c_\mu} e_t \|\boldsymbol{x}\|_{\bar{V}_t^{-1}(c_\mu)} \right\}$ and receive reward $y_{t}$, where $e_t=\omega \sqrt{\log \frac{\left|\bar{V}_t(c_\mu)\right|}{|\bar{V}(c_\mu)| \delta^2}}+\sqrt{c_\mu}\left(1+\sqrt{\alpha \sigma_{\max}(\widetilde{X}^\top L \widetilde{X})}\right)$. The $i$-th row of the matrix $\widetilde{X}$ corresponds to the $i$-th action, i.e., $\boldsymbol{x}_i$.
% \State Pull arm $i_t$ and receive $y_{t}=\langle\theta^*, x_{t,i}\rangle$,
\State Update $\bar{V}_{t+1}(c_\mu)=\bar{V}_t(c_\mu)+\boldsymbol{x}_t \boldsymbol{x}_t^{\top}$.
% $\mathcal{C}_t=\left\{\theta:\|\theta-\hat{\theta}_t\|_{V_t} \leq e_t\right\}$, where $e_t=R \sqrt{\log \frac{\left|V_t\right|}{|V| \delta^2}}+1+\sqrt{\alpha \lambda_{\max}(X^\top L X)}$, $V_t=V+\sum_{s=1}^t \boldsymbol{x}_s \boldsymbol{x}_s^{\top}$,$\hat{\theta}_t=V_t^{-1} \sum_{s=1}^t \boldsymbol{x}_s \boldsymbol{y}_s$.
\EndFor
%\State \Return result
\end{algorithmic}
\label{algo3}
\end{algorithm}

% \State Compute $\hat{\boldsymbol{\theta}}_t$ according to \eqref{eq7},
% \State Choose arm $\boldsymbol{x}_t=\operatorname{argmax}_{\boldsymbol{x} \in \mathcal{X}_{\text {vec }}^{\prime}} \left\{\mu \left(\langle \hat{\boldsymbol{\theta}},\boldsymbol{x} \rangle \right)+\frac{k_\mu}{c_\mu} e_t \|\boldsymbol{x}\|_{V_t^{-1}(c_\mu)} \right\}$ and receive reward $y_{t}$, where $e_t=\omega \sqrt{\log \frac{\left|V_t(c_\mu)\right|}{|V(c_\mu)| \delta^2}}+\sqrt{c_\mu}[\sqrt{\lambda_2} +\sqrt{\lambda_{\perp}} \tau+\sqrt{\alpha \sigma_{\max}(\widetilde{X}^\top L \widetilde{X})}]$. The $i$-th row of the matrix $\widetilde{X}$ corresponds to the vectorization of the $i$-th arm. 
% \State Update  
% $V_{t+1}(c_\mu)=V_t(c_\mu)+\boldsymbol{x}_t \boldsymbol{x}_t^{\top}$.

\end{APPENDICES}

% % Acknowledgments here
% \ACKNOWLEDGMENT{We would like to express our sincere gratitude to [acknowledge individuals, organizations, or institutions] for their invaluable contributions to this research. We are also grateful to [mention any additional acknowledgements, such as technical assistance, data providers, or colleagues] for their support and assistance throughout the course of this work.}

%%%%%%%%%%%%%%%%%
\end{document}